\newcommand{\leqnomode}{\tagsleft@true}
\newcommand{\reqnomode}{\tagsleft@false}
\newtheorem*{rep@theorem}{\rep@title}
\newcommand{\newreptheorem}[2]{%
\newenvironment{rep#1}[1]{%
 \def\rep@title{#2 \ref{##1}}%
 \begin{rep@theorem}}%
 {\end{rep@theorem}}}
\theoremstyle{plain}
\newtheorem{proposition}{Proposition}
\theoremstyle{definition}
\newtheorem{definition}{Definition}
\newtheorem{remark}{Remark}
\newtheorem{corollary}{Corollary}
\newtheorem{lemma}{Lemma}
\icmltitlerunning{Safe Grid Search with Optimal Complexity}
\begin{document}

\twocolumn[
\icmltitle{Safe Grid Search with Optimal Complexity}



\icmlsetsymbol{equal}{*}


\begin{icmlauthorlist}
\icmlauthor{Eugene Ndiaye}{riken}
\icmlauthor{Tam Le}{riken}
\icmlauthor{Olivier Fercoq}{telecom}
\icmlauthor{Joseph Salmon}{montp}
\icmlauthor{Ichiro Takeuchi}{nitech}
\end{icmlauthorlist}

\icmlaffiliation{riken}{Riken AIP}
\icmlaffiliation{telecom}{LTCI, T\'el\'ecom ParisTech, Universit\'e Paris-Saclay}
\icmlaffiliation{montp}{IMAG, Univ Montpellier, CNRS, Montpellier, France}
\icmlaffiliation{nitech}{Nagoya Institute of Technology}

\icmlcorrespondingauthor{E. Ndiaye}{eugene.ndiaye@riken.jp}

\icmlkeywords{Machine Learning, ICML}

\vskip 0.3in
]



\printAffiliationsAndNotice{}  

\begin{abstract}
Popular machine learning estimators involve regularization parameters that can be challenging to tune, and standard strategies rely on grid search for this task.
In this paper, we revisit the techniques of approximating the regularization  path up to predefined tolerance $\epsilon$ in a unified framework and show that its complexity is $O(1/\sqrt[d]{\epsilon})$ for uniformly convex loss of order $d \geq 2$ and $O(1/\sqrt{\epsilon})$ for Generalized Self-Concordant functions.
This framework encompasses least-squares but also logistic regression, a case that as far as we know was not handled as precisely in previous works.
We leverage our technique to provide refined bounds on the validation error as well as a practical algorithm for hyperparameter tuning.
The latter has global convergence guarantee when targeting a prescribed accuracy on the validation set.
Last but not least, our approach helps relieving the practitioner from the (often neglected) task of selecting a stopping criterion when optimizing over the training set: our method automatically calibrates this criterion based on the targeted accuracy on the validation set.
\end{abstract}


\section{Introduction}
\label{sec:introduction}
Various machine learning problems are formulated as minimization of an empirical loss function $f$ plus a regularization function $\Omega$ whose calibration is controlled by a hyperparameter $\lambda$.
The choice of $\lambda$ is crucial in practice since it directly influences the generalization performance of the estimator, \ie its score on unseen data sets.
The most popular method in such a context is cross-validation (or some variant, see \cite{Arlot_Celisse10} for a detailed review).
For simplicity, we investigate here the simplest case, the holdout version. It consists in splitting the data in two parts: on the first part (\emph{training set}) the method is trained for a pre-defined collection of candidates $\Lambda_{T}:=\{\lambda_{0},\dots, \lambda_{T-1}\}$, and on the second part (\emph{validation set}), the best parameter is selected among the $T$ candidates.

For a piecewise quadratic loss $f$ and a piecewise linear regularization $\Omega$ (\eg the Lasso estimator), \citet{Osborne_Presnnell_Turlach00, Rosset_Zhu07} have shown that the set of solutions follows a piecewise linear curve \wrt to the parameter $\lambda$.
There are several algorithms that can generate the full path by maintaining optimality conditions when the regularization parameter varies.
This is what \texttt{LARS} is performing for Lasso \citep{Efron_Hastie_Johnstone_Tibshirani04}, but similar approaches exist for SVM \citep{Hastie_Rosset_Tibshirani_Zhu04} or generalized linear models (GLM) \citep{Park_Hastie07}.
Unfortunately, these methods have some drawbacks that can be critical in many situations:

$~\bullet$ their worst case complexity, \ie the number of linear segments, is exponential in the dimension $p$ of the problem \citep{Gartner_Jaggi_Maria12} leading to unpractical algorithms. Recently, \citet{Li_Singer18} have shown that for some specific design matrix with $n$ observations, a polynomial complexity of $O(n \times p^6)$ can be obtained. Note that even in a more favorable cases of linear complexity in $p$, the exact path can be expensive to compute when the dimension $p$ is large.

$~\bullet$ they suffer from numerical instabilities due to multiple and expensive inversion of ill-conditioned matrix. As a result, these algorithms may fail before exploring the entire path, a common issue for small regularization parameter.

$~\bullet$ they lack flexibility when it comes at incorporating different statistical learning tasks because they usually rely on specific algebra to handle the structure of the regularization and loss functions.
As far as we know, they can be applied only to a limited number of cases and we are not aware of a general framework that bypasses these issues.

$~\bullet$ they cannot benefit of early stopping. Following \citet{Bottou_Bousquet08}, it is not necessary to optimize below the statistical error for suitable generalization.
Exact regularization path algorithms need to maintain optimality conditions as the hyperparameter varies, which is time consuming.
%

To overcome these issues, an $\epsilon$-approximation of the solution path was proposed and optimal complexity was proven to be $O(1/\epsilon)$ by \cite{Giesen_Jaggi_Laue10} in a fairly general setting. Then, \citet{Mairal_Yu12} provided an interesting algorithm whose complexity is $O(1/\sqrt{\epsilon})$ for the Lasso case. The latter result was then extended by \cite{Giesen_Laue_Mueller_Swiercy12} to objective function with quadratic lower bound while providing a lower and upper bound of order $O(1/\sqrt{\epsilon})$. Unfortunately, these assumptions fail to hold for many problems, including logistic regression or Huber loss.

Following such ideas, \cite{Shibagaki_Suzuki_Karasuyama_Takeuchi15} have proposed, for classification problems, to approximate the regularization path on the hold-out cross-validation error.
Indeed, the latter is a more natural criterion to monitor when one aims at selecting a hyperparameter guaranteed to achieve the best validation error. The main idea is to construct upper and lower bounds of the validation error as simple functions of the regularization parameter.
Hence by sequentially varying the parameters, one can estimate a range of parameter for which the validation error gap (\ie the difference with the validation error achieved by the best parameter) is smaller than an accuracy $\epsilon_v>0$.

\paragraph{Contributions.} We revisit the approximation of the solution and validation path in a unified framework under general regularity assumptions commonly met in machine learning.
We encompass both classification and regression problems and provide a complexity analysis along with explicit optimality guarantees.
We highlight the relationship between the regularity of the loss function and the complexity of the approximation path.
We prove that its complexity is $O(1/\sqrt[d]{\epsilon})$ for uniformly convex loss of order $d \geq 2$ (see \citet[Definition 10.5]{Bauschke_Combettes11}) and $O(1/\sqrt{\epsilon})$ for the logistic loss thanks to a refined measure of its curvature throughout its Generalized Self-Concordant properties \citep{Sun_Tran-Dinh17}.
As far as we know, the previously known approximation path algorithms cannot handle these cases.
We provide an algorithm with global convergence property for selecting a hyperparameter with a validation error $\epsilon_v$-close to the optimal hyperparameter from a given grid.
We bring a natural stopping criterion when optimizing over the training set making this criterion automatically calibrated.

Our implementation is available at \url{https://github.com/EugeneNdiaye/safe_grid_search}.

\begin{table}[t]
\centering

{\small
\begin{tabular}{|c|c|c|}
\hline
 & Lasso & Logistic regr. \\
\hline
$f_i(z)$& ${(y_i-z)^2}/{2}$ & $ \log(1+\mathrm{e}^z) -y_i z$ \\
\hline
$f_i^*(u)$& $({(u-y_i)^2-y_i^2})/{2}$ & $\Nh(u+y_i)$  \\
\hline
$\mathcal{V}_{f^*, x}(u)$ & $\normin{u}_{2}^{2}/2$ & $w_{4}({\normin{u}_{x}^{2}}/{\normin{u}_{2}}) \normin{u}_{u}^{2}$ \\
\hline
\end{tabular}}
\caption{
$w_{4}(\tau) =\frac{(1-\tau)\log(1-\tau) + \tau}{\tau^2}$}
and
$\mathrm{Nh}(x) = x\log(x) + (1-x)\log(1-x)$
\label{tab:summary}
\end{table}

\paragraph{Notation.}

Given a proper, closed and convex function $f: \bbR^n \to \bbR \cup \{+\infty\}$, we denote $\dom f = \{x \in \bbR^n: f(x) < +\infty\}$. If $f$ is a twice continuously differentiable function with positive definite Hessian $\nabla^2 f(x)$ at any $x\in\dom f$, we denote $\norm{z}_x = \sqrt{\langle \nabla^2 f(x)z, z\rangle}$. The Fenchel-Legendre transform of $f$ is the function $f^*:\bbR^n \to \bbR \cup \{+\infty\}$ defined by $f^*(x^*) = \sup_{x \in \dom f} \langle x^* , x \rangle - f(x)$.
The support function of a nonempty set $C$ is defined as $\sigma_C (x) = \sup_{c \in C} \langle c,x \rangle$. If $C$ is closed, convex and contains $0$, we define its polar as $\sigma_{C}^{\circ}(x^*) = \sup_{\sigma_C (x) \leq 1} \langle x^*, x \rangle$. We denote by $[T]$ the set $\{1, \ldots, T\}$ for any non zero integer $T$. The vector of observations is $y \in \bbR^n$ and the design matrix $X= [x_1,\dots,x_n]^\top \in \bbR^{n\times p}$ has $n$ observations row-wise, and $p$ features (column-wise).


\section{Problem setup}
\label{sec:setup}

Let us consider the class of regularized learning methods expressed as convex optimization problems, such as (regularized) GLM \citep{McCullagh_Nelder89}:
\begin{equation}
\tbeta{\lambda} \in \argmin_{\beta \in \bbR^p} \underbrace{f(X\beta) + \lambda \Omega(\beta)}_{P_{\lambda}(\beta)} \quad \mathrm{(Primal)}. \label{eq:primal_problem}
\end{equation}
We highlight two important cases: the regularized least-squares and logistic regression where the loss functions are written as an empirical risk $f(X\beta)=\sum_{i\in[n]} f_i (x_{i}^{\top}\beta)$ with the $f_i$'s given in \Cref{tab:summary}. The penalty term is often used to incorporate prior knowledges by enforcing a certain regularity on the solutions.
For instance, choosing a Ridge penalty \citep{Hoerl_Kennard70} $\Omega(\cdot) = \tfrac{1}{2} \norm{\cdot}_{2}^{2}$ improves the stability of the resolution of inverse problems while $\Omega(\cdot) = \norm{\cdot}_1$ imposes sparsity at the feature level, a motivation that led to the Lasso estimator \citep{Tibshirani96}; see also \cite{Bach_Jenatton_Mairal_Obozinski12} for extensions to other structured penalties.

\begin{figure}[!t]
\includegraphics[width=\columnwidth]{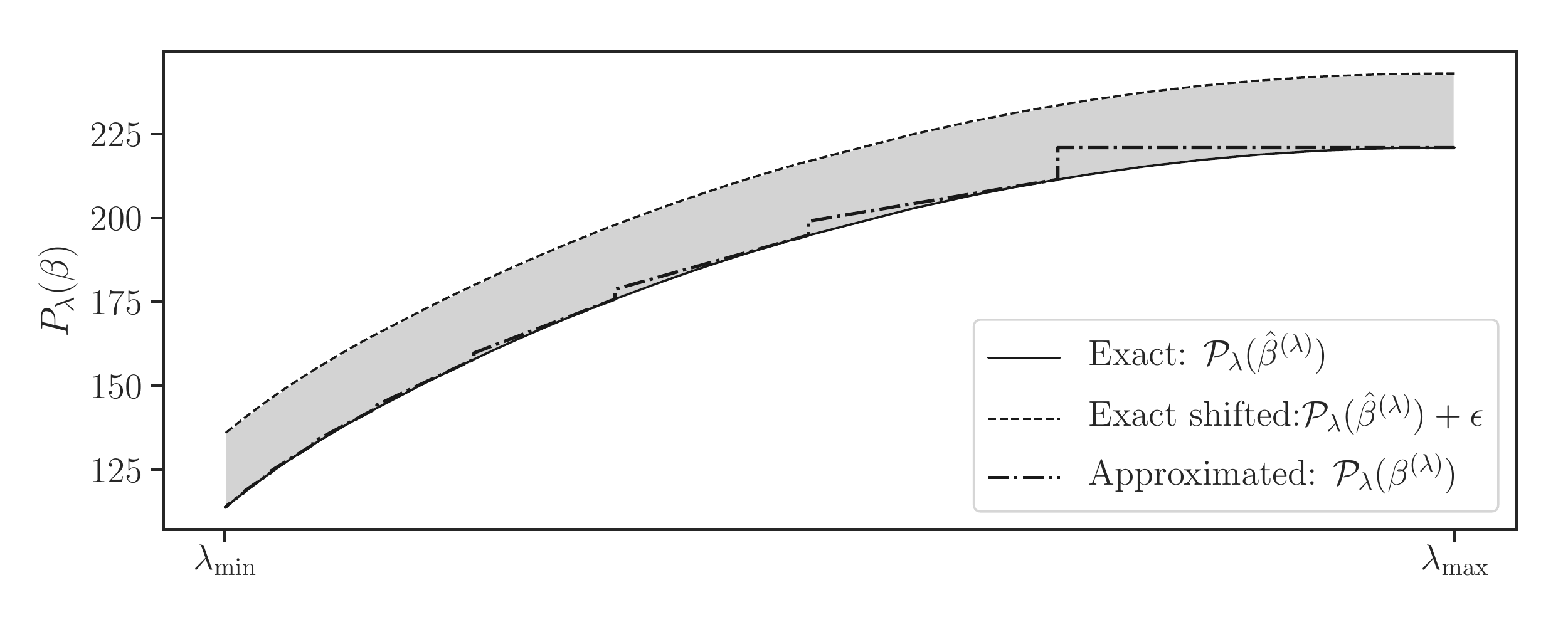}
\vspace{-0.7cm}
\caption{Illustration of the approximation path for the Lasso at accuracy $\epsilon=\norm{y}_{2}^{2}/20$. We choose $\lambda_{\max} = \normin{X^\top y}_{\infty}$ and $\lambda_{\min}=\lambda_{\max}/50$.
The shaded gray region shows the interval where any $\epsilon$-path must lie.
The exact path is computed with the \texttt{LassoLars} on \texttt{diabetes} data from \texttt{sklearn}.\label{fig:approx_lars_diabetes}}
\end{figure}

In practice, obtaining $\tbeta{\lambda}$, an exact solution to Problem~\eqref{eq:primal_problem} is unpractical and one aims achieving a prescribed precision $\epsilon > 0$.
More precisely, a (primal) vector $\beta^{(\lambda)}:=\beta^{(\lambda,\epsilon)}$ (we will drop the dependency in $\epsilon$ for readability) is referred to as an $\epsilon$-solution for $\lambda$ if its (primal) objective value is optimal at precision $\epsilon$:
\begin{equation}\label{eq:epsilon_solution}
P_{\lambda}(\beta^{(\lambda)}) - P_{\lambda}(\tbeta{\lambda}) \leq \epsilon \enspace.
\end{equation}
We recall and illustrate the notion of approximation path in \Cref{fig:approx_lars_diabetes} as described by \citet{Giesen_Laue_Mueller_Swiercy12}.

\begin{definition}[$\epsilon$-path] \label{def:eps_path}
A set $\mathcal{P_{\epsilon}} \subset \bbR^p$ is called an $\epsilon$-path for a parameter range $[\lambdamin, \lambdamax]$ if
\begin{equation}
\forall \lambda \in [\lambdamin, \lambdamax], \exists \text{ an } \epsilon\text{-solution } \beta^{(\lambda)} \in \mathcal{P_{\epsilon}} \enspace.
\end{equation}
We call \emph{path complexity} $T_{\epsilon}$
the cardinality of the $\epsilon$-path.
\end{definition}

To achieve the targeted $\epsilon$-precision in~\eqref{eq:epsilon_solution} over a whole path and construct an $\epsilon$-path \footnote{note that such a path depends on exact solutions $\tbeta{\lambda}$'s}, we rely on \emph{duality gap} evaluations.
For that, we compute $\epsilon_{c}$-solutions\footnote{the $c$ stands for computational in $\epsilon_{c}$} (for an accuracy $\epsilon_{c}<\epsilon$) over a finite grid, and then we control the gap variations w.r.t. $\lambda$ to achieve the prescribed $\epsilon$-precision over the whole range $[\lambdamin, \lambdamax]$; see \Cref{alg:adaptive_training_path}.
We now recall the Fenchel duality \citep[Chapter 31]{Rockafellar97}:
\begin{align}
\ttheta{\lambda} &\in \argmax_{\theta \in \bbR^n} \underbrace{-f^*(-\lambda \theta) - \lambda \Omega^{*}(X^\top\theta)}_{D_{\lambda}(\theta)} \quad \mathrm{(Dual)}. \label{eq:dual_problem}
\end{align}
For a primal/dual pair $(\beta, \theta) \in \dom P_{\lambda} \times \dom D_{\lambda} $, the duality gap is the difference between primal and dual objectives:
\begin{align*}
\Gap_{\lambda}(\beta, \theta) 
&= f(X\beta) + f^*(-\lambda \theta) + \lambda (\Omega(\beta) + \Omega^{*}(X^\top \theta)) \enspace.
\end{align*}
Weak duality yields $D_{\lambda}(\theta) \leq P_{\lambda}(\beta)$ and
\begin{equation}\label{eq:dual_gap_certificate}
P_{\lambda}(\beta) - P_{\lambda}(\tbeta{\lambda}) \leq \Gap_{\lambda}(\beta, \theta)\enspace,
\end{equation}
explaining the interest of the duality gap as an optimality certificate.
Using \eqref{eq:dual_gap_certificate}, we can safely construct an approximation path for Problem~\eqref{eq:primal_problem} : if $\beta^{(\lambda)}$ is an $\epsilon$-solution for $\lambda$, it is guaranteed to remain one for all parameters $\lambda'$ such that $\Gap_{\lambda'}(\beta^{(\lambda)}, \theta^{(\lambda)}) \leq \epsilon$.
Since the function $\lambda' \mapsto \Gap_{\lambda'}(\beta^{(\lambda)}, \theta^{(\lambda)})$ does not exhibit a simple dependence in $\lambda$, we rely on an upper bound on the gap encoding the structural regularity of the loss function (\eg 1-dimensional quadratics for strongly convex functions).
This bound controls the optimization error as $\lambda$ varies while preserving optimal complexity on the approximation path.

\begin{figure*}[!ht]
\centering
\subfigure[Uniform unilateral approximation path (as in \Cref{prop:unilateral})]{\includegraphics[width=\columnwidth]{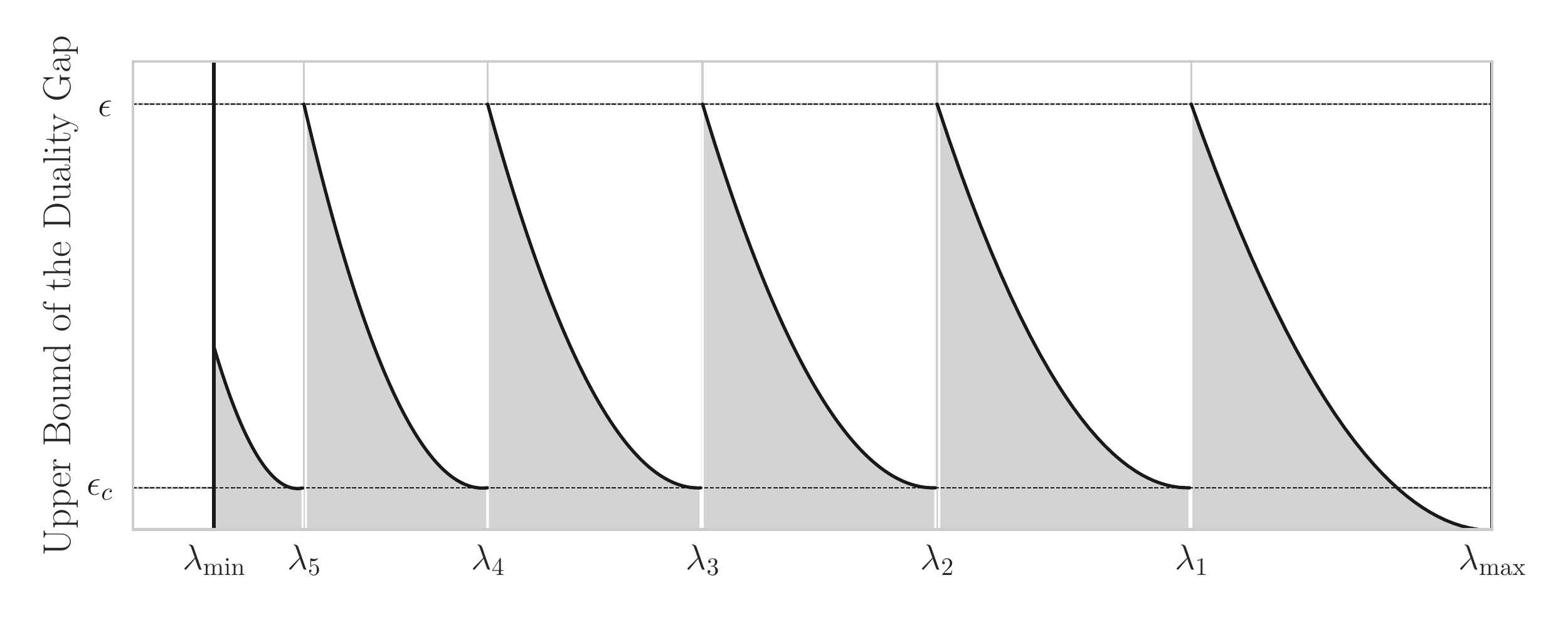}}
\label{subfig:approximation_gap_lasso_unit}
\hspace{0.25cm}
\subfigure[Bilateral approximation path (as in \Cref{prop:bilateral})]{\includegraphics[width=\columnwidth]{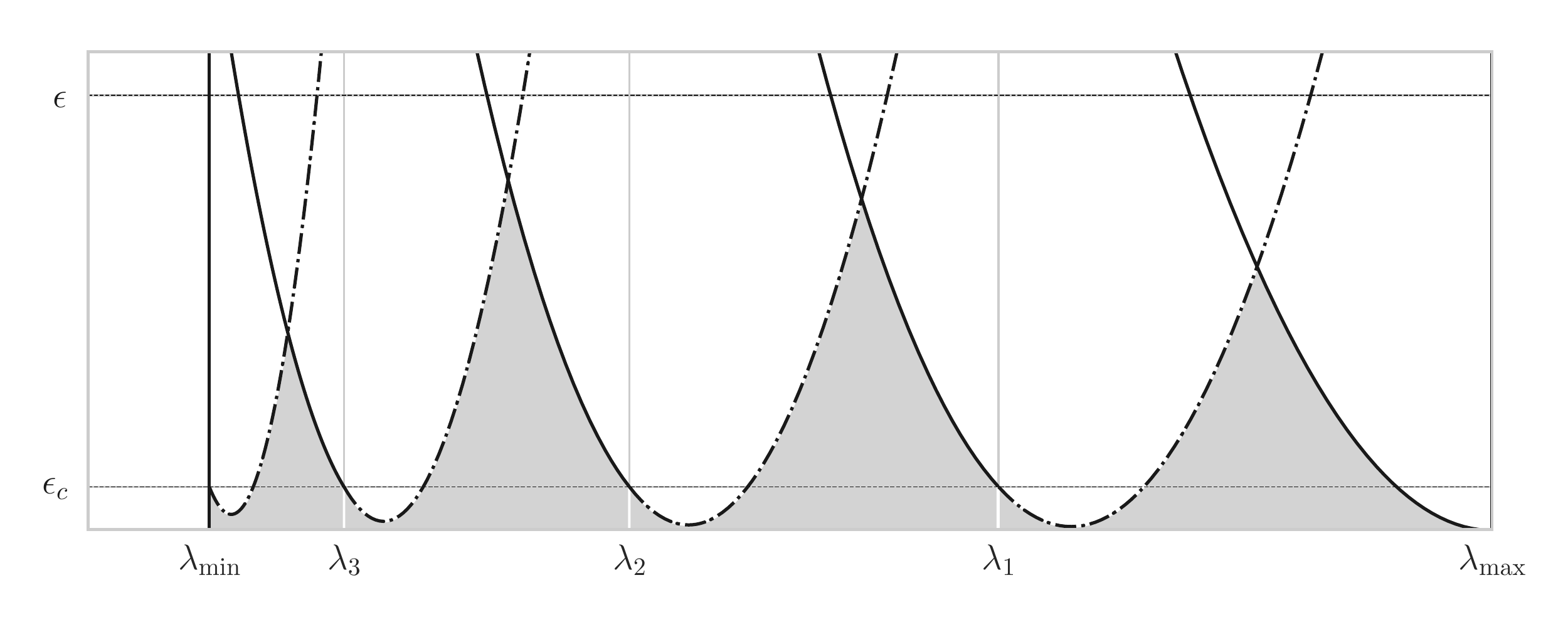}}
\label{subfig:approximation_gap_lasso_bilat}
\caption{Illustration of the construction of $\epsilon$-paths for the Lasso on synthetic dataset generated with $\texttt{sklearn}$ as $X, y =$ \texttt{make\_regression}$(n=30, p=150)$ at accuracy $\epsilon=\norm{y}_{2}^{2}/40$ and $\epsilon_{c}= \epsilon/10$. We choose $\lambda_{\max} = \normin{X^\top y}_{\infty}$ and $\lambda_{\min}=\lambda_{\max}/20$.
For Lasso the bounds are piece-wise quadratic.
The shaded gray regions correspond to the regions where the true value of the duality gap lies. We obtain a path complexity of $T_{\epsilon}=6$ (resp. $T_{\epsilon}=4$) for the unilateral (resp. bilateral) path over $[\lambda_{\min},\lambda_{\max}]$.}
\label{fig:approximation_gap_lasso}
\end{figure*}

\section{Bounds and approximation path}
\label{sec:Approximation_Path}

We introduce the tools to design an approximation path.
\subsection{Preliminary results and technical tools}
\label{sub:preliminary_results_and_technical_tools}

\begin{definition}\label{def:general_convexity_smoothness}
Given a differentiable function $f$ and $x \in \dom f$, let $\mathcal{U}_{f,x}(\cdot)$ and $\mathcal{V}_{f,x}(\cdot)$ be non negative functions that vanish at 0. We say that $f$ is $\mathcal{U}_{f,x}$-convex (resp. $\mathcal{V}_{f,x}$-smooth) at $x$ when Inequality~\eqref{eq:mu_convexity} (resp. \eqref{eq:nu_smoothness}) is satisfied for any $z \in \dom f$
\begin{align}
\label{eq:mu_convexity} \mathcal{U}_{f,x}(z-x) &\leq f(z) - f(x) - \langle \nabla f(x), z - x\rangle \enspace,\\
\label{eq:nu_smoothness} \mathcal{V}_{f,x}(z-x) &\geq f(z) - f(x) - \langle \nabla f(x), z - x\rangle \enspace.
\end{align}
\end{definition}

This extends $\mu$-strong convexity and $\nu$-smoothness \citep{Nesterov04} and encompasses smooth uniformly convex losses
and generalized self-concordant ones.
\paragraph{Smooth uniformly convex case:} In this case, we have
\begin{align*}
\mathcal{U}_{f,x}(z-x) = \mathcal{U}(\normin{z-x}),   \\
\mathcal{V}_{f,x}(z-x) = \mathcal{V}(\normin{z-x}),
\end{align*}
where $\mathcal{U}(\cdot)$ and $\mathcal{V}(\cdot)$ are increasing from $[0, +\infty)$ to $[0, +\infty]$ vanishing at $0$; see \citet{Aze_Penot_95}.
Examples of such functions are $\mathcal{U}(t) = \frac{\mu}{d} t^d$ and $\mathcal{V}(t) = \frac{\nu}{d} t^d$ where $d$, $\mu$ and $\nu$ are positive constants.
The case $d=2$ corresponds to strong convexity and smoothness; in general they are called \emph{uniformly convex of order $d$}, see \citep{Juditski_Nesterov14} or \citep[Ch. 10.2 and 18.5]{Bauschke_Combettes11} for details.

\paragraph{Generalized self-concordant case:}
a $\mathcal{C}^3$ convex function $f$ is $(M_{f}, \nu)$-generalized self-concordant of order $\nu \geq 2$ and $M_f\geq 0$ if  $\forall x \in \dom f$ and $\forall u, v \in \bbR^n$:
$$\left| \langle \nabla^3 f(x)[v]u,u \rangle\right| \leq M_f \norm{u}_{x}^{2}\norm{v}_{x}^{\nu-2}\norm{v}_{2}^{3-\nu}.$$

In this case, \citet[Proposition 10]{Sun_Tran-Dinh17} have shown that one could write:
\begin{align*}
\mathcal{U}_{f,x}(y-x)& = w_{\nu}(-d_{\nu}(x, y))\norm{y-x}_{x}^{2}\enspace,\\
\mathcal{V}_{f,x}(y-x) &= w_{\nu}(d_{\nu}(x, y))\norm{y-x}_{x}^{2}\enspace,
\end{align*}
where the last equality holds if $d_{\nu}(x,y)<1$ for the case $\nu>2$. Closed-form expressions of $w_{\nu}(\cdot)$ and $d_{\nu}(\cdot)$ are recalled in Appendix for logistic, quadratic and power losses.

\paragraph{Approximating the duality gap path.}
Assume we have constructed primal/dual feasible vectors for a finite grid of parameters $\Lambda_{T}=\{\lambda_{0},\dots, \lambda_{T-1}\}$, \ie we have at our disposal $(\beta^{(\lambda_{t})}, \theta^{(\lambda_{t})})$ for all $\lambda_t \in \Lambda_{T}$.
Let us denote $\Gap_{t} = \Gap_{\lambda_{t}}(\beta^{(\lambda_{t})}, \theta^{(\lambda_{t})})$, and for $\zeta_{t}=-\lambda_{t}\theta^{(\lambda_{t})}$,
$\Delta_{t} = f(X\beta^{(\lambda_{t})}) - f(\nabla f^*(\zeta_{t}))$.
For any function $\phi : \bbR^n \to [0, +\infty]$ that vanishes at $0$, $\rho \in \bbR$, we define
\begin{equation}
\label{eq:defined_phi}
Q_{t, \phi}(\rho) = \Gap_{t} + \rho \cdot (\Delta_{t} - \Gap_{t}) + \phi(-\rho \cdot \zeta_{t})\enspace.
\end{equation}
The terms $\Gap_t$ and $\Delta_t$ represent a measure of the optimization error at $\lambda_t$. The notation introduced in \eqref{eq:defined_phi} will be convenient to write concisely upper and lower bounds on the duality gap. This is the goal of the next lemma which leverages regularity of the loss function $f$, as introduced in \Cref{def:general_convexity_smoothness}.
This provides control on how the duality gap deviates when one evaluates it for another (close) parameter $\lambda$.

\begin{lemma}
\label{lm:tracking_gap_regularization_parameter}
We assume that $-\lambda \theta^{(\lambda_{t})} \in \dom f^*$ and $X^{\top} \theta^{(\lambda_{t})} \in \dom\Omega^{*}$. If $f^*$ is $\mathcal{V}_{f^*}$-smooth (resp. $\mathcal{U}_{f^*}$-convex)\footnote{we  drop $x$ in $\mathcal{U}_{f,x}$ and write $\mathcal{U}_{f}$ if no ambiguity holds.}, then for $\rho = 1-{\lambda}/{\lambda_{t}}$, the right (resp. left) hand side of Inequality~\eqref{eq:bounding_warm_start_gap} holds true
\begin{equation}
Q_{t, \mathcal{U}_{f^*}}(\rho)
\leq \Gap_{\lambda}(\beta^{(\lambda_{t})}, \theta^{(\lambda_{t})})
\leq Q_{t, \mathcal{V}_{f^*}}(\rho) \enspace. \label{eq:bounding_warm_start_gap}
\end{equation}
\end{lemma}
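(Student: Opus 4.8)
The plan is to expand $\Gap_{\lambda}(\beta^{(\lambda_t)}, \theta^{(\lambda_t)})$ explicitly, isolate its dependence on $\lambda$, and observe that the only term that is \emph{not} affine in $\lambda$ is a difference of values of $f^*$, which I then sandwich by applying the regularity inequalities \eqref{eq:mu_convexity}--\eqref{eq:nu_smoothness} to $f^*$ at the base point $\zeta_t = -\lambda_t \theta^{(\lambda_t)}$.

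Concretely, abbreviate $\beta = \beta^{(\lambda_t)}$, $\theta = \theta^{(\lambda_t)}$. Since $\rho = 1 - \lambda/\lambda_t$, one has $\lambda = (1-\rho)\lambda_t$ and $-\lambda\theta = (1-\rho)\zeta_t$, so
\[
\Gap_{\lambda}(\beta, \theta) = f(X\beta) + f^*\bigl((1-\rho)\zeta_t\bigr) + (1-\rho)\lambda_t\bigl(\Omega(\beta) + \Omega^*(X^\top\theta)\bigr),
\]
every term being finite by the assumptions $-\lambda\theta^{(\lambda_t)}\in\dom f^*$ and $X^\top\theta^{(\lambda_t)}\in\dom\Omega^*$. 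The key point is that $\Omega$ carries no $\lambda$-dependence: reading off $\lambda_t\bigl(\Omega(\beta)+\Omega^*(X^\top\theta)\bigr) = \Gap_t - f(X\beta) - f^*(\zeta_t)$ from the definition of $\Gap_t$ and substituting, the $\Omega$-contributions cancel and one is left, after grouping, with
\[
\Gap_{\lambda}(\beta,\theta) = \Gap_t + \rho(\Delta_t - \Gap_t) + R(\rho), \qquad R(\rho) := f^*\bigl((1-\rho)\zeta_t\bigr) - f^*(\zeta_t) + \rho\langle \nabla f^*(\zeta_t), \zeta_t\rangle ,
\]
where $\Delta_t$ surfaces through the Fenchel equality $f^*(\zeta_t) - \langle \nabla f^*(\zeta_t), \zeta_t\rangle = -f(\nabla f^*(\zeta_t))$ at $\zeta_t$ (valid since $f^*$ is differentiable there, which is implicit in the hypotheses since $\nabla f^*(\zeta_t)$ appears in $\Delta_t$), used to rewrite $\rho\bigl(f(X\beta) + f^*(\zeta_t) - \langle\nabla f^*(\zeta_t),\zeta_t\rangle\bigr)$ as $\rho\bigl(f(X\beta) - f(\nabla f^*(\zeta_t))\bigr) = \rho\Delta_t$.

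It then remains only to bound $R(\rho)$, which is exactly the Bregman divergence of $f^*$ between the points $(1-\rho)\zeta_t$ and $\zeta_t$, because $(1-\rho)\zeta_t - \zeta_t = -\rho\zeta_t$. Applying \eqref{eq:mu_convexity} (resp. \eqref{eq:nu_smoothness}) to $f^*$ at $x=\zeta_t$ with $z=(1-\rho)\zeta_t$ gives $\mathcal{U}_{f^*,\zeta_t}(-\rho\zeta_t) \leq R(\rho)$ (resp. $R(\rho) \leq \mathcal{V}_{f^*,\zeta_t}(-\rho\zeta_t)$), each under the corresponding hypothesis only, which is how the "resp." in the statement gets handled. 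Adding $\Gap_t + \rho(\Delta_t - \Gap_t)$ throughout and recognizing $Q_{t,\phi}(\rho) = \Gap_t + \rho(\Delta_t - \Gap_t) + \phi(-\rho\zeta_t)$ from \eqref{eq:defined_phi} yields precisely \eqref{eq:bounding_warm_start_gap}.

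The computation is elementary; the two steps needing care are the cancellation of the $\Omega$-terms (which succeeds only because $\Omega$ enters both $P_{\lambda}$ and $D_{\lambda}$ through the common scalar factor $\lambda$, so substituting $\Gap_t$ removes them) and the invocation of the Fenchel equality that makes $\Delta_t$ appear, ensuring the final two-sided bound depends on the warm-started iterates only through the scalars $\Gap_t,\Delta_t$ and the single vector $\zeta_t$ — which is exactly what later lets us derive closed-form path-complexity estimates.
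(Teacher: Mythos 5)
Your proposal is correct and follows essentially the same route as the paper's proof: both isolate the $\Omega$-terms via the definition of $\Gap_t$, use the Fenchel equality at $\zeta_t$ to make $\Delta_t$ appear, and bound the remaining difference $f^*((1-\rho)\zeta_t)-f^*(\zeta_t)+\rho\langle\nabla f^*(\zeta_t),\zeta_t\rangle$ by the regularity of $f^*$. The only cosmetic difference is that you apply Definition~\ref{def:general_convexity_smoothness} directly as a Bregman-divergence sandwich at $x=\zeta_t$, whereas the paper routes the same step through its generalized Fenchel--Young inequalities (Lemma~\ref{lm:fenchel_young}); the two are equivalent here.
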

\begin{proof}
Proof for this result and for other propositions and theorems are deferred to the Appendix.
\end{proof}

The function $\phi$, chosen as $\mathcal{V}_{f^*}$ (resp. $\mathcal{U}_{f^*}$) for the upper (resp. lower) bound, essentially captures the regularity needed to approximate the duality gap at $\lambda$ when using primal/dual vector $(\beta^{(\lambda_{t})}, \theta^{(\lambda_{t})})$ for $\lambda_t$ close to $\lambda$.
When the function satisfies both inequalities, tightness of the bounds can be related to the conditioning $\mathcal{U}_{f^*} / \mathcal{V}_{f^*}$ of the dual loss $f^*$.
Equality holds for $\mathcal{U}_{f^*} \equiv \mathcal{V}_{f^*} \equiv \tfrac{1}{2}{\norm{\cdot}_{2}^{2}}$ (least-squares), showing the tightness of the bounds.

From \Cref{lm:tracking_gap_regularization_parameter}, we have $\Gap_{\lambda}(\beta^{(\lambda_{t})}, \theta^{(\lambda_{t})}) \leq \epsilon$ as soon as $Q_{t, \mathcal{V}_{f^*}}(\rho) \leq \epsilon$ where $\rho = 1 - \lambda/\lambda_{t}$ varies with $\lambda$.
Hence, we obtain the following proposition that allows to track the regularization path for an arbitrary precision on the duality gap. It proceeds by choosing the largest $\rho=\rho_t$ such that the upper bound in \Cref{eq:bounding_warm_start_gap} remains below $\epsilon$ and leads to \Cref{alg:adaptive_training_path} for computing an $\epsilon$-path.

\begin{proposition}[Grid for a prescribed precision]\label{prop:grid_for_a_prescribed_precision} $\,$\\
Given $(\beta^{(\lambda_{t})}, \theta^{(\lambda_{t})})$ such that $\Gap_t \leq \epsilon_c < \epsilon$, for all
$\lambda \in \lambda_{t} \times \left[1 - \rho_{t}^{\ell}(\epsilon),\, 1 + \rho_{t}^{r}(\epsilon)\right]$,
we have $\Gap_{\lambda}(\beta^{(\lambda_{t})}, \theta^{(\lambda_{t})}) \leq \epsilon$ where $\rho_{t}^{\ell}(\epsilon)$ (resp. $\rho_{t}^{r}(\epsilon)$) is the largest non-negative $\rho$ s.t. $Q_{t, \mathcal{V}_{f^*}}(\rho) \leq \epsilon$ (resp. $Q_{t, \mathcal{V}_{f^*}}(-\rho) \leq \epsilon$).
\end{proposition}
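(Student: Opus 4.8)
The statement is essentially a corollary of \Cref{lm:tracking_gap_regularization_parameter}; the only real work is to promote the one-sided optimal radii $\rho_t^\ell(\epsilon)$ and $\rho_t^r(\epsilon)$, which are each defined by a scalar root/line search in $\rho$, into bounds valid on the \emph{whole} interval $\lambda_t[1-\rho_t^\ell(\epsilon),\,1+\rho_t^r(\epsilon)]$. The mechanism for that promotion is convexity of $\rho\mapsto Q_{t,\mathcal V_{f^*}}(\rho)$ together with the strict slack $\Gap_t\le\epsilon_c<\epsilon$ at $\rho=0$.

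First I would change variables, setting $\rho=1-\lambda/\lambda_t$, so that $\lambda\in\lambda_t[1-\rho_t^\ell(\epsilon),\,1+\rho_t^r(\epsilon)]$ is equivalent to $\rho\in[-\rho_t^r(\epsilon),\,\rho_t^\ell(\epsilon)]$. The right-hand inequality of \eqref{eq:bounding_warm_start_gap} in \Cref{lm:tracking_gap_regularization_parameter} then gives $\Gap_\lambda(\beta^{(\lambda_t)},\theta^{(\lambda_t)})\le Q_{t,\mathcal V_{f^*}}(\rho)$, so it suffices to show $Q_{t,\mathcal V_{f^*}}(\rho)\le\epsilon$ for every such $\rho$. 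The domain hypotheses of the lemma ($-\lambda\theta^{(\lambda_t)}\in\dom f^*$, $X^\top\theta^{(\lambda_t)}\in\dom\Omega^*$) are not an obstruction: $\theta^{(\lambda_t)}$ is dual feasible at $\lambda_t$, and the modulus $\mathcal V_{f^*}$ is $+\infty$ as soon as $-\rho\zeta_t$ leaves the relevant domain, so any $\rho$ violating these conditions already has $Q_{t,\mathcal V_{f^*}}(\rho)=+\infty>\epsilon$ and is automatically excluded by the definition of the radii.

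Next, from \eqref{eq:defined_phi}, $Q_{t,\mathcal V_{f^*}}(\rho)=\Gap_t+\rho(\Delta_t-\Gap_t)+\mathcal V_{f^*}(-\rho\zeta_t)$ is convex in $\rho$: the first two terms are affine, and the last is $\mathcal V_{f^*}$ precomposed with the linear map $\rho\mapsto-\rho\zeta_t$, hence convex since $\mathcal V_{f^*}$ is convex (in the smooth uniformly convex case $\mathcal V_{f^*}(u)=\mathcal V(\norm{u})$ with $\mathcal V$ convex nondecreasing, and the generalized self-concordant modulus is likewise convex on its domain). Moreover $Q_{t,\mathcal V_{f^*}}(0)=\Gap_t\le\epsilon_c<\epsilon$, and $\mathcal V_{f^*}$ is continuous at $0$ with value $0$, so $\rho=0$ lies in the interior of the sublevel set $S_\epsilon=\{\rho:\,Q_{t,\mathcal V_{f^*}}(\rho)\le\epsilon\}$, which is a closed interval by convexity and lower semicontinuity of $Q_{t,\mathcal V_{f^*}}$. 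By their very definition, $\rho_t^\ell(\epsilon)=\sup\bigl(S_\epsilon\cap[0,+\infty)\bigr)$ and $-\rho_t^r(\epsilon)=\inf\bigl(S_\epsilon\cap(-\infty,0]\bigr)$ (with the convention $+\infty$ if $S_\epsilon$ is unbounded on that side), hence $[-\rho_t^r(\epsilon),\,\rho_t^\ell(\epsilon)]\subseteq S_\epsilon$. Therefore $Q_{t,\mathcal V_{f^*}}(\rho)\le\epsilon$ on that range, and combining with the bound of the previous paragraph gives $\Gap_\lambda(\beta^{(\lambda_t)},\theta^{(\lambda_t)})\le\epsilon$, as claimed.

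The only genuinely non-bookkeeping point — what I would single out as the heart of the argument — is the convexity of $\rho\mapsto Q_{t,\mathcal V_{f^*}}(\rho)$, which is exactly what makes the scalar search defining $\rho_t^\ell(\epsilon),\rho_t^r(\epsilon)$ legitimate: it guarantees $S_\epsilon$ is a single interval rather than a union of intervals, so that "largest admissible $\rho$" really certifies the whole segment down to $0$. One should take mild care in the generalized self-concordant regime, where $\mathcal V_{f^*}$ is finite only on a neighbourhood (the constraint $d_\nu<1$); this merely shrinks $S_\epsilon$ and is harmless as long as $Q_{t,\mathcal V_{f^*}}$ is kept a proper lsc convex function, so that its sublevel sets are still closed intervals.
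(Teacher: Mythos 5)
Your proof is correct and follows essentially the same route as the paper: apply the upper bound of \Cref{lm:tracking_gap_regularization_parameter} and observe that the sublevel set $\{\rho : Q_{t,\mathcal V_{f^*}}(\rho)\le\epsilon\}$ is an interval containing $0$, so the two one-sided radii certify the whole segment. The paper's own proof simply writes this interval as $[\inf\{\cdot\},\sup\{\cdot\}]$ without comment; your explicit appeal to convexity of $\rho\mapsto Q_{t,\mathcal V_{f^*}}(\rho)$ (and the strict slack $\Gap_t\le\epsilon_c<\epsilon$ at $\rho=0$) supplies the justification the paper leaves implicit.
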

Conversely, given a grid\footnote{we assume a decreasing order $\lambda_{t + 1}<\lambda_{t}$, reflecting common practices for GLM, \eg for the Lasso.} of $T$ parameters $\Lambda_{T}=\{\lambda_{0},\dots, \lambda_{T-1}\}$, we define $\epsilon_{\Lambda_{T}}$, the error of the approximation path on $[\lambdamin, \lambdamax]$ by using a piecewise constant approximation of the map $\lambda \mapsto \Gap_{\lambda}(\beta^{(\lambda_t)}, \theta^{(\lambda_t)})$:
\begin{equation}\label{eq:def_epsilon_lambda_T}
\epsilon_{\Lambda_{T}} := \max_{\lambda \in [\lambdamin, \lambdamax]} \min_{\lambda_{t} \in \Lambda_{T}}\Gap_{\lambda}(\beta^{(\lambda_{t})}, \theta^{(\lambda_{t})}) \enspace.
\end{equation}
This error is however difficult to evaluate in practice so we rely on a tight upper bound based on \Cref{lm:tracking_gap_regularization_parameter} that often leads to closed-form expressions.

\begin{proposition}[Precision for a given grid]\label{prop:precision_for_a_given_grid}
Given a grid of parameters $\Lambda_{T}$, the set $\{\beta^{(\lambda)} : \lambda \in \Lambda_T\}$ is an $\epsilon_{\Lambda_{T}}$-path with $\epsilon_{\Lambda_{T}} \leq \max_{t \in [T]} Q_{t,\mathcal{V}_{f^*}}(1-\lambda_{t}^{\star}/\lambda_{t}) $ where for all $t \in \{0,\dots, T-1\}$, $\lambda_{t}^{\star}$ is the largest $\lambda \in [\lambda_{t + 1},\lambda_{t}]$ such that $Q_{t,\mathcal{V}_{f^*}}(1-\lambda/\lambda_{t}) \geq Q_{t+1,\mathcal{V}_{f^*}}(1-\lambda/\lambda_{t + 1})$.
\end{proposition}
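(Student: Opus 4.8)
The statement has two parts: that $\mathcal{P}:=\{\beta^{(\lambda)}:\lambda\in\Lambda_T\}$ is an $\epsilon_{\Lambda_T}$-path, and that $\epsilon_{\Lambda_T}$ obeys the displayed bound. The first part is immediate: by the definition \eqref{eq:def_epsilon_lambda_T} of $\epsilon_{\Lambda_T}$, for every $\lambda\in[\lambdamin,\lambdamax]$ there is a $\lambda_t\in\Lambda_T$ with $\Gap_\lambda(\beta^{(\lambda_t)},\theta^{(\lambda_t)})\le\epsilon_{\Lambda_T}$, and the weak-duality certificate \eqref{eq:dual_gap_certificate} then yields $P_\lambda(\beta^{(\lambda_t)})-P_\lambda(\tbeta{\lambda})\le\epsilon_{\Lambda_T}$, i.e.\ $\beta^{(\lambda_t)}\in\mathcal{P}$ is an $\epsilon_{\Lambda_T}$-solution for $\lambda$, which is exactly \Cref{def:eps_path}.

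For the quantitative bound, write $g_t(\lambda):=Q_{t,\mathcal{V}_{f^*}}(1-\lambda/\lambda_t)$, so that by the upper bound of \Cref{lm:tracking_gap_regularization_parameter} one has $\Gap_\lambda(\beta^{(\lambda_t)},\theta^{(\lambda_t)})\le g_t(\lambda)$ for every $t$ and every $\lambda$. I would then cover the range as $[\lambdamin,\lambdamax]=\bigcup_{t}[\lambda_{t+1},\lambda_t]$ and, on each piece $[\lambda_{t+1},\lambda_t]$, retain inside the inner minimum of \eqref{eq:def_epsilon_lambda_T} only the two enclosing grid points $\lambda_t$ and $\lambda_{t+1}$. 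This already gives
\[
\epsilon_{\Lambda_T}\ \le\ \max_{t}\ \max_{\lambda\in[\lambda_{t+1},\lambda_t]}\ \min\bigl\{\,g_t(\lambda),\ g_{t+1}(\lambda)\,\bigr\}\enspace,
\]
so it remains to show that the inner one-dimensional maximum over each interval is at most $g_t(\lambda_t^\star)=Q_{t,\mathcal{V}_{f^*}}(1-\lambda_t^\star/\lambda_t)$.

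To settle that sub-problem I would exploit the shape of the two curves on $[\lambda_{t+1},\lambda_t]$. Each $g_s$ is the composition of the affine map $\lambda\mapsto\rho=1-\lambda/\lambda_s$ with $\rho\mapsto Q_{s,\mathcal{V}_{f^*}}(\rho)$, which is convex (hence unimodal) whenever $\mathcal{V}_{f^*}$ is, as in our examples, and which satisfies $Q_{s,\mathcal{V}_{f^*}}(0)=\Gap_s\le\epsilon_c$ and $\tfrac{d}{d\rho}Q_{s,\mathcal{V}_{f^*}}(0)=\Delta_s-\Gap_s$ (using $\mathcal{V}_{f^*}(0)=0$ and $\nabla\mathcal{V}_{f^*}(0)=0$). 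Knowing this value and slope at $\rho=0$, convexity localizes the minimiser of $Q_{s,\mathcal{V}_{f^*}}$ relative to $\rho=0$, and hence $g_t$ is non-increasing on the part of $[\lambda_{t+1},\lambda_t]$ lying above $\lambda_t^\star$ while $g_{t+1}$ is non-decreasing on the part below it; since moreover $g_t(\lambda_t^\star)\ge g_{t+1}(\lambda_t^\star)$ by the definition of $\lambda_t^\star$ as the \emph{largest} $\lambda$ with $g_t\ge g_{t+1}$, splitting the interval at $\lambda_t^\star$ and using $\min\{g_t,g_{t+1}\}\le g_{t+1}\le g_{t+1}(\lambda_t^\star)\le g_t(\lambda_t^\star)$ to the left and $\min\{g_t,g_{t+1}\}=g_t\le g_t(\lambda_t^\star)$ to the right gives $\max_{[\lambda_{t+1},\lambda_t]}\min\{g_t,g_{t+1}\}\le g_t(\lambda_t^\star)$. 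A maximum over $t$ then concludes.

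The step I expect to require the most care is this shape analysis. Because $(\beta^{(\lambda_t)},\theta^{(\lambda_t)})$ is only $\epsilon_c$-optimal, the minimiser of $Q_{t,\mathcal{V}_{f^*}}$ need not be at $\rho=0$, and its exact location depends on the sign of $\Delta_t-\Gap_t$, which — via the Fenchel--Young equality $f^*(\zeta_t)+f(\nabla f^*(\zeta_t))=\langle\zeta_t,\nabla f^*(\zeta_t)\rangle$ and Fenchel--Young for $\Omega$ — reduces to controlling $\langle\zeta_t,\nabla f^*(\zeta_t)\rangle+\lambda_t\bigl(\Omega(\beta^{(\lambda_t)})+\Omega^*(X^\top\theta^{(\lambda_t)})\bigr)$; one must also check that the small residual $\Gap_t=g_t(\lambda_t)$ is dominated by $g_t(\lambda_t^\star)$, and dispose of the degenerate configurations where the two curves cross at a grid point (so $\lambda_t^\star$ is an endpoint) and of the book-keeping at the two ends of the grid — at $\lambda_0=\lambdamax$ only $\lambda\ge\lambda_0$ lies in range, at $\lambda_{T-1}=\lambdamin$ only $\lambda\le\lambda_{T-1}$ — reconciling the index sets $\{0,\dots,T-1\}$ and $[T]$ that appear in the statement.
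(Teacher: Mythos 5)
Your proposal is correct and follows essentially the same route as the paper's proof: partition $[\lambdamin,\lambdamax]$ into $[\lambda_{t+1},\lambda_t]$, restrict the inner minimum to the two enclosing grid points, and use the opposite monotonicity of $\lambda\mapsto Q_{t,\mathcal{V}_{f^*}}(1-\lambda/\lambda_t)$ and $\lambda\mapsto Q_{t+1,\mathcal{V}_{f^*}}(1-\lambda/\lambda_{t+1})$ to locate the max of the min at the crossing point $\lambda_t^\star$. The only difference is that the paper simply asserts this monotonicity, whereas you correctly flag that it needs the sign/shape analysis of $Q_{t,\mathcal{V}_{f^*}}$ near $\rho=0$ (driven by $\Delta_t-\Gap_t$ and the convexity of $\mathcal{V}_{f^*}$) — a point the paper's own proof glosses over.
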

\paragraph{Construction of dual feasible vector.}
We rely on gradient rescaling to produce a dual feasible vector:

\begin{lemma}\label{lem:dual_point}
For any $\beta^{(\lambda_{t})} \in \bbR^p$, the vector
 \begin{equation*}
 \label{eq:def_dual_vector}
 \theta^{(\lambda_{t})} =\frac{-\nabla f (X\beta^{(\lambda_{t})})}{\max(\lambda_{t}, \sigma_{\dom \Omega^*}^{\circ} (X^\top \nabla f(X\beta^{(\lambda_{t})}))}\enspace,
\end{equation*}
is feasible: $-\lambda \theta^{(\lambda_{t})} \in \dom f^*$, $X^{\top} \theta^{(\lambda_{t})} \in \dom \Omega^{*}$.
\end{lemma}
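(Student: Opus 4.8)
Write $g := \nabla f(X\beta^{(\lambda_{t})})$ and $M := \max\bigl(\lambda_{t},\, \sigma_{\dom \Omega^{*}}^{\circ}(X^{\top} g)\bigr)$, so that $\theta^{(\lambda_{t})} = -g/M$ and $M \geq \lambda_{t} > 0$; in particular the denominator never vanishes, and $g$ is well defined since $f$ is differentiable on all of $\bbR^{n}$. I would establish the two membership claims separately.

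For $X^{\top}\theta^{(\lambda_{t})} = -X^{\top} g / M \in \dom \Omega^{*}$, the key point is to recognize the polar $\sigma_{C}^{\circ}$ of the closed convex set $C := \dom \Omega^{*}$ (which contains $0$, so $\sigma_{C}^{\circ}$ is well defined as in the Notation) as the gauge $\gamma_{C}$ of $C$: indeed $\{x : \sigma_{C}(x) \leq 1\} = C^{\circ}$, whence $\sigma_{C}^{\circ} = \sigma_{C^{\circ}} = \gamma_{C^{\circ\circ}} = \gamma_{C}$ by the bipolar theorem \citep{Rockafellar97}. By definition of the gauge, $X^{\top} g / \gamma_{C}(X^{\top} g) \in C$ when $\gamma_{C}(X^{\top} g) > 0$ (the infimum is attained since $C$ is closed); and since $M \geq \gamma_{C}(X^{\top} g)$ with $0 \in C$ and $C$ convex, the point $X^{\top} g / M$ lies on the segment between $0$ and $X^{\top} g / \gamma_{C}(X^{\top} g)$, hence in $C$ (the case $\gamma_{C}(X^{\top} g) = 0$ gives $X^{\top} g / M \in C$ directly). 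Finally, $\Omega$ being even, $\Omega^{*}$ and $C = \dom \Omega^{*}$ are symmetric, so $-X^{\top} g / M \in C$ as well.

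For $-\lambda \theta^{(\lambda_{t})} = (\lambda/M)\, g \in \dom f^{*}$ — which I would prove for every $\lambda \in [0, \lambda_{t}]$, in particular $\lambda = \lambda_{t}$, as needed for $\theta^{(\lambda_{t})} \in \dom D_{\lambda_{t}}$ — I would invoke two facts. First, $g = \nabla f(X\beta^{(\lambda_{t})}) \in \partial f(X\beta^{(\lambda_{t})})$, so Fenchel--Young holds with equality, $f^{*}(g) = \langle g, X\beta^{(\lambda_{t})}\rangle - f(X\beta^{(\lambda_{t})}) < +\infty$, i.e.\ $g \in \dom f^{*}$ (the range of $\nabla f$ is contained in $\dom f^{*}$). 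Second, $0 \in \dom f^{*}$ since $f^{*}(0) = -\inf f < +\infty$, the empirical risk being bounded below for the losses considered (for separable $f = \sum_{i} f_{i}$ this is also transparent coordinatewise: each $\dom f_{i}^{*}$ is an interval containing both the range of $f_{i}'$ and $0$, e.g.\ $\dom f_{i}^{*} = [-y_{i}, 1 - y_{i}] \ni 0$ for the logistic loss of \Cref{tab:summary}). Since $\dom f^{*}$ is convex and $\lambda/M \in [0,1]$, we conclude $(\lambda/M)\, g = (\lambda/M)\, g + (1 - \lambda/M)\cdot 0 \in \dom f^{*}$.

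The argument is essentially bookkeeping; the only points requiring care are the identification $\sigma_{\dom \Omega^{*}}^{\circ} = \gamma_{\dom \Omega^{*}}$ (together with the implicit use that $\dom \Omega^{*}$ is closed, convex, symmetric and contains $0$), and checking that $\nabla f(X\beta^{(\lambda_{t})})$ is well defined and that $0 \in \dom f^{*}$ — both guaranteed by the standing regularity and boundedness hypotheses on $f$ and $\Omega$. I do not anticipate a genuine obstacle.
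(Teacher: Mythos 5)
Your argument is correct, and it is worth noting that the paper does not actually prove \Cref{lem:dual_point} in-text: its ``proof'' is a pointer to Proposition~11 and Lemma~5 of \citet[Chapter~2]{Ndiaye18}. Your write-up therefore supplies a genuine self-contained derivation where the paper offers only a citation. The two ingredients you use --- the identification $\sigma^{\circ}_{\dom\Omega^*}=\sigma_{(\dom\Omega^*)^{\circ}}=\gamma_{\dom\Omega^*}$ via the bipolar theorem, giving $X^{\top}g/M\in\dom\Omega^*$ as a convex combination with $0$, and the Fenchel--Young equality case giving $\nabla f(X\beta^{(\lambda_t)})\in\dom f^*$ together with $0\in\dom f^*$ and convexity of the domain --- are exactly the standard route, and your treatment of the degenerate case $\gamma_{\dom\Omega^*}(X^{\top}g)=0$ and of attainment via closedness is careful.

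Two caveats, both of which you already half-acknowledge, deserve to be made explicit. First, passing from $X^{\top}g/M\in\dom\Omega^*$ to $-X^{\top}g/M\in\dom\Omega^*$ requires $\dom\Omega^*$ to be symmetric: the denominator evaluates $\sigma^{\circ}_{\dom\Omega^*}$ at $+X^{\top}g$ while the numerator carries $-g$. This is automatic when $\Omega$ is a norm (then $\dom\Omega^*$ is the dual unit ball, cf.\ the remark after the lemma), but it is an extra hypothesis for a general convex $\Omega$; without it one should instead put $\sigma^{\circ}_{\dom\Omega^*}(-X^{\top}g)$ in the denominator. Second, your convex-combination argument establishes $-\lambda\theta^{(\lambda_t)}\in\dom f^*$ only for $\lambda\in[0,\lambda_t]$ (so that $\lambda/M\leq 1$). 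This suffices for the duality gap at $\lambda_t$ itself and for the unilateral (decreasing) path, but \Cref{prop:grid_for_a_prescribed_precision} also evaluates the gap at $\lambda=\lambda_t(1+\rho_t^{r})>\lambda_t$, where $\lambda/M$ may exceed $1$ and membership in $\dom f^*$ is no longer guaranteed by convexity alone (for the logistic loss, $\dom f_i^*=[-y_i,1-y_i]$ is bounded, so scaling up a gradient value could in principle leave the domain). This is a gap in the paper's statement as much as in your proof, but a complete argument for the bilateral case needs either an additional hypothesis ($\dom f^*=\bbR^n$, as for least squares) or a restriction on how far $\rho_t^{r}$ may reach.
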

\begin{remark}
When the regularization is a norm, $\Omega(\cdot) = \norm{\cdot}$ then $\sigma_{\dom \Omega^*}^{\circ}$ is the associated dual norm $\norm{\cdot}_*$.
\end{remark}

The dual $\theta^{(\lambda_{t})}$ in \Cref{lem:dual_point} implies that $\Gap_t$ and $\Delta_{t}$ converge to $0$ when $\beta^{(\lambda_{t})}$ converges to $\hat \beta^{(\lambda_{t})}$ \citep{Ndiaye_Fercoq_Gramfort_Salmon17}. 


\paragraph{Finding $\rho$.}
Following \Cref{prop:grid_for_a_prescribed_precision}, a 1-dimensional equation $Q_{t,\mathcal{V}_{f^*}}(\rho) = \epsilon$ needs to be solved to obtain an $\epsilon$-path.
This can be done efficiently at high precision by numerical solvers if no explicit solution is available.

As a corollary from \Cref{lm:tracking_gap_regularization_parameter} and \Cref{prop:precision_for_a_given_grid}, we recover the analysis by \citet{Giesen_Laue_Mueller_Swiercy12}:

\begin{corollary}\label{cor:quadratic_step_size}
If the function $f^*$ is $\frac{\nu}{2}\normin{\cdot}^2$-smooth, the left ($\rho^{\ell}_{t}$) and right ($\rho^{r}_{t}$) step sizes defined in \Cref{prop:grid_for_a_prescribed_precision} have closed-form expressions:
\begin{align*}
\rho^{\ell}_{t} =
\frac{\! \sqrt{\! 2 \nu \delta_{t} \norm{\zeta_{t}}^{2} \!+ {\tilde{\delta}_{t}}^{2}} - \tilde{\delta}_{t}}{\nu \norm{\zeta_{t}}^{2}},
\rho^{r}_{t} & =
 \frac{\! \sqrt{ \! 2 \nu\delta_{t} \norm{\zeta_{t}}^{2} \!+ {\tilde{\delta}_{t}}^{2}} + \tilde{\delta}_{t}}{\nu \norm{\zeta_{t}}^{2}},
\end{align*}
where $\delta_{t} := \epsilon - \Gap_{t}$ and $\tilde{\delta}_{t} := \Delta_{t} - \Gap_{t}$. This is simplified to $\delta_{t} = \epsilon - \epsilon_c$ and $\tilde{\delta}_{t}=0$ when $\max(\Gap_t, \Delta_{t}) \leq \epsilon_c$.
\end{corollary}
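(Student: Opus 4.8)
The plan is to instantiate the general machinery of \Cref{lm:tracking_gap_regularization_parameter} and \Cref{prop:grid_for_a_prescribed_precision} with the smoothness profile $\mathcal{V}_{f^*}(\cdot) = \tfrac{\nu}{2}\norm{\cdot}^2$. Under this choice the penalty term in the definition~\eqref{eq:defined_phi} of $Q_{t,\phi}$ becomes $\phi(-\rho\,\zeta_t) = \tfrac{\nu}{2}\rho^2\norm{\zeta_t}^2$, so that
\begin{equation*}
Q_{t,\mathcal{V}_{f^*}}(\rho) = \Gap_t + \rho\,\tilde{\delta}_t + \tfrac{\nu}{2}\rho^2\norm{\zeta_t}^2, \qquad \tilde{\delta}_t = \Delta_t - \Gap_t,
\end{equation*}
i.e. $\rho \mapsto Q_{t,\mathcal{V}_{f^*}}(\rho)$ is an explicit univariate quadratic. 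By \Cref{prop:grid_for_a_prescribed_precision}, $\rho^{\ell}_t$ (resp. $\rho^{r}_t$) is the largest nonnegative $\rho$ with $Q_{t,\mathcal{V}_{f^*}}(\rho) \le \epsilon$ (resp. $Q_{t,\mathcal{V}_{f^*}}(-\rho) \le \epsilon$), so it only remains to solve these two quadratic inequalities.

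Next I would carry out the root computation. Writing $\delta_t = \epsilon - \Gap_t$, the inequality $Q_{t,\mathcal{V}_{f^*}}(\rho) \le \epsilon$ is $\tfrac{\nu}{2}\norm{\zeta_t}^2 \rho^2 + \tilde{\delta}_t\,\rho - \delta_t \le 0$. Since $\Gap_t \le \epsilon_c < \epsilon$ we have $\delta_t > 0$, hence this quadratic has positive leading coefficient and a strictly negative value at $\rho = 0$; it therefore has exactly one nonnegative root, the admissible set of nonnegative $\rho$ is an interval $[0,\rho^{\ell}_t]$, and the quadratic formula yields $\rho^{\ell}_t = \bigl(\sqrt{\tilde{\delta}_t^2 + 2\nu\delta_t\norm{\zeta_t}^2} - \tilde{\delta}_t\bigr)/(\nu\norm{\zeta_t}^2)$. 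For $\rho^{r}_t$ one repeats the argument with $Q_{t,\mathcal{V}_{f^*}}(-\rho) = \Gap_t - \rho\,\tilde{\delta}_t + \tfrac{\nu}{2}\rho^2\norm{\zeta_t}^2 \le \epsilon$: only the sign of the linear coefficient flips, giving $\rho^{r}_t = \bigl(\sqrt{\tilde{\delta}_t^2 + 2\nu\delta_t\norm{\zeta_t}^2} + \tilde{\delta}_t\bigr)/(\nu\norm{\zeta_t}^2)$. The degenerate case $\zeta_t = 0$ gives an unbounded step and is immediate.

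For the stated simplification, assume $\max(\Gap_t,\Delta_t) \le \epsilon_c$. For $\rho \in [0,1]$ the affine part $\Gap_t + \rho\,\tilde{\delta}_t = (1-\rho)\Gap_t + \rho\Delta_t$ is a convex combination, hence $\le \epsilon_c$, so $Q_{t,\mathcal{V}_{f^*}}(\rho) \le \epsilon_c + \tfrac{\nu}{2}\rho^2\norm{\zeta_t}^2$; imposing the right-hand side to be $\le \epsilon$ yields the conservative (but still safe) step $\rho \le \sqrt{2(\epsilon-\epsilon_c)/(\nu\norm{\zeta_t}^2)}$, which is exactly the $\rho^{\ell}_t$ formula with $\delta_t = \epsilon - \epsilon_c$ and $\tilde{\delta}_t = 0$; the analogous crude bound on $Q_{t,\mathcal{V}_{f^*}}(-\rho)$ handles $\rho^{r}_t$.

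The algebra (the quadratic formula) is routine; the two points needing care are (i) verifying that the positive root is genuinely the \emph{largest} admissible nonnegative $\rho$ — this is precisely where the strict inequality $\Gap_t < \epsilon$ is used, since it forces $\delta_t > 0$ and hence a sign change of the quadratic — and (ii) for $\rho^{r}_t$ in the simplified regime, justifying the replacement of $\Gap_t,\Delta_t$ by $\epsilon_c$, where the affine part is $(1+\rho)\Gap_t - \rho\Delta_t$ rather than a convex combination; this is handled by using the crude bound only on the relevant range and accepting a conservative $\rho^{r}_t$, or by invoking the structure of the dual point from \Cref{lem:dual_point}. Neither is substantial, and the corollary then follows by direct instantiation of \Cref{prop:grid_for_a_prescribed_precision}.
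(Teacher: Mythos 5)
Your proof is correct and follows the same route as the paper: instantiate $\mathcal{V}_{f^*}=\tfrac{\nu}{2}\normin{\cdot}^2$ in $Q_{t,\mathcal{V}_{f^*}}$, observe it is a univariate quadratic in $\rho$, and solve the two quadratic inequalities $Q_{t,\mathcal{V}_{f^*}}(\pm\rho)\leq\epsilon$; the paper's appendix proof states exactly this and leaves the root-finding implicit. Your additional care about which root is the largest admissible nonnegative one (via $\delta_t>0$) and your justification of the simplified step when $\max(\Gap_t,\Delta_t)\leq\epsilon_c$ go beyond what the paper writes down, and both are sound.
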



\begin{algorithm}[!t]
\caption{
\texttt{training\_path}}
\label{alg:adaptive_training_path}
\begin{algorithmic}
{\small\STATE {\bfseries Input:} $f, \Omega, \epsilon, \epsilon_{c}, [\lambda_{\min}, \lambda_{\max}]$
\STATE Initialization: $t=0$, $\lambda_0 = \lambda_{\max}$, $\Lambda = \{\lambda_{\max} \}$
\REPEAT
\STATE Get $\beta^{(\lambda_{t})}$ solving~\eqref{eq:primal_problem}
 to accuracy $\Gap_t \leq \epsilon_c < \epsilon$
\STATE Compute the step size $\rho_{t}^{\ell}(\epsilon)$ following Proposition \ref{prop:unilateral}, \ref{prop:bilateral}, \ref{prop:uniform}.

\STATE Set $\lambda_{t + 1} = \max(\lambda_{t} \times (1 - \rho_{t}^{\ell}),\, \lambda_{\min})$
\STATE $\Lambda \leftarrow \Lambda \cup \{\lambda_{t + 1} \}$ and $t \leftarrow t+1$
\UNTIL{$\lambda_{t} \leq \lambdamin$}
\STATE {\bfseries Return:} $\{\beta^{(\lambda_{t})}\;:\; \lambda_{t} \in \Lambda \}$}
\end{algorithmic}
\end{algorithm}

\subsection{Discretization strategies}
\label{subsec:Discretization_strategies}

We now establish new strategies for the exploration of the hyperparameter space in the search for an $\epsilon$-path.

For regularized learning methods, it is customary to start from a large regularizer\footnote{for the Lasso one often chooses $\lambda_0=\lambdamax:=\norm{X^\top y}_{\infty}$} $\lambda_{0} = \lambdamax$ and then to perform the computation of $\tbeta{\lambda_{t+1}}$ after the one of $\tbeta{\lambda_{t}}$, until the smallest parameter of interest $\lambdamin$ is reached.
Models are generally computed by increasing complexity, allowing important speed-ups due to \emph{warm start} \citep{Friedman_Hastie_Hofling_Tibshirani07} when the $\lambda$'s are close to each other.
Knowing $\lambda_t$, we provide a recursive strategy to construct $\lambda_{t+1}$.

\paragraph{Adaptive unilateral.}

The strategy we call \emph{unilateral} consists in computing the new parameter as $\lambda_{t+1}=\lambda_{t} \times (1 - \rho_{t}^{\ell}(\epsilon))$ as in \Cref{prop:grid_for_a_prescribed_precision}.

\begin{proposition}[Unilateral approximation path]\label{prop:unilateral}
Assume that $f^*$ is $\mathcal{V}_{f^*}$-smooth.
We construct the grid of parameters $\Lambda^{(u)}(\epsilon) = \{\lambda_0, \ldots, \lambda_{T_{\epsilon}-1} \}$ by
$$\lambda_0 = \lambda_{\max},\quad \lambda_{t + 1} = \lambda_{t} \times (1 - \rho_{t}^{\ell}(\epsilon)) \enspace,$$
 and $(\beta^{(\lambda_{t})}, \theta^{(\lambda_{t})})$ s.t. $\Gap_t \leq \epsilon_{c} < \epsilon$ for all $t$.
Then, the set $\{\beta^{(\lambda_{t})} \;:\; \lambda_{t} \in \Lambda^{(u)}(\epsilon)\}$ is an $\epsilon$-path for Problem~\eqref{eq:primal_problem}.
\end{proposition}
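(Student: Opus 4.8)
The plan is to show two things: first, that the set returned by the unilateral construction is genuinely an $\epsilon$-path, i.e., every $\lambda \in [\lambdamin,\lambdamax]$ is covered by some $\beta^{(\lambda_t)}$ that is an $\epsilon$-solution at $\lambda$; and second, implicitly, that the construction terminates (that $T_\epsilon$ is finite), although the finiteness/complexity is really the subject of the later theorems and here I only need that the grid reaches $\lambdamin$. For the covering claim, the key observation is that the intervals $I_t := \lambda_t \times [1-\rho_t^\ell(\epsilon),\, 1]$ chain together: by construction $\lambda_{t+1} = \lambda_t(1-\rho_t^\ell(\epsilon))$ is exactly the left endpoint of $I_t$, which is the right endpoint of $I_{t+1}$ (since $I_{t+1} = \lambda_{t+1}\times[1-\rho_{t+1}^\ell(\epsilon),1]$ has right endpoint $\lambda_{t+1}$). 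Hence $\bigcup_{t=0}^{T_\epsilon-1} I_t \supseteq [\lambda_{T_\epsilon - 1}, \lambda_0] \supseteq [\lambdamin,\lambdamax]$ once the loop stops, because the loop stops precisely when $\lambda_{T_\epsilon} \le \lambdamin$ while $\lambda_0 = \lambdamax$.

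First I would fix $\lambda \in [\lambdamin,\lambdamax]$ and pick the index $t$ with $\lambda \in I_t = [\lambda_{t+1}, \lambda_t]$ (such $t$ exists by the chaining argument above). Then, since $\lambda \le \lambda_t$, we have $\rho := 1 - \lambda/\lambda_t \in [0, \rho_t^\ell(\epsilon)]$. Now I invoke Lemma~\ref{lm:tracking_gap_regularization_parameter}: because $f^*$ is $\mathcal{V}_{f^*}$-smooth and the feasibility hypotheses hold (these are guaranteed by the construction of $\theta^{(\lambda_t)}$ via Lemma~\ref{lem:dual_point}), we get
\begin{equation*}
\Gap_\lambda(\beta^{(\lambda_t)}, \theta^{(\lambda_t)}) \le Q_{t,\mathcal{V}_{f^*}}(\rho).
\end{equation*}
It remains to bound the right-hand side by $\epsilon$. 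By Proposition~\ref{prop:grid_for_a_prescribed_precision}, $\rho_t^\ell(\epsilon)$ is the \emph{largest} non-negative $\rho$ with $Q_{t,\mathcal{V}_{f^*}}(\rho) \le \epsilon$; so I need $\rho \mapsto Q_{t,\mathcal{V}_{f^*}}(\rho)$ to be non-decreasing on $[0,\rho_t^\ell(\epsilon)]$, so that $\rho \le \rho_t^\ell(\epsilon)$ implies $Q_{t,\mathcal{V}_{f^*}}(\rho) \le Q_{t,\mathcal{V}_{f^*}}(\rho_t^\ell(\epsilon)) = \epsilon$. Combining, $\Gap_\lambda(\beta^{(\lambda_t)}, \theta^{(\lambda_t)}) \le \epsilon$, and then by the weak-duality certificate~\eqref{eq:dual_gap_certificate}, $P_\lambda(\beta^{(\lambda_t)}) - P_\lambda(\tbeta{\lambda}) \le \epsilon$, i.e., $\beta^{(\lambda_t)}$ is an $\epsilon$-solution at $\lambda$; since $\beta^{(\lambda_t)}$ belongs to the returned set, this proves the path property.

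The main obstacle, and the one point I would treat carefully rather than wave away, is the monotonicity of $\rho \mapsto Q_{t,\mathcal{V}_{f^*}}(\rho)$ on the relevant interval. Recall $Q_{t,\phi}(\rho) = \Gap_t + \rho(\Delta_t - \Gap_t) + \phi(-\rho\zeta_t)$; with $\Gap_t \le \epsilon_c < \epsilon$ the constant term is already below $\epsilon$, and $\phi = \mathcal{V}_{f^*}$ is nonnegative and increasing in a suitable sense (for the smooth uniformly convex case $\mathcal{V}_{f^*}(-\rho\zeta_t) = \mathcal{V}(\rho\|\zeta_t\|)$ with $\mathcal{V}$ increasing, hence monotone in $\rho \ge 0$; for the self-concordant case one uses the explicit form and $d_\nu < 1$). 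The linear term $\rho(\Delta_t - \Gap_t)$ can have either sign, but when $\max(\Gap_t,\Delta_t) \le \epsilon_c$ (the regime singled out in Corollary~\ref{cor:quadratic_step_size}) we may take $\tilde\delta_t = \Delta_t - \Gap_t = 0$ or bound it crudely; in general $Q_{t,\mathcal{V}_{f^*}}$ is a convex function of $\rho$ that equals $\Gap_t \le \epsilon$ at $\rho = 0$, so on the connected component of $\{\rho \ge 0 : Q_{t,\mathcal{V}_{f^*}}(\rho) \le \epsilon\}$ containing $0$ — whose right endpoint is exactly $\rho_t^\ell(\epsilon)$ — convexity gives that the sublevel set is an interval and $Q_{t,\mathcal{V}_{f^*}}(\rho) \le \max(Q_{t,\mathcal{V}_{f^*}}(0), Q_{t,\mathcal{V}_{f^*}}(\rho_t^\ell(\epsilon))) \le \epsilon$ for every $\rho$ in it, which is all I actually need and sidesteps proving strict monotonicity. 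I would phrase the argument through convexity of $Q_{t,\mathcal{V}_{f^*}}$ rather than monotonicity for this reason. The termination of the \texttt{repeat} loop (hence finiteness of $T_\epsilon$) I would defer to the complexity theorems, noting only here that each $\rho_t^\ell(\epsilon) > 0$ strictly because $Q_{t,\mathcal{V}_{f^*}}(0) = \Gap_t \le \epsilon_c < \epsilon$ and $Q_{t,\mathcal{V}_{f^*}}$ is continuous, so $\lambda_{t+1} < \lambda_t$ and the grid is strictly decreasing.
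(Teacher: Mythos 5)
Your proof is correct and follows essentially the same route as the paper, which treats Proposition~\ref{prop:unilateral} as an immediate consequence of Proposition~\ref{prop:grid_for_a_prescribed_precision} together with the observation (spelled out in the appendix proof of Proposition~\ref{prop:uniform}) that the intervals $[\lambda_{t+1},\lambda_t]$ chain into a covering of $[\lambdamin,\lambdamax]$, after which the duality-gap certificate \eqref{eq:dual_gap_certificate} gives the $\epsilon$-solution property. Your extra care about why the sublevel set $\{\rho \ge 0 : Q_{t,\mathcal{V}_{f^*}}(\rho)\le\epsilon\}$ is an interval (via convexity of $Q_{t,\mathcal{V}_{f^*}}$) addresses a point the paper leaves implicit in its definition of $\rho_t^{\ell}(\epsilon)$, but it does not change the argument.
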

This strategy is illustrated in \Cref{subfig:approximation_gap_lasso_unit} on a Lasso case, and stands as a generic one to compute an approximation path for loss functions satisfying assumptions in \Cref{def:general_convexity_smoothness}.


\begin{figure*}[t]
\centering
\subfigure[$\ell_1$ least-squares regression on climate data set \texttt{NCEP/NCAR Reanalysis}; $n=814$ observations and $p=73577$ features.]{\includegraphics[width=0.48\linewidth, keepaspectratio]{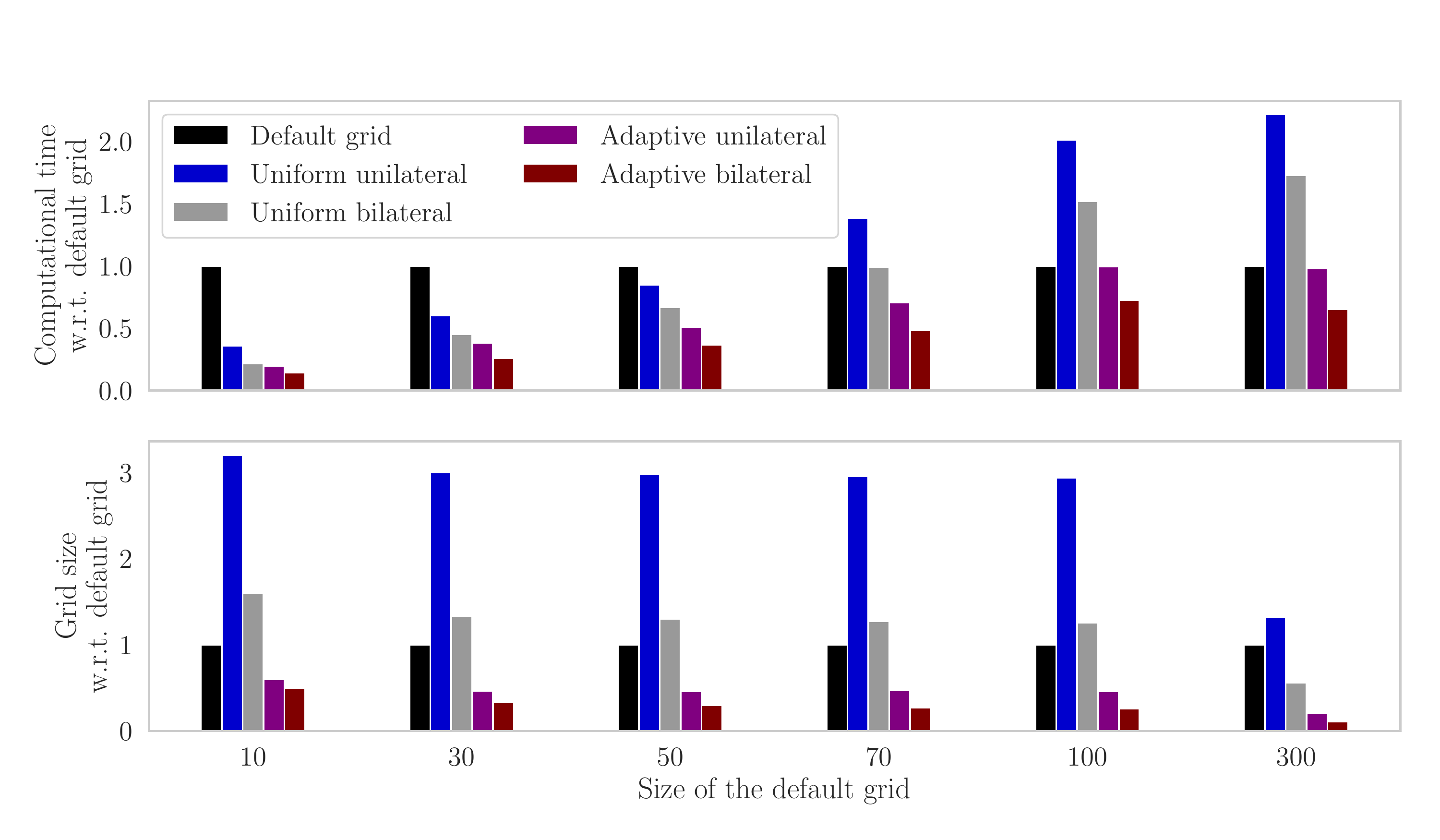}}\label{subfig:bench_Lasso_climate_grid_n_lambdas_tau10}
\hspace{0.25cm}
\subfigure[$\ell_1$ logistic regression on \texttt{leukemia} data set with $n=72$ observations and $p=7129$ features.]{\includegraphics[width=0.48\linewidth, keepaspectratio]{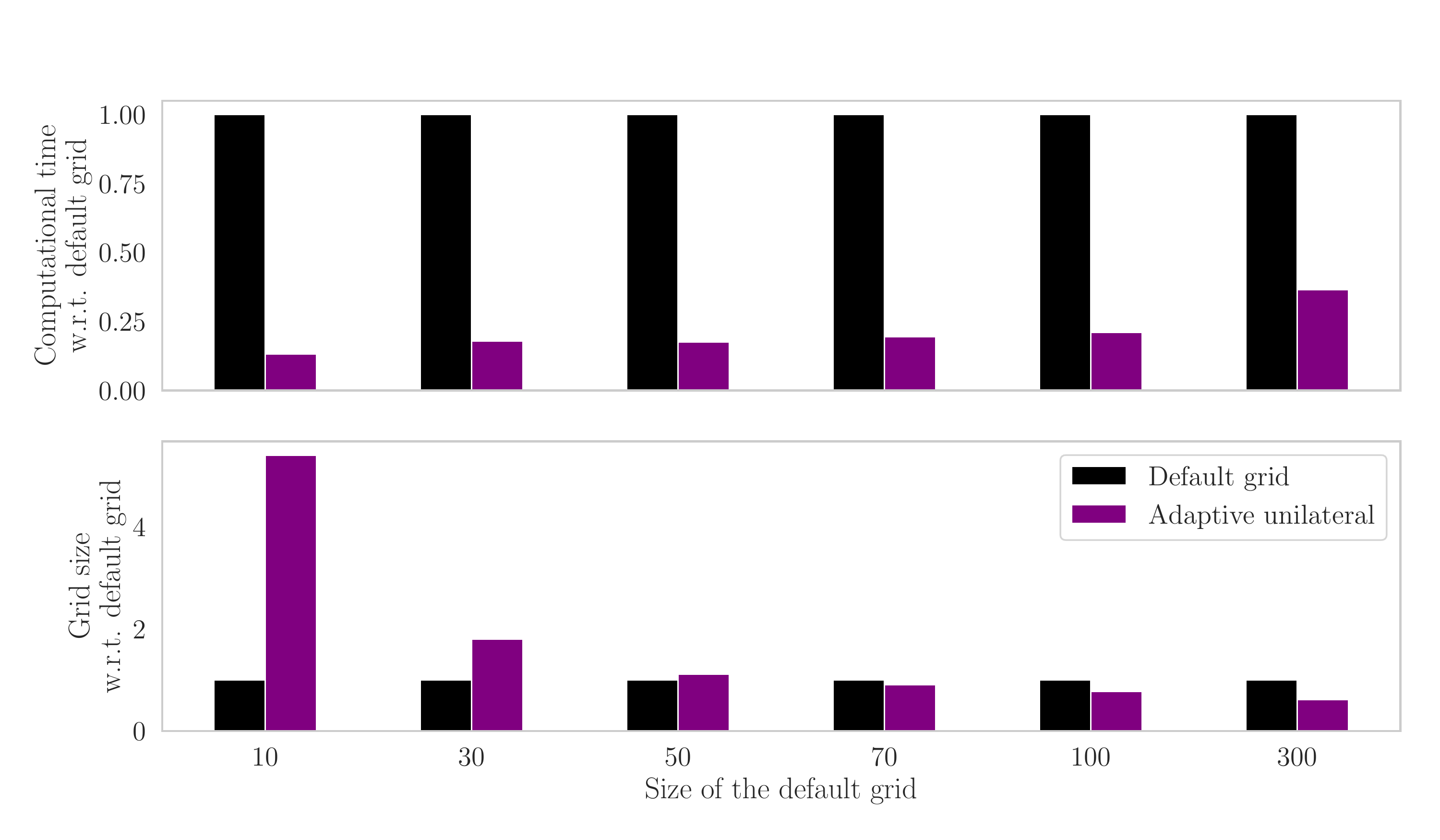}}\label{subfig:bench_Logreg_leukemia_grid_n_lambdas_tau10}
\caption{Computation of the approximation path to reach the same error than the default grid
($\epsilon = 10^{-4} \norm{y}^2$ for the least-squares case and $\epsilon = 10^{-4} \min(n_1, n_2)/n$ where $n_i$ is the number of observations in the class $i \in \{0, 1\}$, for the logistic case). We have used the same (vanilla) coordinate descent optimization solver with warm start between parameters for all grids. Note that a smaller grid do not imply faster computation, as the interplay with the warm-start can be intricate in our sequential approach.} \label{fig:bench_approximation_path}
\end{figure*}

\paragraph{Adaptive bilateral.}
For uniformly convex functions, we can make a larger step by combining the information given by the left and right step sizes.
Indeed, let us assume that we explore the parameter range  $[\lambdamin, \lambdamax]$.
Starting from a parameter $\lambda_{t}$, we define the next step, given by \Cref{prop:grid_for_a_prescribed_precision}, $\lambda_{t}^{\ell} := \lambda_{t}(1 - \rho_{t}^{\ell})$. Then it exists $\lambda_{t'} \leq \lambda_{t}^{\ell}$ such that $\lambda_{t'}^{r} := \lambda_{t'}(1 + \rho_{t'}^{r}) = \lambda_{t}^{\ell}$. Thus a larger step can be done by using
$\lambda_{t'} = \lambda_{t} \times {(1 - \rho_{t}^{\ell})}/{(1 + \rho_{t'}^{r})}$. However $\rho_{t'}^{r}$ depends on the (approximated) solution $\beta^{(\lambda_{t'})}$ that we do not know before optimizing the problem for parameter $\lambda_{t'}$ when computing sequentially the grid points in decreasing order \ie $\lambda_{t'} \leq \lambda_{t}$.
We overcome this issue in \Cref{lm:bounds_gradient_and_gap} by (upper) bounding all the constants in $Q_{t',\mathcal{V}_{f^*}}(\rho)$ that depend on the solution $\beta^{(\lambda_{t'})}$, by constants involving only information given by $\beta^{(\lambda_{t})}$.

\begin{lemma}\label{lm:bounds_gradient_and_gap}
Assuming $f$ uniformly smooth yields $\normin{\nabla f(X\beta^{(\lambda_{t'})})}_* \leq \widetilde R_{t}$, where $\widetilde R_{t} := {\mathcal{V}_{f}^{*}}^{-1}\big(f(X\beta^{(\lambda_{t})}) + \frac{2\epsilon_{c}}{\rho_{t}^{\ell} (\epsilon)}\big)$.
If additionally $f$ is uniformly convex, this yields $\Delta_{t'} \leq \widetilde \Delta_{t}$, where $\widetilde \Delta_{t} := \widetilde R_{t} \times \mathcal{U}_{f}^{-1}(\epsilon_{c})$ as well as $\Gap_{\lambda}(\beta^{(\lambda_{t'})}, \theta^{(\lambda_{t'})}) \leq Q_{t',\mathcal V_{f^*}}(\rho) \leq \widetilde Q_{t, \mathcal V_{f^*}}(\rho)$, where
\begin{equation*}
\widetilde Q_{t, \mathcal V_{f^*}}(\rho) =  \epsilon_{c} + \rho \cdot (\widetilde \Delta_{t} - \epsilon_{c} ) + \mathcal{V}_{f^*}\left(\left|\rho\right|\cdot\widetilde R_{t}\right) \enspace.
\end{equation*}
\end{lemma}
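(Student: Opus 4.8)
The plan is to bound, one at a time, each quantity in $Q_{t',\mathcal{V}_{f^*}}(\rho)$ that depends on the not-yet-computed iterate $\beta^{(\lambda_{t'})}$, replacing it by something expressed only via $\beta^{(\lambda_{t})}$. Recall $Q_{t',\mathcal{V}_{f^*}}(\rho) = \Gap_{t'} + \rho\cdot(\Delta_{t'} - \Gap_{t'}) + \mathcal{V}_{f^*}(-\rho\cdot\zeta_{t'})$ with $\zeta_{t'} = -\lambda_{t'}\theta^{(\lambda_{t'})}$. By construction of the dual point in Lemma~\ref{lem:dual_point}, $\zeta_{t'} = \lambda_{t'}\nabla f(X\beta^{(\lambda_{t'})})/\max(\lambda_{t'},\sigma^{\circ}_{\dom\Omega^*}(\cdots))$, so $\normin{\zeta_{t'}}_* \leq \normin{\nabla f(X\beta^{(\lambda_{t'})})}_*$; hence the first task is to control $\normin{\nabla f(X\beta^{(\lambda_{t'})})}_*$.

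First I would establish the gradient bound. Since $f$ is $\mathcal{V}_f$-smooth at (say) the point achieving the minimum, uniform smoothness gives a growth control of the form $\mathcal{V}_f(\normin{\cdot})$ on $f$, and conjugating this yields that $\normin{\nabla f(u)}_*$ is controlled by ${\mathcal{V}_f^*}^{-1}$ applied to $f(u)$ up to a constant shift (this is the standard smoothness $\Leftrightarrow$ bounded-gradient-growth duality; ${\mathcal{V}_f^*}$ denotes the Fenchel transform of the scalar profile). So it remains to upper bound $f(X\beta^{(\lambda_{t'})})$. Here I would use that $\beta^{(\lambda_{t'})}$ is an $\epsilon_c$-solution for $\lambda_{t'}$, hence $P_{\lambda_{t'}}(\beta^{(\lambda_{t'})}) \le P_{\lambda_{t'}}(\tbeta{\lambda_{t'}}) + \epsilon_c \le P_{\lambda_{t'}}(\beta^{(\lambda_t)}) + \epsilon_c = f(X\beta^{(\lambda_t)}) + \lambda_{t'}\Omega(\beta^{(\lambda_t)}) + \epsilon_c$, and since $\lambda_{t'} \le \lambda_t^{\ell} = \lambda_t(1-\rho_t^{\ell})$ one relates $\lambda_{t'}\Omega(\beta^{(\lambda_t)})$ back to quantities at $\lambda_t$; combined with $\lambda_{t'}\Omega(\beta^{(\lambda_{t'})}) \ge 0$ and $\Gap_t \le \epsilon_c$, the penalty term at $\lambda_t$ is itself bounded in terms of $\epsilon_c$ and $\rho_t^{\ell}$, which is where the ${2\epsilon_c}/{\rho_t^{\ell}(\epsilon)}$ factor in $\widetilde R_t$ comes from. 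This gives $\normin{\nabla f(X\beta^{(\lambda_{t'})})}_* \le \widetilde R_t$. Next, for $\Delta_{t'} = f(X\beta^{(\lambda_{t'})}) - f(\nabla f^*(\zeta_{t'}))$: using $\mathcal{U}_f$-convexity of $f$ one bounds this difference of function values by a product of $\normin{\nabla f(X\beta^{(\lambda_{t'})})}_*$ (already bounded by $\widetilde R_t$) and $\mathcal{U}_f^{-1}$ of the optimization residual, which is at most $\epsilon_c$; this yields $\Delta_{t'} \le \widetilde\Delta_t := \widetilde R_t\cdot\mathcal{U}_f^{-1}(\epsilon_c)$. Finally, plugging $\Gap_{t'} \le \epsilon_c$, $\Delta_{t'} \le \widetilde\Delta_t$, and $\normin{\zeta_{t'}}_* \le \widetilde R_t$ into $Q_{t',\mathcal{V}_{f^*}}(\rho)$, and using that $\rho\mapsto \rho\cdot(\widetilde\Delta_t - \epsilon_c)$ is monotone together with monotonicity of the scalar profile $\mathcal{V}_{f^*}$ in $|\rho|$, gives $Q_{t',\mathcal{V}_{f^*}}(\rho) \le \widetilde Q_{t,\mathcal{V}_{f^*}}(\rho) = \epsilon_c + \rho\cdot(\widetilde\Delta_t - \epsilon_c) + \mathcal{V}_{f^*}(|\rho|\cdot\widetilde R_t)$, and the chain $\Gap_{\lambda}(\beta^{(\lambda_{t'})},\theta^{(\lambda_{t'})}) \le Q_{t',\mathcal{V}_{f^*}}(\rho) \le \widetilde Q_{t,\mathcal{V}_{f^*}}(\rho)$ follows, the first inequality being exactly Lemma~\ref{lm:tracking_gap_regularization_parameter}.

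The main obstacle I anticipate is the gradient bound: making precise the passage from "$f$ uniformly smooth" to a quantitative bound $\normin{\nabla f(X\beta^{(\lambda_{t'})})}_* \le {\mathcal{V}_f^*}^{-1}(f(X\beta^{(\lambda_{t'})}) + \text{const})$, and in particular pinning down the $2\epsilon_c/\rho_t^{\ell}$ constant — this requires carefully chaining the $\epsilon_c$-optimality of $\beta^{(\lambda_{t'})}$, the relation $\lambda_{t'} \in [\lambda_{\min}, \lambda_t(1-\rho_t^{\ell})]$, and a bound on $\Omega(\beta^{(\lambda_t)})$ obtained from $\Gap_t \le \epsilon_c$ and the dual feasibility of $\theta^{(\lambda_t)}$. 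The remaining steps (the $\Delta_{t'}$ bound via $\mathcal{U}_f$-convexity, and the final substitution with monotonicity arguments) are comparatively routine once the scalar-profile conjugacy facts are in place.
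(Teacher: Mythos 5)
Your overall route is the same as the paper's: (i) convert uniform smoothness of $f$ into the conjugacy bound $\mathcal{V}_{f}^{*}(-\nabla f(x)) \leq f(x) - \inf f$, then control $f(X\beta^{(\lambda_{t'})})$ by $f(X\beta^{(\lambda_{t})})$ plus a term of order $\epsilon_c/\rho_{t}^{\ell}(\epsilon)$; (ii) bound $\Delta_{t'}$ by convexity, H\"older's inequality, and the lower gap bound $\mathcal{U}_{f}(X\beta - \nabla f^*(-\lambda\theta)) \leq \Gap_{\lambda}(\beta,\theta)$, which gives exactly $\widetilde R_{t}\cdot\mathcal{U}_{f}^{-1}(\epsilon_c)$; (iii) substitute into $Q_{t',\mathcal{V}_{f^*}}$. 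Your steps (ii) and (iii), as well as the observation that $\normin{\zeta_{t'}}_* \leq \normin{\nabla f(X\beta^{(\lambda_{t'})})}_*$ by construction of the dual point, match the paper's proof.

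The gap is in step (i), precisely where you flag the difficulty. Your proposed mechanism --- bounding $\lambda_{t'}\Omega(\beta^{(\lambda_t)})$, or ``the penalty term at $\lambda_t$'', by a quantity of order $\epsilon_c$ and $\rho_t^{\ell}$ using $\Gap_t\le\epsilon_c$ and dual feasibility --- does not work. The duality gap controls the Fenchel gap $\Omega(\beta^{(\lambda_t)}) + \Omega^{*}(X^\top\theta^{(\lambda_t)}) - \langle \beta^{(\lambda_t)}, X^\top\theta^{(\lambda_t)}\rangle$, not $\Omega(\beta^{(\lambda_t)})$ itself, which is generically of order $P_{\lambda_t}(\beta^{(\lambda_t)})/\lambda_t$ and in no way small; moreover, discarding $\lambda_{t'}\Omega(\beta^{(\lambda_{t'})})\ge 0$ on the left-hand side throws away the term you need. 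The paper's argument (its lemma on the variation of the loss along the path) keeps both penalty terms and makes them cancel: write $f(X\beta^{(\lambda_t)}) + \lambda_{t'}\Omega(\beta^{(\lambda_t)}) = \tfrac{\lambda_{t'}}{\lambda_t}P_{\lambda_t}(\beta^{(\lambda_t)}) + \bigl(1-\tfrac{\lambda_{t'}}{\lambda_t}\bigr)f(X\beta^{(\lambda_t)})$, compare $P_{\lambda_t}(\beta^{(\lambda_t)})$ to $P_{\lambda_t}(\beta^{(\lambda_{t'})})$ via $\epsilon_c$-optimality, and subtract $\lambda_{t'}\Omega(\beta^{(\lambda_{t'})})$ from both sides. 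The $\Omega$ terms cancel and one is left with $\bigl(1-\tfrac{\lambda_{t'}}{\lambda_t}\bigr)\bigl(f(X\beta^{(\lambda_{t'})}) - f(X\beta^{(\lambda_t)})\bigr) \le \Gap_{t'} + \tfrac{\lambda_{t'}}{\lambda_t}\Gap_t \le 2\epsilon_c$; dividing by $1-\lambda_{t'}/\lambda_t \ge \rho_t^{\ell}(\epsilon)$ produces the $2\epsilon_c/\rho_t^{\ell}(\epsilon)$ appearing in $\widetilde R_t$. With that replacement your outline goes through.
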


Let us now define $\rho_{t}^{(b)}(\epsilon) = \displaystyle\frac{\rho_{t}^{\ell}(\epsilon) + \tilde \rho_{t}^{r}(\epsilon)}{1 + \tilde \rho_{t}^{r}(\epsilon)} $, where $\rho_{t}^{\ell}(\epsilon)$ is defined in \Cref{prop:grid_for_a_prescribed_precision} and $\tilde \rho_{t}^{r}(\epsilon)$ is the largest non negative $\rho$ such that $\widetilde Q_{t, \mathcal V_{f^*}}(\rho) \leq \epsilon$ in \Cref{lm:bounds_gradient_and_gap}.

\begin{proposition}[Bilateral Approximation Path]\label{prop:bilateral}
Assume that $f$ is uniformly convex and smooth.
We construct the grid $\Lambda^{(b)}(\epsilon) = \{\lambda_0, \ldots, \lambda_{T_{\epsilon}-1} \}$ by
$$\lambda_0 = \lambda_{\max},\quad  \lambda_{t + 1} = \lambda_{t} \times (1 - \rho_{t}^{(b)}(\epsilon)) \enspace,$$
 and $(\beta^{(\lambda_{t})}, \theta^{(\lambda_{t})})$ s.t. $\Gap_t \leq \epsilon_{c} < \epsilon$ for all $t$.
Then the set $\{\beta^{(\lambda_{t})} \;:\; \lambda_{t} \in \Lambda^{(b)}(\epsilon)\}$ is an $\epsilon$-path for Problem~\eqref{eq:primal_problem}.
\end{proposition}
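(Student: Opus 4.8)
By the duality gap certificate~\eqref{eq:dual_gap_certificate} together with the feasibility of $\theta^{(\lambda_t)}$ from~\Cref{lem:dual_point}, it is enough to prove that for every $\lambda\in[\lambdamin,\lambdamax]$ there is a grid index $t$ with $\Gap_\lambda(\beta^{(\lambda_t)},\theta^{(\lambda_t)})\le\epsilon$: this immediately gives $P_\lambda(\beta^{(\lambda_t)})-P_\lambda(\tbeta{\lambda})\le\epsilon$, i.e.\ membership of $\beta^{(\lambda_t)}$ in an $\epsilon$-path. So I introduce the \emph{coverage set} $A_t:=\{\lambda>0:\ \Gap_\lambda(\beta^{(\lambda_t)},\theta^{(\lambda_t)})\le\epsilon\}$, and the goal becomes $[\lambdamin,\lambdamax]\subseteq\bigcup_t A_t$. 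Since $f$ is uniformly convex and smooth, $f^*$ is $\mathcal V_{f^*}$-smooth with $\mathcal V_{f^*}$ convex nondecreasing, and $f$ is both uniformly smooth and uniformly convex, so both~\Cref{prop:grid_for_a_prescribed_precision} and~\Cref{lm:bounds_gradient_and_gap} are available.

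First I would record the \emph{left} coverage of each grid point: \Cref{prop:grid_for_a_prescribed_precision} gives directly $\lambda_t\bigl[1-\rho_t^{\ell}(\epsilon),\,1+\rho_t^{r}(\epsilon)\bigr]\subseteq A_t$, in particular $\lambda_{t}(1-\rho_t^{\ell}(\epsilon))\in A_t$. Next comes the key point, the \emph{a priori right} coverage of the \emph{next} point $\lambda_{t+1}=\lambda_t\,(1-\rho_t^{(b)}(\epsilon))$. Since $\lambda_{t+1}\le\lambda_t$, I apply~\Cref{lm:bounds_gradient_and_gap} with $t'=t+1$: for any $\lambda$ and $\rho=1-\lambda/\lambda_{t+1}$ one has $\Gap_\lambda(\beta^{(\lambda_{t+1})},\theta^{(\lambda_{t+1})})\le\widetilde Q_{t,\mathcal V_{f^*}}(\rho)$. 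The map $\rho\mapsto\widetilde Q_{t,\mathcal V_{f^*}}(\rho)=\epsilon_c+\rho(\widetilde\Delta_t-\epsilon_c)+\mathcal V_{f^*}(|\rho|\widetilde R_t)$ is convex and equals $\epsilon_c<\epsilon$ at $\rho=0$, so by the very definition of $\tilde\rho_t^{r}(\epsilon)$ it stays $\le\epsilon$ on the whole $\rho$-segment joining $0$ to the value attached to $\lambda=\lambda_{t+1}(1+\tilde\rho_t^{r}(\epsilon))$; hence $\bigl[\lambda_{t+1},\,\lambda_{t+1}(1+\tilde\rho_t^{r}(\epsilon))\bigr]\subseteq A_{t+1}$. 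Crucially, this bound uses only quantities computable from $\beta^{(\lambda_t)}$ — $\widetilde R_t$ and $\widetilde\Delta_t$ replace the unknown $\normin{\nabla f(X\beta^{(\lambda_{t+1})})}_*$ and $\Delta_{t+1}$ — so it is legitimate to fix $\lambda_{t+1}$ using it before solving Problem~\eqref{eq:primal_problem} at $\lambda_{t+1}$.

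Then I would invoke the gluing identity: from $\rho_t^{(b)}(\epsilon)=\dfrac{\rho_t^{\ell}(\epsilon)+\tilde\rho_t^{r}(\epsilon)}{1+\tilde\rho_t^{r}(\epsilon)}$ one gets $1-\rho_t^{(b)}(\epsilon)=\dfrac{1-\rho_t^{\ell}(\epsilon)}{1+\tilde\rho_t^{r}(\epsilon)}$, whence $\lambda_{t+1}\bigl(1+\tilde\rho_t^{r}(\epsilon)\bigr)=\lambda_t\bigl(1-\rho_t^{\ell}(\epsilon)\bigr)$. Thus the right end of the interval covered by $\beta^{(\lambda_{t+1})}$ coincides exactly with the left end of the interval covered by $\beta^{(\lambda_t)}$, so $A_t\cup A_{t+1}\supseteq[\lambda_{t+1},\lambda_t(1+\rho_t^r(\epsilon))]\supseteq[\lambda_{t+1},\lambda_t]$. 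Chaining this from $\lambda_0=\lambdamax$ down the grid, $\bigcup_{t=0}^{T_\epsilon-1}A_t\supseteq[\lambda_{T_\epsilon-1},\lambdamax]$; since the recursion is stopped exactly when $\lambda_{T_\epsilon-1}\le\lambdamin$ and, by the left-coverage step, $A_{T_\epsilon-1}$ also reaches below $\lambda_{T_\epsilon-1}$, we obtain $[\lambdamin,\lambdamax]\subseteq\bigcup_tA_t$, which is the claim. (Finiteness of $T_\epsilon$ is a side remark: $\rho_t^{(b)}(\epsilon)\ge\rho_t^{\ell}(\epsilon)$ is bounded below because $Q_{t,\mathcal V_{f^*}}(0)=\Gap_t\le\epsilon_c<\epsilon$ and $Q_{t,\mathcal V_{f^*}}$ is continuous, so the $\lambda_t$ decrease at a controlled geometric rate.)

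\textbf{Main obstacle.} Everything above is essentially bookkeeping once~\Cref{lm:bounds_gradient_and_gap} is granted; the genuinely delicate point is precisely the one that lemma handles — certifying the right-hand coverage of $\beta^{(\lambda_{t+1})}$ up to $\lambda_t(1-\rho_t^\ell(\epsilon))$ \emph{a priori}, i.e.\ before this solution is computed, by replacing all solution-dependent constants in $Q_{t+1,\mathcal V_{f^*}}$ with the computable surrogates $\widetilde R_t,\widetilde\Delta_t$. A secondary, minor care point is the sign/monotonicity convention: one must check that ``$\widetilde Q_{t,\mathcal V_{f^*}}\le\epsilon$ at the interval endpoint'' propagates to the whole sub-interval, which follows from convexity of $\widetilde Q_{t,\mathcal V_{f^*}}(\cdot)$ together with $\widetilde Q_{t,\mathcal V_{f^*}}(0)=\epsilon_c<\epsilon$ — and likewise for $Q_{t,\mathcal V_{f^*}}$ inside~\Cref{prop:grid_for_a_prescribed_precision}.
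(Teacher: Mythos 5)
Your proof is correct and takes exactly the route the paper intends: left coverage of each grid point from \Cref{prop:grid_for_a_prescribed_precision}, a priori right coverage of the next point via the surrogate bound $\widetilde Q_{t,\mathcal V_{f^*}}$ of \Cref{lm:bounds_gradient_and_gap}, and the gluing identity $\lambda_{t+1}\bigl(1+\tilde\rho_t^{r}(\epsilon)\bigr)=\lambda_t\bigl(1-\rho_t^{\ell}(\epsilon)\bigr)$. The paper gives no standalone proof of this proposition --- it relies on precisely the discussion preceding the statement together with \Cref{lm:bounds_gradient_and_gap} --- so your write-up simply makes that bookkeeping explicit.
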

This strategy is illustrated in \Cref{subfig:approximation_gap_lasso_bilat} on a Lasso example.

\paragraph{Uniform unilateral and bilateral.}
In some cases, it may be advantageous to have access to a predefined grid before launching a hyperparameter selection procedure such as \texttt{hyperband} \citep{Li_Jamieson_DeSalvo_Rostamizadeh_Talwakar17} or for parallel computations. Given the initial information from the initialization $(\beta^{(\lambda_0)}, \theta^{(\lambda_0)})$, we can build a uniform grid that guarantees an $\epsilon$-approximation before solving any optimization problem.
Indeed, by applying \Cref{lm:bounds_gradient_and_gap} at $t=0$, we have $\Gap_{\lambda}(\beta^{(\lambda_{t})}, \theta^{(\lambda_{t})}) \leq \widetilde Q_{0, \mathcal{V}_{f^*}}(\rho)$. We can define $\widetilde \rho_{0}^{\ell}(\epsilon)$ (resp. $\widetilde \rho_{0}^{r}(\epsilon)$) as the largest non-negative $\rho$ s.t. $\widetilde Q_{0, \mathcal{V}_{f^*}}(\rho)\leq \epsilon$ (resp. $\widetilde Q_{0, \mathcal{V}_{f^*}}(-\rho)\leq \epsilon$) and also
\begin{equation}
\rho_{0}(\epsilon) =
\begin{cases}
\widetilde\rho_{0}^{\ell}(\epsilon) &\text{for \emph{unilateral} path},\\
\frac{\widetilde\rho_{0}^{\ell}(\epsilon) + \widetilde\rho_{0}^{r}(\epsilon)}{1 + \tilde \rho_{0}^{r}(\epsilon)} &\text{for \emph{bilateral} path}.
\end{cases}
\end{equation}

\begin{proposition}[Uniform approximation path]$\,$\\ \label{prop:uniform}
Assume that $f$ is uniformly convex and smooth, and define the grid $\Lambda^{(0)}(\epsilon) = \{\lambda_0, \ldots, \lambda_{T_{\epsilon}-1} \}$ by
$$\lambda_0 = \lambda_{\max},\quad  \lambda_{t + 1} = \lambda_{t} \times (1 - \rho_{0}(\epsilon))\enspace,$$
and $\forall t \in [T], (\beta^{(\lambda_{t})}, \theta^{(\lambda_{t})})$ s.t. $\Gap_t \leq \epsilon_{c} < \epsilon$.
Then the set $\{\beta^{(\lambda_{t})} \;:\; \lambda_{t} \in \Lambda^{(0)}(\epsilon)\}$ is an $\epsilon$-path for Problem~\eqref{eq:primal_problem}
 with at most $T_{\epsilon}$ grid points where
 \begin{equation}
 T_{\epsilon} = \left\lfloor \frac{\log({\lambdamin}/{\lambdamax})}{\log(1 - \rho_{0}(\epsilon))} \right\rfloor \enspace.
 \end{equation}
\end{proposition}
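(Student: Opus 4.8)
The statement splits into two independent claims: (i) that $\Lambda^{(0)}(\epsilon)$ is an $\epsilon$-path, and (ii) the cardinality bound. Claim (i) is essentially a corollary of \Cref{lm:bounds_gradient_and_gap} used at $t=0$: that lemma produces, from the single initialization pair $(\beta^{(\lambda_0)},\theta^{(\lambda_0)})$, a one–dimensional function $\widetilde{Q}_{0,\mathcal{V}_{f^*}}$ that dominates $\Gap_{\lambda}(\beta^{(\lambda_t)},\theta^{(\lambda_t)})$ simultaneously at \emph{every} grid point $\lambda_t\le\lambda_0$, with tracking variable $\rho=1-\lambda/\lambda_t$ (this is exactly the inequality $\Gap_{\lambda}(\beta^{(\lambda_t)},\theta^{(\lambda_t)})\le\widetilde{Q}_{0,\mathcal{V}_{f^*}}(\rho)$ already noted before the proposition). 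The first thing I would record is that $\widetilde{Q}_{0,\mathcal{V}_{f^*}}(\rho)=\epsilon_c+\rho\,(\widetilde{\Delta}_0-\epsilon_c)+\mathcal{V}_{f^*}(|\rho|\,\widetilde{R}_0)$ is convex in $\rho$ (an affine term plus a term convex and nondecreasing in $|\rho|$, since $\mathcal{V}_{f^*}$ is convex and nondecreasing — a power function in the uniformly convex setting) and equals $\epsilon_c<\epsilon$ at $\rho=0$. Hence $\{\rho:\widetilde{Q}_{0,\mathcal{V}_{f^*}}(\rho)\le\epsilon\}$ is a closed interval containing $0$, so $\widetilde{Q}_{0,\mathcal{V}_{f^*}}(\rho)\le\epsilon$ on $[0,\widetilde{\rho}_0^{\ell}(\epsilon)]$ and $\widetilde{Q}_{0,\mathcal{V}_{f^*}}(-\rho)\le\epsilon$ on $[0,\widetilde{\rho}_0^{r}(\epsilon)]$. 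This ``monotone interval'' property is the one place that needs genuine care; the rest is geometry.

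Next I would verify coverage cell by cell. Fix $\lambda\in[\lambda_{\min},\lambda_{\max}]$ and let $[\lambda_{t+1},\lambda_t]$ be the cell of $\Lambda^{(0)}(\epsilon)$ containing it. In the \emph{unilateral} case $\rho_0(\epsilon)=\widetilde{\rho}_0^{\ell}(\epsilon)$, so $\rho:=1-\lambda/\lambda_t\in[0,\widetilde{\rho}_0^{\ell}(\epsilon)]$; combining the bound of \Cref{lm:bounds_gradient_and_gap} with the previous paragraph gives $\Gap_{\lambda}(\beta^{(\lambda_t)},\theta^{(\lambda_t)})\le\widetilde{Q}_{0,\mathcal{V}_{f^*}}(\rho)\le\epsilon$, and then \eqref{eq:dual_gap_certificate} shows $\beta^{(\lambda_t)}$ is an $\epsilon$-solution for $\lambda$. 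In the \emph{bilateral} case the definition $\rho_0(\epsilon)=\big(\widetilde{\rho}_0^{\ell}(\epsilon)+\widetilde{\rho}_0^{r}(\epsilon)\big)/\big(1+\widetilde{\rho}_0^{r}(\epsilon)\big)$ satisfies the algebraic identity $(1-\rho_0(\epsilon))\,(1+\widetilde{\rho}_0^{r}(\epsilon))=1-\widetilde{\rho}_0^{\ell}(\epsilon)$, i.e. $\lambda_{t+1}(1+\widetilde{\rho}_0^{r}(\epsilon))=\lambda_t(1-\widetilde{\rho}_0^{\ell}(\epsilon))$, so that $[\lambda_{t+1},\lambda_t]$ is exactly tiled by $[\lambda_{t+1},\lambda_{t+1}(1+\widetilde{\rho}_0^{r}(\epsilon))]$ and $[\lambda_t(1-\widetilde{\rho}_0^{\ell}(\epsilon)),\lambda_t]$; for $\lambda$ in the first piece apply \Cref{lm:bounds_gradient_and_gap} at $t=0$ to $(\beta^{(\lambda_{t+1})},\theta^{(\lambda_{t+1})})$ with $-\rho=\lambda/\lambda_{t+1}-1\in[0,\widetilde{\rho}_0^{r}(\epsilon)]$, for $\lambda$ in the second piece use $(\beta^{(\lambda_t)},\theta^{(\lambda_t)})$ as above; either way $\Gap_{\lambda}\le\epsilon$, so the set contains an $\epsilon$-solution for $\lambda$. (The cell adjacent to $\lambda_{\min}$ is handled by the left reach of the last point alone, which only perturbs the count by an additive constant.)

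For (ii), the grid is geometric, $\lambda_t=\lambda_{\max}(1-\rho_0(\epsilon))^t$, with $\rho_0(\epsilon)\in(0,1)$: positivity because $\widetilde{Q}_{0,\mathcal{V}_{f^*}}(0)=\epsilon_c<\epsilon$ forces $\widetilde{\rho}_0^{\ell}(\epsilon)>0$, and $\rho_0(\epsilon)<1$ since only $\lambda\ge\lambda_{\min}>0$ must be reached (equivalently $\rho<1$). The construction may stop once the covered range reaches $\lambda_{\min}$, i.e.\ once $\lambda_{\max}(1-\rho_0(\epsilon))^{T}\le\lambda_{\min}$; taking logarithms and dividing by the negative quantity $\log(1-\rho_0(\epsilon))$ shows that the smallest such $T$, rounded to an integer, is the announced $T_\epsilon=\big\lfloor \log(\lambda_{\min}/\lambda_{\max})/\log(1-\rho_0(\epsilon))\big\rfloor$.

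\textbf{Main obstacle.} There is no deep difficulty here — the proposition is a packaging of \Cref{lm:bounds_gradient_and_gap} plus an elementary geometric-sequence count. The two delicate points are: (a) showing the sublevel sets of $\rho\mapsto\widetilde{Q}_{0,\mathcal{V}_{f^*}}(\pm\rho)$ are intervals, so that ``the largest $\rho$ with $\widetilde{Q}_{0,\mathcal{V}_{f^*}}(\rho)\le\epsilon$'' certifies every intermediate parameter in a cell and not merely the cell endpoint — this rests on convexity/monotonicity of $\mathcal{V}_{f^*}$; and (b) in the bilateral variant, the tiling identity for $\rho_0(\epsilon)$ together with the treatment of the final cell near $\lambda_{\min}$, which affects the cardinality only by an $O(1)$ term.
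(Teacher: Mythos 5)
Your proof is correct and follows essentially the same route as the paper's (much terser) argument: the paper simply asserts that by construction each cell $[\lambda_{t+1},\lambda_t]$ is certified and then counts the geometric sequence, whereas you additionally spell out why — via the convexity of $\rho\mapsto\widetilde{Q}_{0,\mathcal{V}_{f^*}}(\rho)$ from \Cref{lm:bounds_gradient_and_gap}, the interval structure of its $\epsilon$-sublevel set, and the bilateral tiling identity. These filled-in details (and your remark that the last cell and the floor in $T_\epsilon$ only matter up to an $O(1)$ term) are consistent with, and somewhat more careful than, the paper's own proof.
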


\subsection{Limitations of previous framework}

Previous algorithms for computing $\epsilon$-paths have been initially developed with a complexity of $O(1/\epsilon)$ \citep{Clarkson10, Giesen_Jaggi_Laue10} in a large class of problems.
Yet, losses arising in machine learning have often nicer regularities that can be exploited.
This is all the more striking in the Lasso case where a better complexity in $O(1/\sqrt{\epsilon})$ was obtained by \citet{Mairal_Yu12, Giesen_Laue_Mueller_Swiercy12}.

The relation between path complexity and regularity of the objective function remains unclear and previous methods do not apply to all popular learning problems.
For instance  the dual loss $f^*$ of the logistic regression is not uniformly smooth.
So to apply the previous theory, one needs to optimize on a (potentially badly pre-selected) compact set.

Let us consider the one dimensional toy example where $p=1$, $\beta\in\bbR$, $X=\mathrm{Id}_p$, $y=-1$ and the loss function $f(X\beta) = \log(1+\exp(\beta))$.
We have, $\nabla^2 f(\beta) = \exp(\beta)/(1 + \exp(\beta))^2$.
Then for Problem~\eqref{eq:primal_problem}, since $P_{\lambda}(\tbeta{\lambda}) \leq P_{\lambda}(0)$, we have $|\tbeta{\lambda}| \in [0, \log(2)/\lambda]$ and a smoothness constant $\nu_{f^*} \approx \exp(\lambda)$ for the dual can be estimated at each step.
This leads to an unreasonable algorithm with tiny step sizes in \Cref{cor:quadratic_step_size}.
Also, the algorithm proposed by \citet{Giesen_Laue_Mueller_Swiercy12} can not be applied for the logistic loss since the dual function is not polynomial.

Our proposed algorithm does not suffer from such limitations and we introduce a finer analysis that takes into account the regularity of the loss functions.

\subsection{Complexity and regularity}
\label{subsec:Complexity_and_Regularity}

\paragraph{Lower bound on path complexity.}
For our method, the lower bound on the duality gap quantifies how close the  from \Cref{prop:grid_for_a_prescribed_precision} is from the best possible one can achieve for smooth loss functions.
Indeed, at optimal solution, we have $\Gap_{t} = \Delta_{t} = 0$.
Thus the largest possible step --- starting at $\lambda_t$ and moving in decreasing order --- is given by the smallest $\lambda \in [\lambda_{\min},\lambda_t]$ such that $\mathcal{U}_{f^*}(-\hat \zeta_t \times \rho) > \epsilon$ where $\hat \zeta_t = -\lambda_t \ttheta{\lambda_t}$.
Hence, \textit{any} algorithm for computing an $\epsilon$-path
for $\mathcal{U}_{f^*}$-uniformly convex dual loss, have necessarily a complexity of order at least $O(1/\mathcal{U}_{f^*}^{-1}(\epsilon))$.


\paragraph{Upper bounds.}
We remind that we write $T_{\epsilon}$ for the complexity of our proposed approximation path \ie the cardinality of the grid returned by \Cref{alg:adaptive_training_path}. In the following proposition, we propose a bound on the complexity \wrt the regularity of the loss function. Discussions on the constants and assumptions are provided in the Appendix.

\begin{proposition}[Approximation path: complexity] \quad \\
\label{prop:approx_path_complexity}
Assuming that $\max(\Gap_t, \Delta_t) \leq \epsilon_{c} < \epsilon$ at each step $t$, there exists an explicit constant $C_{f}(\epsilon_{c})>0$ such that
\begin{align}
T_{\epsilon} & \leq \log\Big(\frac{\lambdamax}{\lambdamin}\Big) \times \frac{C_{f}(\epsilon_{c})}{\mathcal{W}_{f^*}(\epsilon - \epsilon_{c})} \enspace,
\end{align}
where for all $t>0$, the function $\mathcal{W}_{f^*}$ is defined by
\begin{align*}
\mathcal{W}_{f^*} (\cdot)=\begin{cases}
\mathcal{V}_{f^*}^{-1}(\cdot), &  \!\!\!\text{if } f  \text{ is uniformly convex and smooth}\\
\sqrt{\cdot}, & \!\!\! \text{if } f \text{ is Generalized Self-Concordant} \\& \text{and uniformly-smooth.}
\end{cases}
\end{align*}
Moreover, $C_{f}(\epsilon_{c})$ is an uniform upper bound of $\norm{\zeta_t}_*$ along the path, that tends to a constant $C_f$ when $\epsilon_{c}$ goes to $0$.
\end{proposition}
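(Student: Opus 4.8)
The plan is to reduce the bound on $T_\epsilon$ to a uniform-in-$t$ lower bound on the relative step sizes $\rho_t^\ell(\epsilon)$ produced by \Cref{alg:adaptive_training_path}, and then to turn the multiplicative decay $\lambda_{t+1}=\lambda_t\,(1-\rho_t^\ell(\epsilon))$ into an additive count. First I would unroll this recursion up to the stopping time (so that $\lambda_{\max}\prod_{s<T_\epsilon-1}(1-\rho_s^\ell(\epsilon))=\lambda_{T_\epsilon-1}>\lambda_{\min}$), which gives $\sum_{s<T_\epsilon-1}\log\!\big(1-\rho_s^\ell(\epsilon)\big)>\log(\lambda_{\min}/\lambda_{\max})$; since $-\log(1-x)\ge x$ on $[0,1)$ this yields $(T_\epsilon-1)\,\min_t\rho_t^\ell(\epsilon)<\log(\lambda_{\max}/\lambda_{\min})$, and the stated form follows after absorbing the lower-order $+1$ into the constant. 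The bilateral and uniform variants only match or improve these step sizes ($\rho_t^{(b)}(\epsilon)\ge\rho_t^\ell(\epsilon)$ whenever $\rho_t^\ell(\epsilon)\le1$, and $\rho_0(\epsilon)$ is lower bounded by the analogous quantity built from $\widetilde Q_{0,\mathcal V_{f^*}}$), so the same final bound applies. Everything thus reduces to lower bounding $\rho_t^\ell(\epsilon)$, which is where the regularity of $f$ enters.

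For the step size I would start from \Cref{prop:grid_for_a_prescribed_precision}: $\rho_t^\ell(\epsilon)$ is the largest $\rho\ge0$ with $Q_{t,\mathcal V_{f^*}}(\rho)=\Gap_t+\rho(\Delta_t-\Gap_t)+\mathcal V_{f^*}(-\rho\,\zeta_t)\le\epsilon$. Under the assumption $\max(\Gap_t,\Delta_t)\le\epsilon_c$, for $\rho\in[0,1]$ the affine part is the convex combination $(1-\rho)\Gap_t+\rho\Delta_t\le\epsilon_c$, so any $\rho\in[0,1]$ satisfying $\mathcal V_{f^*}(-\rho\,\zeta_t)\le\epsilon-\epsilon_c$ is feasible; hence $\rho_t^\ell(\epsilon)\ge\min\{1,\bar\rho_t\}$ where $\bar\rho_t:=\max\{\rho\ge0:\mathcal V_{f^*}(-\rho\,\zeta_t)\le\epsilon-\epsilon_c\}$. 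Note that $\bar\rho_t$ is exactly the largest feasible step at an exact solution ($\Gap_t=\Delta_t=0$), so this estimate is essentially tight against the lower bound on path complexity discussed just before the proposition, \ie the factor $\mathcal W_{f^*}(\epsilon-\epsilon_c)$ is unavoidable.

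It then remains to estimate $\bar\rho_t$ in the two regimes. When $f$ is uniformly convex and smooth, $\mathcal V_{f^*}(-\rho\,\zeta_t)=\mathcal V_{f^*}\!\big(\rho\,\norm{\zeta_t}_*\big)$ with $\mathcal V_{f^*}$ an increasing, invertible scalar function, so $\bar\rho_t=\mathcal V_{f^*}^{-1}(\epsilon-\epsilon_c)/\norm{\zeta_t}_*$; taking $C_f(\epsilon_c):=\max\{\sup_t\norm{\zeta_t}_*,\ \mathcal V_{f^*}^{-1}(\epsilon-\epsilon_c)\}$ gives $\rho_t^\ell(\epsilon)\ge\mathcal V_{f^*}^{-1}(\epsilon-\epsilon_c)/C_f(\epsilon_c)$ and the claimed bound with $\mathcal W_{f^*}=\mathcal V_{f^*}^{-1}$. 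When $f$ is Generalized Self-Concordant and uniformly smooth, I would first argue that along the whole grid $\zeta_t=-\lambda_t\theta^{(\lambda_t)}$ stays inside a fixed compact subset of $\mathrm{int}\,\dom f^*$ (using uniform smoothness of $f$ together with the explicit rescaled dual construction of \Cref{lem:dual_point}); then, using the closed form $\mathcal V_{f^*,\zeta_t}(-\rho\,\zeta_t)=w_\nu\!\big(\rho\,d_\nu(\zeta_t)\big)\,\rho^2\,\norm{\zeta_t}_{\zeta_t}^2$ from \citet{Sun_Tran-Dinh17} and \Cref{tab:summary}, the argument $\rho\,d_\nu(\zeta_t)$ stays below $\tfrac12$ for $\rho$ below a fixed constant, on which range $w_\nu$ is bounded by some $\bar w_\nu$, whence $\mathcal V_{f^*}(-\rho\,\zeta_t)\le\bar w_\nu\,\rho^2\,\norm{\zeta_t}_{\zeta_t}^2$; bounding the local metric $\norm{\zeta_t}_{\zeta_t}$ uniformly over the compact region and absorbing $\bar w_\nu$ into one constant $C_f(\epsilon_c)$ gives $\bar\rho_t\ge\sqrt{\epsilon-\epsilon_c}/C_f(\epsilon_c)$, \ie the bound with $\mathcal W_{f^*}=\sqrt{\cdot}$, valid once $\epsilon-\epsilon_c$ is small enough that this term is binding rather than the fixed caps. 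Finally, for the $\epsilon_c\to0$ statement I would use $\beta^{(\lambda_t)}\to\hat{\beta}^{(\lambda_{t})}$, hence $\zeta_t\to-\lambda_t\hat{\theta}^{(\lambda_{t})}$, together with boundedness of the exact dual path on $[\lambda_{\min},\lambda_{\max}]$, to get convergence of all suprema defining $C_f(\epsilon_c)$ to a finite $C_f$.

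The hard part will be the Generalized Self-Concordant case: one must (i) genuinely show the dual iterates remain in a fixed compact subset of $\mathrm{int}\,\dom f^*$ on which the local metric $\norm{\cdot}_{\zeta_t}$ and $d_\nu(\zeta_t)$ are uniformly controlled --- this is precisely where uniform smoothness of $f$ (\eg the globally bounded Hessian of the logistic loss) and the rescaled dual point of \Cref{lem:dual_point} play their role --- and (ii) verify that the $O(1/\sqrt{\epsilon})$ term dominates the fixed caps ($\rho\le1$ and $\rho\,d_\nu(\zeta_t)\le\tfrac12$) in the regime $\epsilon-\epsilon_c\to0$ of interest. By contrast, the reduction to $\min_t\rho_t^\ell(\epsilon)$ and the uniformly convex and smooth case are comparatively routine.
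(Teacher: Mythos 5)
Your proposal is correct and follows essentially the same route as the paper: telescope the multiplicative recursion to reduce everything to $\min_t\rho_t^{\ell}(\epsilon)$ via $-\log(1-x)\ge x$, lower-bound that step by the largest $\rho$ with $\mathcal V_{f^*,\zeta_t}(-\rho\,\zeta_t)\le\epsilon-\epsilon_c$ (the affine part of $Q_{t,\mathcal V_{f^*}}$ being absorbed into $\epsilon_c$), and then split into the two regularity regimes with a uniform bound on $\norm{\zeta_t}_*$ (resp.\ $\norm{\zeta_t}_{\zeta_t}$). The one mechanism you leave implicit that the paper spells out is how those uniform bounds become explicit: $\norm{\zeta_t}_*\le\widetilde R_0$ via \Cref{lem:dual_point} and \Cref{lm:bounds_gradient_and_gap}, and, for the generalized self-concordant case, $w_\nu(-d_\nu(\zeta_t))\norm{\zeta_t}_{\zeta_t}^{2}\le\bar R_0$ obtained by applying the $\mathcal U_{f^*}$-convexity inequality at $z=0$, which is exactly what furnishes the compact level set on which $d_\nu$ is bounded.
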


\Cref{prop:approx_path_complexity} applied to the special case when $f$ is $\nu$-smooth and $\mu$-strongly convex reads $\log\Big(\frac{\lambdamax}{\lambdamin}\Big) \sqrt{\frac{\nu}{\mu} \frac{f(X\beta^{(\lambda_0)})}{\epsilon - \epsilon_{c}}}$ for the complexity for any data $X, y$. This is not explicitly dependent on the dimension $n$ and $p$ and are more scalable.

\section{Validation path}
\label{sec:validation_path}

\begin{algorithm}[!t]
   \caption{$\epsilon_v$-path for Validation Set}
   \label{alg:adaptive_validation_path}
\begin{algorithmic}
   \STATE {\bfseries Input:} $f, \Omega, \epsilon_{v}, [\lambda_{\min}, \lambda_{\max}]$ \\
   \STATE Compute $\epsilon_{v, \mu}$ as in \Cref{prop:Grid_for_a_prescribed_validation_error}
   \STATE $\Lambda(\epsilon_{v,\mu}) = \texttt{training\_path}\left(f, \Omega, \epsilon_{v,\mu}, [\lambda_{\min}, \lambda_{\max}]\right)$
   \STATE {\bfseries Return:} $\Lambda(\epsilon_{v,\mu})$
\end{algorithmic}
\end{algorithm}

To achieve good generalization performance, estimators defined as solutions of Problem~\ref{eq:primal_problem} require a careful adjustment of $\lambda$ to balance data-fitting and regularization.
A standard approach to calibrate such a parameter is to select it by comparing the validation errors on a finite grid (say with K-fold cross-validation).
Unfortunately, 
it is often difficult to determine a priori the grid limits, the number of $\lambda$'s (number of points in the grid) or how they should be distributed to achieve low validation error.

Considering the validation data $({X'}, {y'})$ with $n'$ observations and loss\footnote{the data-fitting terms might differ from training to testing; for instance for logistic regression the $\ell_{0/1}$-loss is used for validation but the logistic function is optimized at training.} $\mathcal{L}$, we define the validation error for $\beta \in \bbR^p$:
\begin{equation}\label{eq:validation_error}
E_v(\beta) = \mathcal{L}({y'}, {X'} \beta) \enspace.
\end{equation}
For selecting a hyperparameter, we leverage our approximation path to solve the bi-level problem
\begin{align*}
\argmin_{\lambda \in [\lambda_{\min}, \lambda_{\max}]} &E_v(\tbeta{\lambda}) = \mathcal{L}(y', X'\tbeta{\lambda}) \\
&\text{s.t. } \tbeta{\lambda} \in \argmin_{\beta \in \bbR^p} f(X\beta) + \lambda \Omega(\beta) \enspace.
\end{align*}

Recent works have addressed this problem by using gradient-based algorithms, see for instance \citet{Pedregosa16, Franceschi_Frasconi_Salzo_Pontil18} who have shown promising results in computational time and scalability w.r.t. multiple hyperparameters.
Yet, they require assumptions such as smoothness of the validation function $E_v$ and non-singular Hessian of the inner optimization problem at optimal values which are difficult to check in practice since they depend on the optimal solutions $\tbeta{\lambda}$.
Moreover, they can only guarantee convergence to stationary point.

\begin{figure*}[!ht]
\centering
\subfigure[Synthetic data set generated using the \texttt{sklearn} command $X, y=$ \texttt{make\_sparse\_uncorrelated}$(n=30, p=50)$.]{\includegraphics[width=\columnwidth, keepaspectratio]{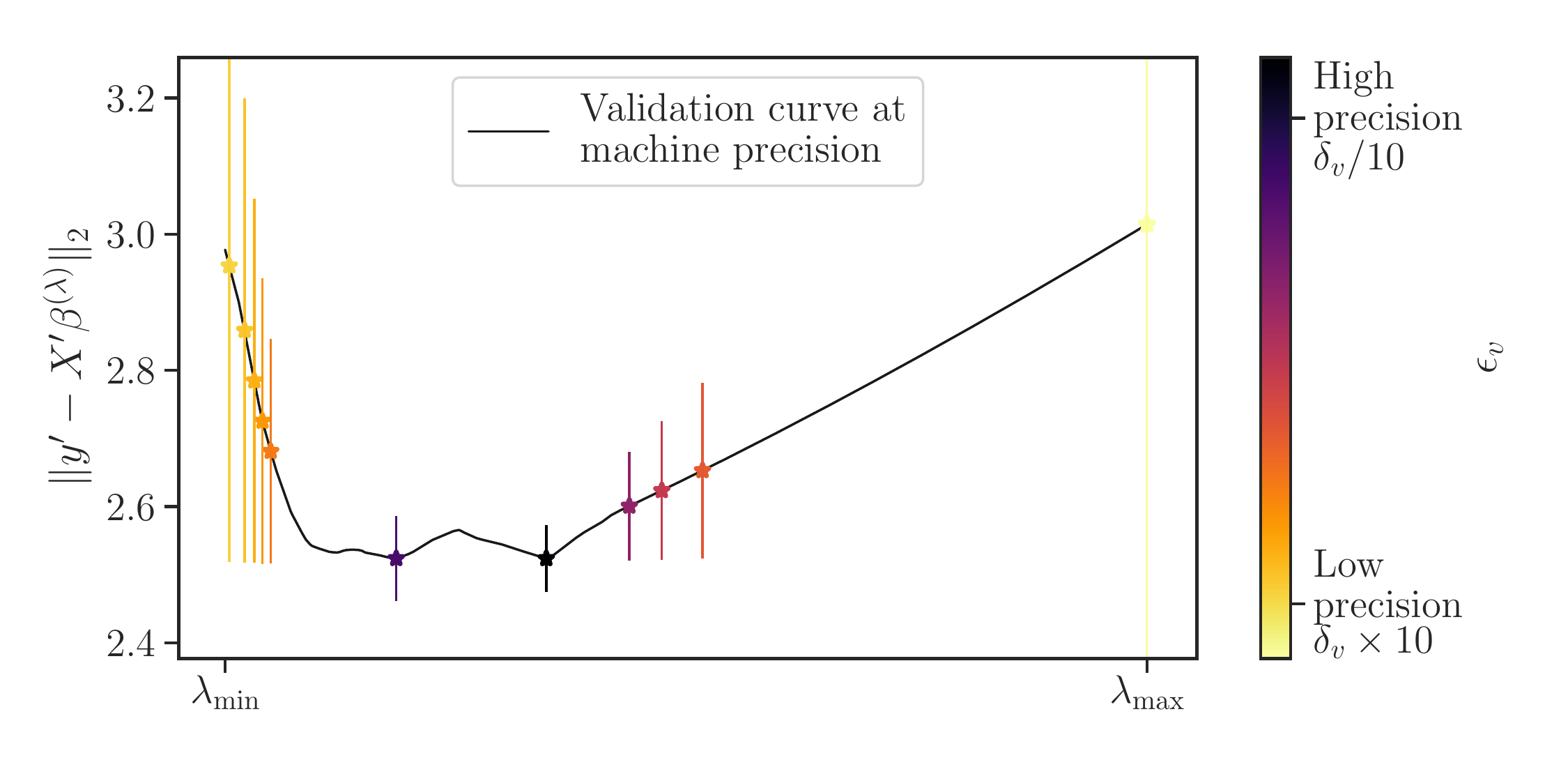}}\label{fig:validation_diabetes}
\hspace{0.25cm}
\subfigure[Synthetic data set generated using the \texttt{sklearn} command $X, y =$ \texttt{make\_regression}$(n=500, p=5000)$.]{\includegraphics[width=\columnwidth, keepaspectratio]{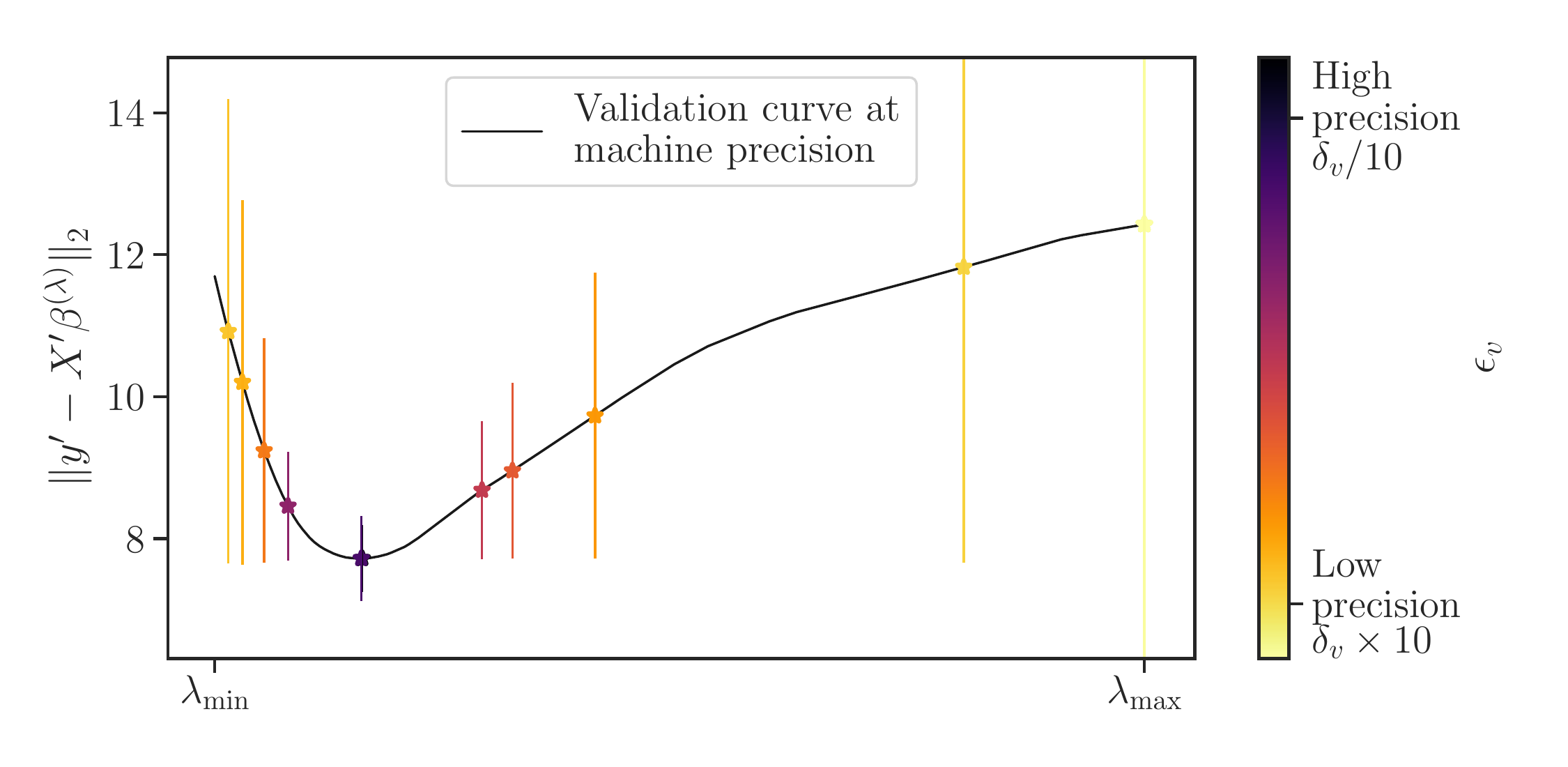}}\label{fig:validation_synthetic}
\caption{Safe selection of the optimal hyperparameter for Enet on the validation set (30\%  of the observations). The targeted accuracy $\epsilon_v$ is refined from $\delta_v \times 10$ to $\delta_v / 10$ with $\delta_v = \max_{\lambda_t \in \Lambda} E_v(\beta^{(\lambda_t)}) - \min_{\lambda_t \in \Lambda} E_v(\beta^{(\lambda_t)})$ and $\Lambda$ is the default grid between $\lambda_{\max}=\normin{X^\top y}_{\infty}$ and $\lambda_{\min} = \lambda_{\max}/100$ of size $T=200$. The stars represent the worst case solution amount the one generated by \Cref{alg:adaptive_validation_path} (with bilateral path). For loose precision suboptimal parameters are identified, but better ones are found as the accuracy $\epsilon_v$ decreases.}
\label{fig:bench_validation_path}
\end{figure*}

In this section, we generalize the approach of \cite{Shibagaki_Suzuki_Karasuyama_Takeuchi15} and show that with a safe and simple exploration of the parameter space, our algorithm has a global convergence property.
For that, we assume the following conditions on the validation loss and on the inner optimization objective throughout the section:
  {
%
%
%
\newtheorem{assumption}{A}
\let\origtheassumption\theassumption
\renewcommand\theassumption{A\arabic{assumption}}
  }
\leqnomode
\begin{align}\label{as:1}
\phantom{44}&|\mathcal{L}(a, b) - \mathcal{L}(a, c)| \leq \mathcal{L}(b, c)  \text{ for any } a,b,c \in \bbR^n. \tag*{\textbf{A1}}\\
\phantom{44}&\text{The function } \beta \mapsto P_{\lambda}(\beta) \text{ is } \mu\text{-strongly convex}. \label{as:2}\tag*{\textbf{A2}}
\end{align}
\reqnomode
The assumption on the loss function is verified for norms (regression) and indicator functions (classification).
Indeed, if $\mathcal{L}(a, b) = \normin{a - b}$, \ref{as:1} corresponds to the triangle inequality.
For the $\ell_{0/1}$-loss $\mathcal{L}(a, b) = \frac{1}{n}\sum_{i=1}^{n} \1_{a_i b_i < 0}$, since for any real $s,u$ and $v$, $|\1_{us < 0} - \1_{uv < 0}| \leq \1_{sv < 0}$, one has
$$\Big|\frac{1}{n}\sum_{i=1}^{n} \1_{a_i b_i < 0} - \frac{1}{n}\sum_{i=1}^{n} \1_{a_i c_i < 0}\Big| \leq \frac{1}{n}\sum_{i=1}^{n} \1_{b_i c_i < 0} \enspace.$$

\begin{definition} Given a primal solution $\tbeta{\lambda}$ for parameter $\lambda$ and a primal point $\beta^{(\lambda_{t})}$ returned by an algorithm, we define the gap on the validation error between $\lambda$ and $\lambda_{t}$ as
\begin{equation}
\Delta E_v(\lambda_{t}, \lambda): = \big|E_v(\tbeta{\lambda}) - E_v(\beta^{(\lambda_{t})})\big| \enspace.
\end{equation}
\end{definition}
Suppose we have fixed a tolerance $\epsilon_v$ on the gap on validation error \ie $\Delta E_v(\lambda_{t}, \lambda) \leq \epsilon_v$.
Based on Assumption \ref{as:1}, if there is a region $\mathcal{R}_{\lambda}$ that contains the optimal solution $\tbeta{\lambda}$ at parameter $\lambda$, then we have
\begin{align*}
\Delta E_v(\lambda_{t}, \lambda) &\leq \mathcal{L}({X'} \tbeta{\lambda}, {X'}\beta^{(\lambda_{t})})\\
&\leq \max_{\beta \in\mathcal{R}_{\lambda}} \mathcal{L}({X'} \beta, {X'}\beta^{(\lambda_{t})}) \enspace.
\end{align*}
A simple strategy consists in choosing $\mathcal{R}_{\lambda}$ as a ball.
\begin{lemma}[Gap safe region \protect{\citet{Ndiaye_Fercoq_Gramfort_Salmon17}}]\label{lm:safe_region}
Under \ref{as:2}, any primal solution $\tbeta{\lambda}$ belongs to the Euclidean ball with center $\beta^{(\lambda_{t})}$ and radius
\begin{equation}\label{eq:gap_radius}
r_{t, \mu}(\lambda) = \sqrt{\frac{2}{\mu}\Gap_{\lambda}(\beta^{(\lambda_{t})}, \theta^{(\lambda_{t})})} \enspace.
\end{equation}
\end{lemma}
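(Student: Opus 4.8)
The plan is to derive the radius directly from strong convexity of $P_{\lambda}$ together with the duality-gap certificate already recorded in \eqref{eq:dual_gap_certificate}. First I would invoke the optimality condition: since $\tbeta{\lambda}$ minimizes the convex function $P_{\lambda}$, we have $0 \in \partial P_{\lambda}(\tbeta{\lambda})$. Combining this with Assumption~\ref{as:2}, which states that $\beta \mapsto P_{\lambda}(\beta)$ is $\mu$-strongly convex, the standard quadratic lower bound at the minimizer gives, for every $\beta \in \bbR^p$,
\begin{equation*}
P_{\lambda}(\beta) \geq P_{\lambda}(\tbeta{\lambda}) + \frac{\mu}{2}\normin{\beta - \tbeta{\lambda}}_{2}^{2} \enspace.
\end{equation*}

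Next I would specialize this inequality to $\beta = \beta^{(\lambda_{t})}$, the primal point returned by the algorithm, which yields
\begin{equation*}
\frac{\mu}{2}\normin{\beta^{(\lambda_{t})} - \tbeta{\lambda}}_{2}^{2} \leq P_{\lambda}(\beta^{(\lambda_{t})}) - P_{\lambda}(\tbeta{\lambda}) \enspace.
\end{equation*}
Then I would bound the right-hand side using the weak-duality certificate \eqref{eq:dual_gap_certificate} applied to the primal/dual feasible pair $(\beta^{(\lambda_{t})}, \theta^{(\lambda_{t})})$, namely $P_{\lambda}(\beta^{(\lambda_{t})}) - P_{\lambda}(\tbeta{\lambda}) \leq \Gap_{\lambda}(\beta^{(\lambda_{t})}, \theta^{(\lambda_{t})})$. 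Chaining the two displays and rearranging gives $\normin{\beta^{(\lambda_{t})} - \tbeta{\lambda}}_{2}^{2} \leq \frac{2}{\mu}\Gap_{\lambda}(\beta^{(\lambda_{t})}, \theta^{(\lambda_{t})}) = r_{t,\mu}(\lambda)^{2}$, i.e. $\tbeta{\lambda}$ lies in the Euclidean ball centered at $\beta^{(\lambda_{t})}$ of radius $r_{t,\mu}(\lambda)$, as claimed.

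There is no serious obstacle here; the argument is a few lines. The only point requiring a little care is the first step: because $\Omega$ (hence $P_{\lambda}$) may be nonsmooth (e.g.\ the $\ell_1$ penalty in the Lasso), one must use the subdifferential characterization $0 \in \partial P_{\lambda}(\tbeta{\lambda})$ rather than a gradient, and the subgradient form of the strong-convexity inequality $P_{\lambda}(\beta) \geq P_{\lambda}(\tbeta{\lambda}) + \langle g, \beta - \tbeta{\lambda}\rangle + \frac{\mu}{2}\normin{\beta - \tbeta{\lambda}}_{2}^{2}$ with $g = 0 \in \partial P_{\lambda}(\tbeta{\lambda})$. A secondary remark is that the result holds for \emph{any} primal solution $\tbeta{\lambda}$; this is automatic since $\mu$-strong convexity forces uniqueness of the minimizer, but the bound itself does not rely on uniqueness. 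I would then note that this lemma is exactly the Gap Safe region of \citet{Ndiaye_Fercoq_Gramfort_Salmon17}, included here for completeness and used subsequently to bound $\Delta E_v(\lambda_t,\lambda)$ via Assumption~\ref{as:1}.
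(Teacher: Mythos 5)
Your argument is correct and is exactly the standard derivation of the Gap Safe sphere: optimality plus $\mu$-strong convexity of $P_{\lambda}$ gives the quadratic growth bound at the minimizer, and weak duality \eqref{eq:dual_gap_certificate} converts the suboptimality gap into the computable duality gap. The paper does not reproduce a proof (it defers to \citet{Ndiaye_Fercoq_Gramfort_Salmon17}), but your few-line argument, including the care taken with the subdifferential for nonsmooth $\Omega$ and the implicit use of dual feasibility of $\theta^{(\lambda_{t})}$ from \Cref{lem:dual_point}, is precisely the intended one.
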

Such a safe ball leveraging duality gap has been proved useful to speed-up sparse optimization solvers.
The improve performance relies on the ability to identify the sparsity structure of the optimal solutions; approaches of this type are referred to as \emph{safe screening rules} as they provide \emph{safe} certificates for such structures \citep{ElGhaoui_Viallon_Rabbani12, Fercoq_Gramfort_Salmon15, Shibagaki_Karasuyama_Hatano_Takeuchi16, Ndiaye_Fercoq_Gramfort_Salmon17}.

Since the radius in \Cref{eq:gap_radius} depends explicitly on the duality gap, we can sequentially track a range of parameters for which the gap on the validation error remains below a prescribed tolerance by controlling the optimization error.

\begin{proposition}[Grid for prescribed validation error]\label{prop:Grid_for_a_prescribed_validation_error}
Under Assumptions \ref{as:1} and \ref{as:2}, let us define for $i \in [n']$ an index in the test set, $\xi_i = \left(\tfrac{{x'}_{\!\!i}^{\top}\beta^{(\lambda_{t})}}{\norm{{x'}_{\!\!i}}}\right)^2$ and
\begin{equation}
\epsilon_{v,\mu} =
\begin{cases}
\frac{\mu}{2} \times \left(\frac{\epsilon_{v}}{\normin{X'}}\right)^2, & \text{(regression)}\\
 \frac{\mu}{2} \times \xi_{(\lfloor n \epsilon_v \rfloor + 1)}, & \text{(classification)}
\end{cases}
\end{equation}
where $\xi_{(\lfloor n \epsilon_v \rfloor + 1)}$ is the $(\lfloor n \epsilon_v \rfloor + 1)$-th
smallest value of $\xi_i$'s. Given $(\beta^{(\lambda_{t})}, \theta^{(\lambda_{t})})$ such that $\Gap_t \leq \epsilon_{v,\mu}$, we have $\Delta E_v(\lambda_{t}, \lambda) \leq \epsilon_v$ for all
%
parameter $\lambda$ in the interval\\
$\lambda_{t} \times \left[1 - \rho_{t}^{\ell}(\epsilon_{v,\mu}),\, 1 + \rho_{t}^{r}(\epsilon_{v,\mu}) \right],$
%
where $\rho_{t}^{\ell}(\epsilon_{v,\mu})$, $\rho_{t}^{r}(\epsilon_{v,\mu})$ are defined in \Cref{prop:grid_for_a_prescribed_precision}.
\end{proposition}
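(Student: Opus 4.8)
The plan is to chain the three ingredients already assembled: Assumption \ref{as:1} to pass from the validation-error gap to a loss between predictions, \Cref{lm:safe_region} to enclose $\tbeta{\lambda}$ in an explicit ball around $\beta^{(\lambda_t)}$, and \Cref{prop:grid_for_a_prescribed_precision} to convert a duality-gap bound into an interval of parameters. First I would write, as in the display preceding the statement, $\Delta E_v(\lambda_t,\lambda) \le \mathcal{L}({X'}\tbeta{\lambda}, {X'}\beta^{(\lambda_t)}) \le \max_{\beta \in \mathcal{R}_\lambda} \mathcal{L}({X'}\beta, {X'}\beta^{(\lambda_t)})$, and take $\mathcal{R}_\lambda$ to be the Euclidean ball of radius $r_{t,\mu}(\lambda)$ from \eqref{eq:gap_radius}; this step uses \ref{as:1} and \ref{as:2} and is essentially a restatement.

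The core of the argument is to bound the worst-case loss over that ball in the two cases. In the \emph{regression} case, $\mathcal{L}(a,b) = \norm{a-b}$, so $\max_{\norm{\beta-\beta^{(\lambda_t)}} \le r}\norm{X'\beta - X'\beta^{(\lambda_t)}} = \normin{X'}\, r_{t,\mu}(\lambda)$ by definition of the operator norm. Requiring this to be at most $\epsilon_v$ is exactly $r_{t,\mu}(\lambda) \le \epsilon_v/\normin{X'}$, i.e.\ $\Gap_\lambda(\beta^{(\lambda_t)},\theta^{(\lambda_t)}) \le \tfrac{\mu}{2}(\epsilon_v/\normin{X'})^2 = \epsilon_{v,\mu}$, by \eqref{eq:gap_radius}. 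In the \emph{classification} case, $\mathcal{L}(a,b) = \tfrac1n\sum_i \1_{a_i b_i < 0}$ counts sign disagreements between $X'\beta$ and $X'\beta^{(\lambda_t)}$; a coordinate $i$ can only flip sign if $|{x'}_i^\top(\beta - \beta^{(\lambda_t)})| \ge |{x'}_i^\top\beta^{(\lambda_t)}|$, and by Cauchy--Schwarz on the ball this forces $r_{t,\mu}(\lambda)\,\norm{{x'}_i} \ge |{x'}_i^\top\beta^{(\lambda_t)}|$, i.e.\ $r_{t,\mu}(\lambda)^2 \ge \xi_i$. Hence the number of coordinates that can flip is at most the number of $\xi_i$ below $r_{t,\mu}(\lambda)^2$; demanding at most $\lfloor n\epsilon_v\rfloor$ flips (so that the normalized count is $\le \epsilon_v$) is guaranteed as soon as $r_{t,\mu}(\lambda)^2 < \xi_{(\lfloor n\epsilon_v\rfloor + 1)}$, which is $\Gap_\lambda(\beta^{(\lambda_t)},\theta^{(\lambda_t)}) \le \tfrac{\mu}{2}\xi_{(\lfloor n\epsilon_v\rfloor+1)} = \epsilon_{v,\mu}$. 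I should be careful about the strict-versus-non-strict inequality at the threshold value and about the indexing offset ($\lfloor n\epsilon_v\rfloor + 1$ vs.\ $\lfloor n\epsilon_v\rfloor$), which is the one genuinely delicate bookkeeping point; choosing the $(\lfloor n\epsilon_v\rfloor+1)$-th smallest $\xi_i$ as the threshold ensures at most $\lfloor n\epsilon_v\rfloor$ indices can be strictly below it.

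Finally, having reduced the requirement in both cases to the single condition $\Gap_\lambda(\beta^{(\lambda_t)},\theta^{(\lambda_t)}) \le \epsilon_{v,\mu}$, I invoke \Cref{prop:grid_for_a_prescribed_precision} with $\epsilon$ replaced by $\epsilon_{v,\mu}$: since $\Gap_t \le \epsilon_{v,\mu}$ by hypothesis, the proposition yields $\Gap_\lambda(\beta^{(\lambda_t)},\theta^{(\lambda_t)}) \le \epsilon_{v,\mu}$ for every $\lambda \in \lambda_t \times [1 - \rho_t^\ell(\epsilon_{v,\mu}),\, 1 + \rho_t^r(\epsilon_{v,\mu})]$, and therefore $\Delta E_v(\lambda_t,\lambda) \le \epsilon_v$ on that interval, which is the claim. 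The main obstacle is not any single hard inequality but getting the classification case's counting argument and its floor/threshold indexing exactly right so that the stated $\epsilon_{v,\mu}$ is both sufficient and not wastefully conservative; the regression case and the final reduction are routine.
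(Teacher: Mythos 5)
Your proposal is correct and follows essentially the same route as the paper's proof: bound $\Delta E_v$ via \ref{as:1} and the gap-safe ball of \Cref{lm:safe_region}, handle the regression case with the operator norm and the classification case by counting the coordinates that can flip sign (your "$|{x'}_i^\top(\beta-\beta^{(\lambda_t)})|\ge|{x'}_i^\top\beta^{(\lambda_t)}|$" criterion is equivalent to the paper's inequality $-2ab\le(a-b)^2-b^2$), then reduce both cases to $\Gap_\lambda \le \epsilon_{v,\mu}$ and invoke \Cref{prop:grid_for_a_prescribed_precision}. The strict-versus-non-strict threshold issue you flag is present in the paper's own proof as well and is immaterial.
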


\begin{remark}[Stopping criterion for training]
For the current parameter $\lambda_{t}$, 
$\Delta E_v(\lambda_{t}, \lambda_{t})
 \leq \epsilon_v$ as soon as $\Gap_t \leq \epsilon_{v,\mu}$, which gives us a stopping criterion for optimizing on the training part $(X, y)$ relative to the desired accuracy $\epsilon_v$ on the validation data $(X', y')$.
This has the appealing property of relieving the practitioner from selecting the stopping criterion $\epsilon_{c}$ when optimizing on the training set.
\end{remark}

\Cref{alg:adaptive_validation_path} outputs a discrete set of parameters $\Lambda(\epsilon_{v,\mu})$ such that $\{\beta^{(\lambda_{t})} \text{ for } \lambda_{t} \in \Lambda(\epsilon_{v,\mu})\}$ is an {$\epsilon_v$-path} for the validation error.
Thus, for any $\lambda \in [\lambdamin, \lambdamax]$, there exists $\lambda_{t} \in \Lambda(\epsilon_{v,\mu})$ such that
\begin{align}
E_v(\beta^{(\lambda_{t})}) - \epsilon_v \leq E_v(\tbeta{\lambda}) \enspace.
\end{align}
The following proposition is obtained by taking the minimum on both
sides of the inequality.

\begin{proposition}\label{prop:global_convergence}
Under Assumptions \ref{as:1} and \ref{as:2}, the set $\{\beta^{(\lambda_{t})} \text{ for } \lambda_{t} \in \Lambda(\epsilon_{v,\mu})\}$ is an $\epsilon_v$-path for the error and
\begin{align*}
\min_{\lambda_{t} \in \Lambda(\epsilon_{v,\mu})} E_v(\beta^{(\lambda_{t})}) - \min_{\lambda \in [\lambdamin, \lambdamax]}E_v(\tbeta{\lambda}) \leq \epsilon_v \enspace.
\end{align*}
\end{proposition}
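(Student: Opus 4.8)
The plan is to combine three ingredients that are already in place: the covering guarantee of \texttt{training\_path}, the validation-error control of \Cref{prop:Grid_for_a_prescribed_validation_error}, and a one-line minimisation argument. First I would unfold \Cref{alg:adaptive_validation_path}: it computes $\epsilon_{v,\mu}$ as in \Cref{prop:Grid_for_a_prescribed_validation_error} and then sets $\Lambda(\epsilon_{v,\mu})=\texttt{training\_path}(f,\Omega,\epsilon_{v,\mu},[\lambdamin,\lambdamax])$. By \Cref{prop:unilateral} (or \Cref{prop:bilateral}, depending on the step used), this grid is built from $\lambda_0=\lambdamax$ by $\lambda_{t+1}=\lambda_t(1-\rho_t^{\ell}(\epsilon_{v,\mu}))$ until $\lambdamin$ is reached, with each inner solve returning $(\beta^{(\lambda_t)},\theta^{(\lambda_t)})$ such that $\Gap_t\leq\epsilon_c<\epsilon_{v,\mu}$; in particular $\Gap_t\leq\epsilon_{v,\mu}$ at every grid point. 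Because $\rho_t^{\ell}(\epsilon_{v,\mu})$ is chosen as the largest admissible left step and $\lambda_{t+1}$ is precisely the left endpoint of the validity interval of $\lambda_t$ (the bilateral step only enlarges things while preserving interval overlap), the closed intervals $\lambda_t\times[1-\rho_t^{\ell}(\epsilon_{v,\mu}),\,1+\rho_t^{r}(\epsilon_{v,\mu})]$ for $\lambda_t\in\Lambda(\epsilon_{v,\mu})$ cover $[\lambdamin,\lambdamax]$.

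Second, I would fix an arbitrary $\lambda\in[\lambdamin,\lambdamax]$ and pick $\lambda_t\in\Lambda(\epsilon_{v,\mu})$ with $\lambda\in\lambda_t\times[1-\rho_t^{\ell}(\epsilon_{v,\mu}),\,1+\rho_t^{r}(\epsilon_{v,\mu})]$. Since $\Gap_t\leq\epsilon_{v,\mu}$, \Cref{prop:Grid_for_a_prescribed_validation_error} (whose proof uses \ref{as:2} through the gap-safe ball of \Cref{lm:safe_region} and \ref{as:1} to bound $|E_v(\tbeta{\lambda})-E_v(\beta^{(\lambda_t)})|$ by the loss between the two predictions) yields $\Delta E_v(\lambda_t,\lambda)=|E_v(\tbeta{\lambda})-E_v(\beta^{(\lambda_t)})|\leq\epsilon_v$, hence $E_v(\beta^{(\lambda_t)})-\epsilon_v\leq E_v(\tbeta{\lambda})$. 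As $\lambda$ was arbitrary, this is exactly the statement that $\{\beta^{(\lambda_t)}:\lambda_t\in\Lambda(\epsilon_{v,\mu})\}$ is an $\epsilon_v$-path for the validation error.

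Third, for the minimum-gap bound, keep $\lambda$ and the associated $\lambda_t$ from the previous step and chain
\[
\min_{\lambda_s\in\Lambda(\epsilon_{v,\mu})}E_v(\beta^{(\lambda_s)})-\epsilon_v\;\leq\;E_v(\beta^{(\lambda_t)})-\epsilon_v\;\leq\;E_v(\tbeta{\lambda}).
\]
The left-hand side does not depend on $\lambda$, so minimising the right-hand side over $\lambda\in[\lambdamin,\lambdamax]$ gives $\min_{\lambda_s\in\Lambda(\epsilon_{v,\mu})}E_v(\beta^{(\lambda_s)})-\min_{\lambda\in[\lambdamin,\lambdamax]}E_v(\tbeta{\lambda})\leq\epsilon_v$, which is the claim.

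The computations are routine; the one point that deserves care is the covering argument — checking that the adaptive recursion $\lambda_{t+1}=\lambda_t(1-\rho_t^{\ell}(\epsilon_{v,\mu}))$ (unilateral), or its bilateral variant, produces validity intervals whose union is the whole of $[\lambdamin,\lambdamax]$ with no gaps, and that the loop terminates after finitely many steps (the content of the complexity bound of \Cref{prop:approx_path_complexity} instantiated at accuracy $\epsilon_{v,\mu}$). Everything else is bookkeeping with the duality gap and the two assumptions.
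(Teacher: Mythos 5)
Your proof is correct and follows essentially the same route as the paper, which derives the proposition in the main text by observing that \Cref{alg:adaptive_validation_path} outputs an $\epsilon_v$-path for the validation error (via \Cref{prop:Grid_for_a_prescribed_validation_error} together with the covering property of \texttt{training\_path}) and then ``taking the minimum on both sides'' of $E_v(\beta^{(\lambda_t)})-\epsilon_v\leq E_v(\tbeta{\lambda})$. Your write-up merely makes explicit the covering and minimisation steps that the paper leaves implicit.
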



\section{Numerical experiments}
\label{sec:numerical_experiments}

We illustrate our method on $\ell_1$-regularized least squares and logistic regression by comparing the computational times and number of grid points needed to compute an $\epsilon$-path for a given range $[\lambda_{\min}, \lambda_{\max}]$ for several strategies.

The "Default grid" is the one used by default in the packages \texttt{glmnet} \citep{Friedman_Hastie_Tibshirani10} and \texttt{sklearn} \citep{Pedregosa_etal11}.
It is defined as $\lambda_t = \lambdamax \times 10^{-\delta t/(T-1)}$ (here $\delta=3$).
The proposed grids are the adaptive unilateral/bilateral and uniform unilateral/bilateral grids that are defined in Propositions~\ref{prop:unilateral}, \ref{prop:bilateral} and \ref{prop:uniform}.

Thanks to \Cref{prop:precision_for_a_given_grid}, we measure the approximation path error $\epsilon$ of the default grid of size $T$ and report the times and numbers of grid points $T_{\epsilon}$ needed to achieve such a precision.
Our experiments were conducted on the \texttt{leukemia} dataset, available in \texttt{sklearn} and the climate dataset \texttt{NCEP/NCAR Reanalysis} \citep{Kalnay_Kanamitsu_Kistler_Collins_Deaven_Gandin_Iredell_Saha_White_Woollen_Others96}.
The optimization algorithms are the same for all the grid, hence we compare only the grid construction impact.
Results are reported in \Cref{fig:bench_approximation_path} for classification and regression problem.
Our approach leads to better guarantees for approximating the regularization path w.r.t. the default grid and often significant gain in computing time.

\Cref{fig:bench_validation_path} illustrates convergence for Elastic Net (Enet) \citep{Zou_Hastie05}, on synthetic data generated by \texttt{sklearn} as random regression problems \texttt{make\_regression} and \texttt{make\_sparse\_uncorrelated} \citep{Celeux_ElAnbari_Marin_Robert12}.
For a decreasing levels of validation error, we represent the $\lambda$ selected by our algorithm and its corresponding safe interval. Even when the validation curve is non smooth and non convex, the output of the safe grid search converges to the global minimum as stated in \Cref{prop:global_convergence}.

\section{Conclusion}
\label{sec:conclusion}
We have shown how to efficiently construct one dimensional grids of regularization parameters for convex risk minimization, and to get an automatic calibration, optimal in term of hold-out test error.
Future research could examine how to adapt our framework to address multi-dimensional parameter grids.
This case is all the more interesting that it naturally arises when addressing non-convex problems, \eg MCP or SCAD, with re-weighted {$\ell_1$}-minimization.
Approximation of a full path then requires to optimize up to precision $\epsilon_c$ at each step, even for non promising hyperparameter, which is time consuming. Combining our approach with safe elimination procedures could provide faster hyperparameter selection algorithms.

\section*{Acknowledgements}

This work was supported by the Chair Machine Learning for Big Data at T\'el\'ecom ParisTech. TL acknowledges the support of JSPS KAKENHI Grant number $17K12745$.




\bibliography{references_all}
\bibliographystyle{icml2019}





\newpage
\onecolumn

\newpage
\section{Appendix}

\subsection{Generalized self-concordant functions}



\begin{proposition}[\citet{Sun_Tran-Dinh17}, Proposition 10]
\label{prop:generalized_concordant_taylor}
If $(M_{f}, \nu)$-generalized self concordant, then
\begin{equation}
w_{\nu}(-d_{\nu}(x, y))\norm{y-x}_{x}^{2} \leq f(y) - f(x) - \langle \nabla f(x), y - x\rangle \leq w_{\nu}(d_{\nu}(x, y))\norm{y-x}_{x}^{2} \enspace,
\end{equation}
where the right-hand side inequality holds if $d_{\nu}(x,y)<1$ for the case $\nu>2$ and where
\begin{equation}
d_{\nu}(x, y) :=
\begin{cases}
M_f \norm{y - x}_2 &\text{ if } \nu = 2,\\
\left(\frac{\nu}{2} - 1\right) M_f \norm{y- x}_{2}^{3-\nu} \norm{y- x}_{x}^{\nu-2} &\text{ if } \nu > 2,
\end{cases}
\end{equation}
and
\begin{equation}\label{eq:def_w_nu}
w_{\nu}(\tau) :=
\begin{cases}
\frac{e^{\tau} - \tau - 1}{\tau^2} &\text{ if } \nu = 2,\\
\frac{-\tau - \log(1-\tau)}{\tau^2} &\text{ if } \nu = 3,\\
\frac{(1-\tau)\log(1-\tau) + \tau}{\tau^2} &\text{ if } \nu = 4,\\
\left(\frac{\nu-2}{4-\nu}\right) \frac{1}{\tau}\left[\frac{\nu-2}{2(3-\nu)\tau}\left( (1-\tau)^{\frac{2(3-\nu)}{2-\nu}} -1 \right) - 1 \right] &\text{ otherwise.}
\end{cases}
\end{equation}
\end{proposition}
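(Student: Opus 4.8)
The plan is to reduce the multivariate inequality to a one-dimensional one by restricting $f$ to the segment between $x$ and $y$. Put $x_t = x + t(y-x)$ and $\phi(t) = f(x_t)$ for $t\in[0,1]$; since $f$ is $\mathcal{C}^3$ with positive definite Hessian, $\phi$ is $\mathcal{C}^3$ with $\phi''(t) = \norm{y-x}_{x_t}^{2} > 0$ and $\phi'''(t) = \langle \nabla^3 f(x_t)[y-x](y-x), y-x\rangle$. First I would plug $u = v = y-x$ into the definition of $(M_f,\nu)$-generalized self-concordance to obtain $|\phi'''(t)| \le M_f \norm{y-x}_{x_t}^{\nu}\norm{y-x}_2^{3-\nu} = M_f\,\phi''(t)^{\nu/2}\,\norm{y-x}_2^{3-\nu}$, a separable first-order differential inequality for $\omega := \phi''$ with $\omega(0) = \norm{y-x}_x^{2}$.

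The core step is to integrate this inequality to bound the local norm $\omega(t)$ along the segment. For $\nu = 2$ it reads $\bigl|\tfrac{d}{dt}\log\omega(t)\bigr| \le M_f\norm{y-x}_2 = d_2(x,y)$, so $\omega(0)\,e^{-d_2(x,y)t} \le \omega(t) \le \omega(0)\,e^{d_2(x,y)t}$. For $\nu > 2$ I would instead differentiate $\omega^{1-\nu/2}$: a short computation shows $\bigl|\tfrac{d}{dt}\omega(t)^{1-\nu/2}\bigr| \le \bigl(\tfrac{\nu}{2}-1\bigr)M_f\norm{y-x}_2^{3-\nu} = d_\nu(x,y)\,\omega(0)^{1-\nu/2}$, so integrating gives $\omega(0)^{1-\nu/2}(1 - d_\nu(x,y)t) \le \omega(t)^{1-\nu/2} \le \omega(0)^{1-\nu/2}(1 + d_\nu(x,y)t)$; raising to the negative power $-2/(\nu-2)$ reverses these inequalities and yields $\omega(0)(1+d_\nu(x,y)t)^{-2/(\nu-2)} \le \omega(t) \le \omega(0)(1-d_\nu(x,y)t)^{-2/(\nu-2)}$. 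The upper bound is exactly where one must require $d_\nu(x,y) < 1$, so that $1 - d_\nu(x,y)t$ stays positive on $[0,1]$.

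To conclude, I would use Taylor's formula with integral remainder: $f(y) - f(x) - \langle\nabla f(x), y-x\rangle = \phi(1)-\phi(0)-\phi'(0) = \int_0^1 (1-t)\,\omega(t)\,dt$. Substituting the two-sided bounds on $\omega$ factors out $\norm{y-x}_x^{2}$ and leaves the scalar integrals $\int_0^1 (1-t)\,e^{\pm d_\nu t}\,dt$ for $\nu = 2$ and $\int_0^1 (1-t)(1 \mp d_\nu t)^{-2/(\nu-2)}\,dt$ for $\nu > 2$. These are evaluated directly --- in the power case via the identity $1-t = \tfrac{1}{\tau}\bigl[(1-\tau t) - (1-\tau)\bigr]$ --- and match, respectively, $w_\nu(d_\nu(x,y))$ and $w_\nu(-d_\nu(x,y))$ from \eqref{eq:def_w_nu}; indeed the piecewise definition of $w_\nu$, with its separate rows for $\nu = 2, 3, 4$, is precisely the bookkeeping of which antiderivative appears.

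I expect the main obstacle to be the middle step: solving the differential inequality uniformly in $\nu$, in particular correctly handling the sign reversal when inverting the map $s\mapsto s^{1-\nu/2}$, and tracking constants so that $d_\nu(x,y)$ emerges with its stated formula and the domain restriction $d_\nu(x,y)<1$ is pinned down exactly. The closing integral identifications are routine but slightly tedious, amounting to verifying that $w_\nu$ was defined precisely to absorb them.
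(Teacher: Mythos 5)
Your proof is correct, and it is essentially the standard argument: the paper itself gives no proof of this statement --- it is imported verbatim from \citet{Sun_Tran-Dinh17} (their Proposition 10), and the derivation there proceeds exactly as you describe, by restricting to the segment, turning the third-derivative bound with $u=v=y-x$ into a separable differential inequality for $\omega(t)=\norm{y-x}_{x_t}^2$ (via $\log\omega$ for $\nu=2$ and $\omega^{1-\nu/2}$ for $\nu>2$, which is where the restriction $d_\nu(x,y)<1$ arises), and then integrating the Taylor remainder $\int_0^1(1-t)\,\omega(t)\,dt$ against the resulting two-sided bounds to recover $w_\nu(\pm d_\nu(x,y))$.
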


The dual of the logistic loss is Generalized Self-Concordant with $M_{f^*} = 1, \nu = 4$. Power loss function $f_i(z) = (y_i - z)^q$ for $q \in (1, 2)$, popular in robust regression, is covered with $M_{f} = \frac{2 - q}{\sqrt[(2-q)]{q(q-1)}}, \nu = \frac{2(3-q)}{2-q}$. We refer to \citep{Sun_Tran-Dinh17} for more details and examples. 

\begin{remark}
Note that the relation between $\mathcal{U}_{f^\star}$, $\mathcal{V}_{f^\star}$ and $\mathcal{U}_f$, $\mathcal{V}_f$ is not always explicit. Uniform convexity and smoothness do not always hold simultaneously for primal $f$ and dual functions $f^*$ and one needs to carefully consider the regularity of the functions used. In general, we have $\mathcal{V}_f = \mathcal{U}^{*}_{f*}$ for uniformly smooth \citep[Coroll. 2.7]{Aze_Penot_95} and for self concordant, $\nu + \nu^* = 6$ for $\nu^* \in (0, 6)$ \citep[Prop 6]{Sun_Tran-Dinh17}.
\end{remark}

\begin{figure}[ht]
\center
\includegraphics[width=0.7\columnwidth, keepaspectratio]{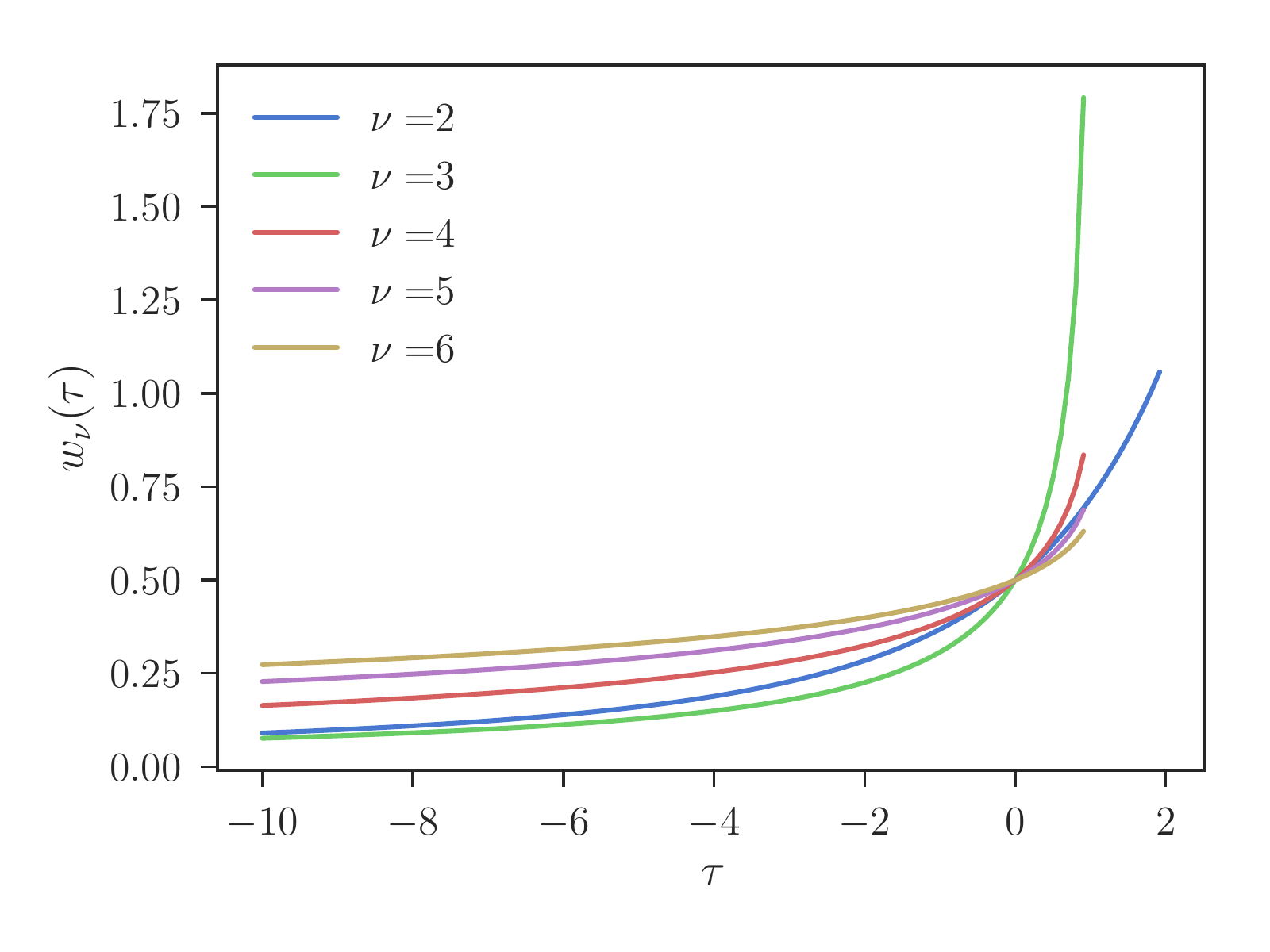}\label{fig:autoconco}
\caption{Illustration of the functions in self concordant bounds \Cref{eq:def_w_nu}}
\end{figure}

\subsection{Useful convexity inequalities}

\begin{lemma}[Fenchel-Young inequalities]\label{lm:fenchel_young}
Let $f$ be a continuously differentiable function. For all $x, x^*$, we have
\begin{align}
f(x) + f^*(x^*) &\geq \langle x^*, x \rangle \enspace, \label{eq:classic_fenchel_young}\end{align}
with equality if and only if $x^* = \nabla f(x)$ (or equivalently $x \in \partial f^*(x^*)$). Moreover, if
$f$ is $\mathcal{U}_{f,x}$-convex (resp. $\mathcal{V}_{f,x}$-smooth) 
 and $f^*$ is differentiable at $x^*$,
Inequality~\eqref{eq:mu_convex_fenchel_young} (resp. Inequality~\eqref{eq:phi_smooth_fenchel_young}) holds true:
\begin{align}
f(x) + f^*(x^*) &\geq \langle x^*, x \rangle + \mathcal{U}_{f,x}(x - \nabla f^*(x^*))\enspace, \label{eq:mu_convex_fenchel_young}\\
f(x) + f^*(x^*) &\leq \langle x^*, x \rangle + \mathcal{V}_{f,x}(x - \nabla f^*(x^*))\enspace. \label{eq:phi_smooth_fenchel_young}
\end{align}
\end{lemma}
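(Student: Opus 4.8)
The plan is to establish the three inequalities in increasing order of refinement, all from the same starting point: the definition of the Fenchel--Legendre transform. First, for~\eqref{eq:classic_fenchel_young}, I would recall that $f^*(x^*) = \sup_{z} \langle x^*, z\rangle - f(z) \geq \langle x^*, x\rangle - f(x)$ for the particular choice $z = x$, which rearranges immediately to the stated inequality. For the equality case, the supremum defining $f^*(x^*)$ is attained at $x$ precisely when $x$ is a maximizer of $z \mapsto \langle x^*, z\rangle - f(z)$; since $f$ is differentiable, the first-order condition is $x^* = \nabla f(x)$, and by convexity this stationarity condition is also sufficient. The equivalence with $x \in \partial f^*(x^*)$ is the standard Fenchel reciprocity relation.

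Next, for the sharpened bounds~\eqref{eq:mu_convex_fenchel_young} and~\eqref{eq:phi_smooth_fenchel_young}, the key idea is to apply Definition~\ref{def:general_convexity_smoothness} at the point $\bar x := \nabla f^*(x^*)$ rather than at $x$. Since $f^*$ is assumed differentiable at $x^*$, the reciprocity relation gives $\nabla f(\bar x) = x^*$ and $f(\bar x) + f^*(x^*) = \langle x^*, \bar x\rangle$, i.e.\ the Fenchel--Young inequality holds with equality at $\bar x$. Now if $f$ is $\mathcal{U}_{f, \bar x}$-convex, Inequality~\eqref{eq:mu_convexity} evaluated with base point $\bar x$ and running point $x$ reads
\begin{equation*}
\mathcal{U}_{f,\bar x}(x - \bar x) \leq f(x) - f(\bar x) - \langle \nabla f(\bar x), x - \bar x\rangle = f(x) - f(\bar x) - \langle x^*, x - \bar x\rangle.
\end{equation*}
Substituting $f(\bar x) = \langle x^*, \bar x\rangle - f^*(x^*)$ and $\bar x = \nabla f^*(x^*)$ collapses the right-hand side to $f(x) + f^*(x^*) - \langle x^*, x\rangle$, which is exactly~\eqref{eq:mu_convex_fenchel_young}. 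The smooth case~\eqref{eq:phi_smooth_fenchel_young} is identical with the inequality~\eqref{eq:nu_smoothness} in place of~\eqref{eq:mu_convexity}, reversing the direction. (Here I am using the paper's convention, stated in the footnote to Lemma~\ref{lm:tracking_gap_regularization_parameter}, of dropping the base point from the subscript when no ambiguity arises, so $\mathcal{U}_{f,\bar x}$ is written $\mathcal{U}_f$ in the statement.)

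The only genuinely delicate point is making sure the base point of the convexity/smoothness inequality is chosen to be $\bar x = \nabla f^*(x^*)$ and not $x$: the whole sharpening comes from exploiting that at $\bar x$ the Fenchel--Young gap vanishes, so that $\mathcal{U}_f$ (resp.\ $\mathcal{V}_f$) of the \emph{displacement} $x - \nabla f^*(x^*)$ appears cleanly. This requires $\bar x \in \dom f$, which is automatic since $\nabla f^*(x^*) \in \partial f^*(x^*) \subset \dom f$ by differentiability of $f^*$ at $x^*$ together with the reciprocity relation; I would spell this out in one line. Everything else is bookkeeping with the defining identities of the Legendre transform.
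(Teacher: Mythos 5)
Your proposal is correct and follows essentially the same route as the paper's own proof: both apply the convexity/smoothness inequality of Definition~\ref{def:general_convexity_smoothness} with base point $\nabla f^*(x^*)$ and use the equality case of Fenchel--Young at that point to collapse the terms. Your explicit remark about which point serves as the base of $\mathcal{U}_{f,\cdot}$ is a welcome clarification of a notational shortcut the paper takes silently.
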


\begin{proof}
We have from the $\mathcal{U}_{f,x}$-convexity and the equality $f(z) + f^*(\nabla f(z)) = \langle \nabla f(z), z\rangle$
\begin{equation*}
-f^*(\nabla f(z)) + \langle \nabla f(z), x \rangle +
\mathcal{U}_{f,x}(x - z)
= f(z) + \langle \nabla f(z), x-z \rangle +
\mathcal{U}_{f,x}(x - z) \leq f(x)\enspace.
\end{equation*}
We conclude by applying the inequality at $z = \nabla f^*(x^*)$ and remark that $\nabla f(z) = x^*$.
The same proof holds for the upper bound \eqref{eq:phi_smooth_fenchel_young}.
\end{proof}

Applying Fenchel-Young Inequalities~\eqref{eq:mu_convex_fenchel_young} and \eqref{eq:phi_smooth_fenchel_young} give the following bounds.
\begin{lemma}\label{lm:lower_bound_gap}
We assume that $-\lambda \theta \in \mathrm{Dom}(f^*)$ and $X^{\top} \theta \in \mathrm{Dom}(\Omega^{*})$. Then, the Inequality~\eqref{eq:lower_bound_gap} (resp. \eqref{eq:upper_bound_gap}) hold provided that $f$ is $\mathcal{U}_{f}$-convex (resp. $\mathcal{V}_{f}$-smooth).
\begin{align}
\lambda \widetilde \Omega(\beta, \theta) + \mathcal{U}_{f}(X\beta - \nabla f^*(-\lambda \theta)) &\leq \Gap_{\lambda}(\beta, \theta) \label{eq:lower_bound_gap}\\
\lambda \widetilde \Omega(\beta, \theta) + \mathcal{V}_{f}(X\beta - \nabla f^*(-\lambda \theta)) \enspace,
&\geq \Gap_{\lambda}(\beta, \theta)\enspace, \label{eq:upper_bound_gap}
\end{align}
where $\widetilde \Omega(\beta, \theta) = \Omega(\beta) + \Omega^{*}(X^\top \theta) + \langle \beta, -X^\top\theta \rangle$.
\end{lemma}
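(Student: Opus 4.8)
\textbf{Proof plan for Lemma~\ref{lm:lower_bound_gap}.}
The plan is to expand the duality gap $\Gap_{\lambda}(\beta,\theta)$ into a part that only involves the regularizer $\Omega$ and its conjugate, and a part that only involves the loss $f$ and its conjugate, and then to bound each part separately. Recall
\[
\Gap_{\lambda}(\beta,\theta) = f(X\beta) + f^{*}(-\lambda\theta) + \lambda\bigl(\Omega(\beta)+\Omega^{*}(X^{\top}\theta)\bigr).
\]
First I would introduce a zero by writing $\lambda\bigl(\Omega(\beta)+\Omega^{*}(X^{\top}\theta)\bigr) = \lambda\widetilde\Omega(\beta,\theta) + \lambda\langle\beta, X^{\top}\theta\rangle = \lambda\widetilde\Omega(\beta,\theta) + \langle X\beta, \lambda\theta\rangle$, using the definition $\widetilde\Omega(\beta,\theta) = \Omega(\beta)+\Omega^{*}(X^{\top}\theta)+\langle\beta,-X^{\top}\theta\rangle$ supplied in the statement. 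Substituting this back gives
\[
\Gap_{\lambda}(\beta,\theta) = \lambda\widetilde\Omega(\beta,\theta) + \Bigl( f(X\beta) + f^{*}(-\lambda\theta) - \langle X\beta, -\lambda\theta\rangle\Bigr),
\]
so the whole problem reduces to controlling the Fenchel residual $f(X\beta)+f^{*}(-\lambda\theta)-\langle X\beta,-\lambda\theta\rangle$.

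The second step is to apply Lemma~\ref{lm:fenchel_young} with the substitution $x = X\beta$ and $x^{*} = -\lambda\theta$. The hypothesis $-\lambda\theta\in\dom f^{*}$ guarantees $f^{*}$ is finite there (and, since $f$ is differentiable, $f^{*}$ is differentiable on the interior of its domain, so $\nabla f^{*}(-\lambda\theta)$ is well defined; one should note this regularity point, possibly citing that $f^{*}$ is differentiable where $f$ is convex and $\nabla f$ is injective, as is the case for the losses considered). If $f$ is $\mathcal{U}_{f}$-convex, Inequality~\eqref{eq:mu_convex_fenchel_young} yields
\[
f(X\beta)+f^{*}(-\lambda\theta) \geq \langle -\lambda\theta, X\beta\rangle + \mathcal{U}_{f}\bigl(X\beta - \nabla f^{*}(-\lambda\theta)\bigr),
\]
i.e. the Fenchel residual is at least $\mathcal{U}_{f}(X\beta-\nabla f^{*}(-\lambda\theta))$; symmetrically, if $f$ is $\mathcal{V}_{f}$-smooth, Inequality~\eqref{eq:phi_smooth_fenchel_young} bounds the residual above by $\mathcal{V}_{f}(X\beta-\nabla f^{*}(-\lambda\theta))$. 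Combining either bound with the decomposition from the first step gives \eqref{eq:lower_bound_gap} and \eqref{eq:upper_bound_gap} respectively.

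The last thing to check is that $\lambda\widetilde\Omega(\beta,\theta)\geq 0$ is never needed: both claimed inequalities carry the term $\lambda\widetilde\Omega(\beta,\theta)$ explicitly, so no sign argument on $\widetilde\Omega$ is required — though it is worth remarking that Fenchel--Young applied to $\Omega$ shows $\widetilde\Omega(\beta,\theta)\geq 0$ anyway, which is why the bounds are informative. I expect the only real subtlety to be the differentiability of $f^{*}$ at the point $-\lambda\theta$: this is what makes $\nabla f^{*}(-\lambda\theta)$ meaningful and is exactly the ingredient Lemma~\ref{lm:fenchel_young} already packages, so invoking that lemma correctly (with the stated domain assumptions) is the crux, and the rest is the bookkeeping of the $\widetilde\Omega$ splitting. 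Everything else is a direct substitution.
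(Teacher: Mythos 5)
Your proof is correct and is essentially the paper's own argument: both rest on applying the refined Fenchel--Young inequalities \eqref{eq:mu_convex_fenchel_young} and \eqref{eq:phi_smooth_fenchel_young} at $x = X\beta$, $x^{*} = -\lambda\theta$, and regrouping the inner-product term into $\widetilde\Omega$ (you regroup before invoking Fenchel--Young, the paper after, which is immaterial). The remarks on differentiability of $f^{*}$ and on $\widetilde\Omega \geq 0$ are sound but not needed for the statement itself.
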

\begin{proof}
We apply the Fenchel-Young Inequality~\eqref{eq:mu_convex_fenchel_young} to obtain
\begin{align*}
\Gap_{\lambda}(\beta, \theta)
&= f(X\beta) + f^*(-\lambda \theta) + \lambda (\Omega(\beta) + \Omega^{*}(X^\top \theta))\\
&\geq  \langle X\beta, -\lambda \theta \rangle + \mathcal{U}_{f}(X\beta - \nabla f^*(-\lambda \theta)) + \lambda (\Omega(\beta) + \Omega^{*}(X^\top \theta))\\
&= \mathcal{U}_{f}(X\beta - \nabla f^*(-\lambda \theta)) + \lambda \left(\Omega(\beta) + \Omega^{*}(X^\top \theta) + \langle \beta, -X^\top\theta \rangle\right).
\end{align*}
The same technique applies for the upper bound with the Fenchel-Young Inequality \eqref{eq:phi_smooth_fenchel_young}.
\end{proof}
\begin{remark}
From the Fenchel-Young Inequality~\eqref{eq:classic_fenchel_young}, we have
$\Omega(\beta) + \Omega^{*}(X^\top \theta) \geq \langle \beta, X^\top\theta \rangle$, so the lower bound is always non negative.
\end{remark}

\subsection{Proof of the bounds for the approximation path error}

\begin{replemma}{lm:tracking_gap_regularization_parameter}
We assume that $-\lambda \theta^{(\lambda_{t})} \in \dom f^*$ and $X^{\top} \theta^{(\lambda_{t})} \in \dom\Omega^{*}$. If $f^*$ is $\mathcal{V}_{f^*}$-smooth (resp. $\mathcal{U}_{f^*}$-convex), then, for $\rho = 1-{\lambda}/{\lambda_{t}}$, the right (resp. left) hand side of Inequality~\eqref{eq:appendix_bounding_warm_start_gap} holds true
\begin{equation}
Q_{t, \mathcal{U}_{f^*}}(\rho)
\leq \Gap_{\lambda}(\beta^{(\lambda_{t})}, \theta^{(\lambda_{t})})
\leq Q_{t, \mathcal{V}_{f^*}}(\rho)\enspace. \label{eq:appendix_bounding_warm_start_gap}
\end{equation}
\end{replemma}

\begin{proof}
We recall that $\Gap_t := \Gap_{\lambda_t}(\beta^{(\lambda_{t})}, \theta^{(\lambda_{t})})$ and we denote for simplicity
\begin{align*}
	\Gap_{\lambda}^{\lambda_{t}} := \Gap_{\lambda}(\beta^{(\lambda_{t})}, \theta^{(\lambda_{t})}) \quad \text{ and } \quad \Gamma_t := \Omega(\beta^{(\lambda_{t})}) + \Omega^{*}(X^\top \theta^{(\lambda_{t})}) \enspace.
\end{align*}
By definition for any $(\beta, \theta) \in \dom P_{\lambda} \times \dom D_{\lambda} $ we have
$\Gap_{\lambda}(\beta, \theta) 
= f(X\beta) + f^*(-\lambda \theta) + \lambda (\Omega(\beta) + \Omega^{*}(X^\top \theta))$,
so the following holds
\begin{align}
\frac{1}{\lambda_{t}}[\Gap_{t} - f(X\beta^{(\lambda_{t})}) - f^*(-\lambda_{t} \theta^{(\lambda_{t})})] = \Gamma_t \enspace. \label{eq:isolate_lambda_t}
\end{align}
Hence using Equality~\eqref{eq:isolate_lambda_t} in the definition of $\Gap_{\lambda}^{\lambda_{t}}$, we have:
\begin{align*}
\Gap_{\lambda}^{\lambda_{t}}  \overset{\phantom{\eqref{eq:isolate_lambda_t}}}{=} &f(X\beta^{(\lambda_{t})}) + f^*(-\lambda \theta^{(\lambda_{t})}) + \lambda \Gamma_t \\
\overset{\eqref{eq:isolate_lambda_t}}{=}&
\frac{\lambda}{\lambda_{t}} \Gap_{t} + \left(1 - \frac{\lambda}{\lambda_{t}}\right)[f(X\beta^{(\lambda_{t})}) + f^*(-\lambda_{t} \theta^{(\lambda_{t})})] + f^*(-\lambda \theta^{(\lambda_{t})}) - f^*(-\lambda_{t} \theta^{(\lambda_{t})})\enspace.
\end{align*}
Let us write the proof for the upper bound (the proof for the lower bound is similar).
We apply the smoothness property and the Fenchel-Young Inequality~\eqref{eq:phi_smooth_fenchel_young} to the function $f^*(\cdot)$ with $z = -\lambda \theta^{(\lambda_{t})}$ and $x = \zeta_{t} := -\lambda_{t} \theta^{(\lambda_{t})}$ to obtain
\begin{align*}
\Gap_{\lambda}^{\lambda_{t}} & \leq \frac{\lambda}{\lambda_{t}} \Gap_{t} +
\left(1 - \frac{\lambda}{\lambda_{t}}\right) \Delta_t +
\mathcal V_{f^*, \zeta_{t}}\left((\lambda_{t}-\lambda)\theta^{(\lambda_{t})}\right)\enspace,
\end{align*}
%
where we have used the equality case in the Fenchel-Young Inequality~\eqref{eq:classic_fenchel_young} to get:
\begin{align*}\Delta_t &= f(X\beta^{(\lambda_{t})}) + f^*(\zeta_{t}) + \langle \nabla f^*(\zeta_{t}) , - \zeta_t \rangle = f(X\beta^{(\lambda_{t})}) - f(\nabla f^*(\zeta_{t})) \enspace.
\end{align*}

We conclude by noticing that
$\frac{\lambda}{\lambda_{t}} \Gap_{t} + \left(1 - \frac{\lambda}{\lambda_{t}}\right) \Delta_t =
\Gap_{t} + \left(1 - \frac{\lambda}{\lambda_{t}}\right) (\Delta_t - \Gap_{t})
$, that $\zeta_{t}=-\lambda_{t}\theta^{(\lambda_{t})}$ and thanks to the definition of $Q_{t, \phi}$, from \Cref{eq:defined_phi}, applied to $\phi=\mathcal{V}_{f^*}$.
\end{proof}

\begin{repproposition}{prop:grid_for_a_prescribed_precision}[Grid for a prescribed precision]
Given $(\beta^{(\lambda_{t})}, \theta^{(\lambda_{t})})$ such that $\Gap_t \leq \epsilon_c < \epsilon$, for all
$\lambda \in \lambda_{t} \times \left[1 - \rho_{t}^{\ell}(\epsilon),\, 1 + \rho_{t}^{r}(\epsilon)\right]$,
we have $\Gap_{\lambda}(\beta^{(\lambda_{t})}, \theta^{(\lambda_{t})}) \leq \epsilon$ where $\rho_{t}^{\ell}(\epsilon)$ (resp. $\rho_{t}^{r}(\epsilon)$) is the largest non-negative $\rho$ s.t. $Q_{t, \mathcal{V}_{f^*}}(\rho) \leq \epsilon$ (resp. $Q_{t, \mathcal{V}_{f^*}}(-\rho) \leq \epsilon$).
\end{repproposition}

\begin{proof}
From \Cref{lm:tracking_gap_regularization_parameter}, we have
$\Gap_{\lambda}(\beta^{(\lambda_{t})}, \theta^{(\lambda_{t})}) \leq Q_{t, \mathcal{V}_{f^*}}(\rho)  = Q_{t, \mathcal{V}_{f^*}}(1 - \lambda/\lambda_t)$. Then, $\Gap_{\lambda}(\beta^{(\lambda_{t})}, \theta^{(\lambda_{t})}) \leq \epsilon$ for
$$\lambda \in \left[\inf\{\lambda' :\, Q_{t, \mathcal{V}_{f^*}}(1 - \lambda'/\lambda_t) \leq \epsilon, \;  \lambda' \leq \lambda_t \}, \quad \sup\{\lambda' :\, Q_{t, \mathcal{V}_{f^*}}(1 - \lambda'/\lambda_t) \leq \epsilon, \;  \lambda' \geq \lambda_t \}\right] \enspace.$$
\end{proof}

\begin{repproposition}{prop:precision_for_a_given_grid}[Precision for a Given Grid]
Given a grid of parameter $\Lambda_{T}$, the set $\{\beta^{(\lambda)} : \lambda \in \Lambda_T\}$ is an $\epsilon_{\Lambda_{T}}$-path and $\epsilon_{\Lambda_{T}} \leq \max_{t \in [T]} Q_{t,\mathcal{V}_{f^*}}(1-\lambda_{t}^{\star}/\lambda_{t}) $ where for all $t \in [T-1]$, $\lambda_{t}^{\star}$ is the largest $\lambda \in [\lambda_{t + 1},\lambda_{t}]$ such that $Q_{t,\mathcal{V}_{f^*}}(1-\lambda/\lambda_{t}) \geq Q_{t+1,\mathcal{V}_{f^*}}(1-\lambda/\lambda_{t + 1})$.
\end{repproposition}

\begin{proof}
From the upper bound $\Gap_{\lambda}(\beta^{(\lambda_t)}, \theta^{(\lambda_t)}) \leq Q_{t, \mathcal{V}_{f^*}}(1 - \lambda/\lambda_t)$ for all $\lambda$ and $\lambda_t$, and since one can partition the parameter set as $[\lambdamin, \lambdamax] = \cup_{t \in [0:T-1]} [\lambda_{t+1}, \lambda_t]$, we have thanks to the Definition of $\epsilon_{\Lambda_{t}}$ in \Cref{eq:def_epsilon_lambda_T}:
\begin{align*}
\epsilon_{\Lambda_{t}} &\leq \max_{t \in [0:T-1]} \sup_{\lambda \in [\lambda_{t+1}, \lambda_t]} \min_{\lambda_t \in \Lambda_{T}} Q_{t, \mathcal{V}_{f^*}}(1 - \lambda/\lambda_t)\\
&\leq\max_{t \in [0:T-1]} \sup_{\lambda \in [\lambda_{t+1}, \lambda_t]} \min_{t' \in \{t+1, t\}} Q_{t', \mathcal{V}_{f^*}}(1 - \lambda/\lambda_{t'}) \enspace.
\end{align*}
where the last inequality holds since $\{\lambda_{t+1}, \lambda_t\}$ is a subset of $\Lambda_{T}$.
Let us define
\begin{equation*}
\forall \lambda \in [\lambda_{t+1}, \lambda_t], \quad
\psi_{t}(\lambda) := \min\{Q_{t+1, \mathcal{V}_{f^*}}(1-\lambda/\lambda_{t+1}), Q_{t, \mathcal{V}_{f^*}}(1 - \lambda/\lambda_{t})\} \enspace.
\end{equation*}
The quantity $Q_{t+1, \mathcal{V}_{f^*}}(1 - \lambda/\lambda_{t+1})$ (resp. $Q_{t, \mathcal{V}_{f^*}}(1-\lambda/\lambda_{t})$) is monotonically increasing \wrt $\lambda$ (resp. decreasing), so $\sup_{\lambda \in [\lambda_{t+1}, \lambda_t]} \psi_{t}(\lambda)$ is reached for $\lambda_{t}^{\star}$, the largest $\lambda$ satisfying
$$Q_{t,\mathcal{V}_{f^*}}(1 - \lambda/\lambda_{t}) \geq Q_{t+1, \mathcal{V}_{f^*}}(1 - \lambda/ \lambda_{t + 1}) \enspace.$$
\end{proof}

\begin{repcorollary}{cor:quadratic_step_size}
If the function $f^*$ is $\frac{\nu}{2}\normin{\cdot}^2$-smooth, the left ($\rho^{\ell}_{t}$) and right ($\rho^{r}_{t}$) step sizes defined in \Cref{prop:grid_for_a_prescribed_precision} have closed-form expressions:
\begin{align*}
\rho^{\ell}_{t} = \frac{\sqrt{ 2 \nu \delta_{t} \norm{\zeta_{t}}^{2} + {\tilde{\delta}_{t}}^{2}} - \tilde{\delta}_{t}}{\nu \norm{\zeta_{t}}^{2}}
\text{ and }
\rho^{r}_{t} &= \frac{ \sqrt{  2 \nu\delta_{t} \norm{\zeta_{t}}^{2} + {\tilde{\delta}_{t}}^{2}} + \tilde{\delta}_{t}}{\nu \norm{\zeta_{t}}^{2}},
\end{align*}
where $\delta_{t} := \epsilon - \Gap_{t}$ and $\tilde{\delta}_{t} := \Delta_{t} - \Gap_{t}$.
\end{repcorollary}

\begin{proof}
If $f^*$ is $\frac{\nu}{2}\norm{\cdot}^2$-smooth (which is equivalent to $f$ is $\frac{1}{2\nu}\norm{\cdot}^2$-strongly convex), we have from \Cref{lm:tracking_gap_regularization_parameter}
\begin{align*}
\Gap_{\lambda}(\beta^{(\lambda_{t})}, \theta^{(\lambda_{t})}) \leq Q_{t, \mathcal{V}_{f^*}}(\rho)
= \Gap_{t} + \rho (\Delta_t - \Gap_{t}) + \frac{\nu \rho^2}{2}\norm{\zeta_t}^2 \enspace.
\end{align*}
Hence we conclude by solving in $\rho$ the inequality $Q_{t, \mathcal{V}_{f^*}}(\rho) \leq \epsilon$.
\end{proof}

\begin{replemma}{lem:dual_point}
For any $\beta^{(\lambda_{t})} \in \bbR^p$, the vector
 \begin{equation*}
 \theta^{(\lambda_{t})} =\frac{-\nabla f (X\beta^{(\lambda_{t})})}{\max(\lambda_{t}, \sigma_{\dom \Omega^*}^{\circ} (X^\top \nabla f(X\beta^{(\lambda_{t})}))}\enspace,
\end{equation*}
is feasible: $-\lambda \theta^{(\lambda_{t})} \in \dom f^*$, $X^{\top} \theta^{(\lambda_{t})} \in \dom \Omega^{*}$.
\end{replemma}

\begin{proof}
The proof of this result and the convergence of the sequence of dual points is given in Proposition 11 and lemma 5 of \citep[Chapter 2]{Ndiaye18}.
\end{proof}

\paragraph{Variation of the loss function along the path}
\begin{lemma}\label{lm:overfitting}
Let $\beta^{(\lambda_{t})}$ (resp. $\beta^{(\lambda_{t'})}$) be an $\epsilon$-solution at parameter $\lambda_{t}$ (resp. $\lambda_{t'}$), then we have
\begin{align*}
\left(1 - \frac{\lambda_{t'}}{\lambda_{t}}\right) \left(f(X\beta^{(\lambda_{t'})}) - f(X\beta^{(\lambda_{t})})\right) &\leq \Gap_{t'} + \frac{\lambda_{t'}}{\lambda_{t}} \Gap_{t}\enspace.
%
\end{align*}
where $\Gap_{s} := \Gap_{\lambda_s}(\beta^{(\lambda_s)}, \theta^{(\lambda_s)})$ for $s \in \{t, t'\}$. Moreover, the mapping $\lambda \mapsto f(X\tbeta{\lambda})$ is non-increasing.
\end{lemma}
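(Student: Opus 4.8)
The plan is to pit the (approximate) optimality of $\beta^{(\lambda_{t})}$ for $P_{\lambda_{t}}$ against that of $\beta^{(\lambda_{t'})}$ for $P_{\lambda_{t'}}$, and then to cancel the penalty terms. First, since $\tbeta{\lambda_{t}}$ minimizes $P_{\lambda_{t}}$, the gap certificate \eqref{eq:dual_gap_certificate} gives $P_{\lambda_{t}}(\beta^{(\lambda_{t})}) \le P_{\lambda_{t}}(\tbeta{\lambda_{t}}) + \Gap_{t} \le P_{\lambda_{t}}(\beta^{(\lambda_{t'})}) + \Gap_{t}$; written out with $P_{\lambda}(\beta) = f(X\beta)+\lambda\Omega(\beta)$ this reads $f(X\beta^{(\lambda_{t})}) + \lambda_{t}\Omega(\beta^{(\lambda_{t})}) \le f(X\beta^{(\lambda_{t'})}) + \lambda_{t}\Omega(\beta^{(\lambda_{t'})}) + \Gap_{t}$, and symmetrically, interchanging $t$ and $t'$, $f(X\beta^{(\lambda_{t'})}) + \lambda_{t'}\Omega(\beta^{(\lambda_{t'})}) \le f(X\beta^{(\lambda_{t})}) + \lambda_{t'}\Omega(\beta^{(\lambda_{t})}) + \Gap_{t'}$.

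Next I would isolate the penalty difference from each inequality. Writing $D := f(X\beta^{(\lambda_{t'})}) - f(X\beta^{(\lambda_{t})})$, the first inequality gives $\Omega(\beta^{(\lambda_{t})}) - \Omega(\beta^{(\lambda_{t'})}) \le (D+\Gap_{t})/\lambda_{t}$ and the second gives $\Omega(\beta^{(\lambda_{t})}) - \Omega(\beta^{(\lambda_{t'})}) \ge (D-\Gap_{t'})/\lambda_{t'}$. Chaining these two bounds eliminates $\Omega$ entirely, leaving $(D-\Gap_{t'})/\lambda_{t'} \le (D+\Gap_{t})/\lambda_{t}$. Multiplying through by $\lambda_{t}\lambda_{t'}>0$, collecting the $D$ terms, and dividing by $\lambda_{t}$ yields precisely $\bigl(1 - \lambda_{t'}/\lambda_{t}\bigr) D \le \Gap_{t'} + (\lambda_{t'}/\lambda_{t})\Gap_{t}$, which is the stated inequality.

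For the monotonicity claim I would specialize to exact solutions $\beta^{(\lambda_{t})} = \tbeta{\lambda_{t}}$, $\beta^{(\lambda_{t'})} = \tbeta{\lambda_{t'}}$, for which $\Gap_{t} = \Gap_{t'} = 0$; the inequality then reduces to $\bigl(1 - \lambda_{t'}/\lambda_{t}\bigr)\bigl(f(X\tbeta{\lambda_{t'}}) - f(X\tbeta{\lambda_{t}})\bigr) \le 0$. Since $1 - \lambda_{t'}/\lambda_{t}$ has the sign of $\lambda_{t}-\lambda_{t'}$, this forces $f(X\tbeta{\lambda})$ to vary monotonically along the path (for a decreasing grid of penalties, stronger regularization gives larger training loss). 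The proof involves no genuine analytic difficulty; the one place to be careful is the second step, namely choosing the combination of the two optimality inequalities that cancels the \emph{uncontrolled} penalty difference while keeping the loss difference $D$ on the correct side of the resulting inequality — everything else is elementary rearrangement.
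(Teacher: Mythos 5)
Your proof is correct and is essentially the paper's argument: both combine the two $\epsilon$-optimality inequalities (each iterate tested in the other's objective) so that the penalty terms cancel, your chaining through the penalty difference $\Omega(\beta^{(\lambda_{t})})-\Omega(\beta^{(\lambda_{t'})})$ being the same computation as the paper's convex-combination substitution, just arranged differently. One small remark: as you correctly observe, the derivation shows that smaller $\lambda$ yields smaller $f(X\tbeta{\lambda})$, i.e.\ the map is non-decreasing in $\lambda$ (equivalently, non-increasing along the path traversed in decreasing order of $\lambda$), so the lemma's literal wording ``non-increasing'' is a slip in the paper's phrasing rather than a flaw in your proof.
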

\begin{proof}
Denote $\epsilon = \mathcal G_t$ and $\epsilon' = \mathcal G_{t'}$.
Since $\beta^{(\lambda_{t'})}$ is an $\epsilon'$-solution and $\tbeta{\lambda_{t'}}$ is optimal at parameter $\lambda_{t'}$, we have:
$$f(X\beta^{(\lambda_{t'})}) + \lambda_{t'} \Omega(\beta^{(\lambda_{t'})}) - \epsilon' \leq f(X\tbeta{\lambda_{t'}}) + \lambda_{t'} \Omega(\tbeta{\lambda_{t'}}) \leq f(X\beta^{(\lambda_t)}) + \lambda_{t'} \Omega(\beta^{(\lambda_t)}) \enspace.$$
Moreover,
\begin{align*}
f(X\beta^{(\lambda_t)}) + \lambda_{t'} \Omega(\beta^{(\lambda_t)})
&= \frac{\lambda_{t'}}{\lambda_t} \left(f(X\beta^{(\lambda_t)}) + \lambda_t \Omega(\beta^{(\lambda_t)})\right) + \left(1 - \frac{\lambda_{t'}}{\lambda_t} \right) f(X\beta^{(\lambda_t)})\\
&\leq \frac{\lambda_{t'}}{\lambda_t} \left(f(X\tbeta{\lambda_t}) + \lambda_t \Omega(\tbeta{\lambda_t}) + \epsilon\right) + \left(1 - \frac{\lambda_{t'}}{\lambda_t} \right) f(X\beta^{(\lambda_t)})\\
&\leq \frac{\lambda_{t'}}{\lambda_t} \left(f(X\beta^{(\lambda)}) + \lambda_t \Omega(\beta^{(\lambda)}) + \epsilon\right) + \left(1 - \frac{\lambda_{t'}}{\lambda_t} \right) f(X\beta^{(\lambda_t)}) \enspace.
\end{align*}
The last inequality comes from the optimality of $\tbeta{\lambda_t}$ at parameter $\lambda_t$.
Hence,
$$f(X\beta^{(\lambda_{t'})}) + \lambda_{t'} \Omega(\beta^{(\lambda_{t'})}) - \epsilon' \leq \frac{\lambda_{t'}}{\lambda_t} \left(f(X\beta^{(\lambda_{t'})}) + \lambda_t \Omega(\beta^{(\lambda_{t'})}) + \epsilon\right) + \left(1 - \frac{\lambda_{t'}}{\lambda_t} \right) f(X\beta^{(\lambda_t)})\enspace.$$
At optimality, $\epsilon=0$ and we can deduce that $\left(1 - \frac{\lambda}{\lambda_t} \right) f(X\tbeta{\lambda}) \leq \left(1 - \frac{\lambda}{\lambda_t} \right) f(X\tbeta{\lambda_t})$, hence the second result.
\end{proof}

\paragraph{Bounding the gradient along the path} \quad\\

We can furthermore bound the norm of the gradient of the loss when the parameter $\lambda$ varies.

\begin{lemma} \label{lm:bound_of_gradient}
For $x \in \mathrm{Dom}(f)$, if $f$ is $\mathcal{V}_{f, x}$-smooth, then writing $\mathcal{V}_{f, x}^{*}=(\mathcal{V}_{f, x})^{*}$ for the Fenchel-Legendre transform, one has
$$\mathcal{V}_{f, x}^{*}(-\nabla f(x)) \leq f(x) - \inf_z f(z)\enspace.$$
\end{lemma}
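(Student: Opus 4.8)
The plan is to unfold the definition of the Fenchel--Legendre transform of $\mathcal{V}_{f,x}$ and substitute the smoothness inequality \eqref{eq:nu_smoothness}. By definition,
\[
\mathcal{V}_{f,x}^{*}(-\nabla f(x)) = \sup_{u}\big\{\langle -\nabla f(x), u\rangle - \mathcal{V}_{f,x}(u)\big\},
\]
and since the losses of interest have full domain (or, equivalently, one reads $\mathcal{V}_{f,x}$ as $+\infty$ outside $\dom f - x$, consistently with Definition~\ref{def:general_convexity_smoothness} where \eqref{eq:nu_smoothness} is only imposed on $\dom f$), the supremum can be taken over $u = z - x$ with $z \in \dom f$.

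First I would fix $z \in \dom f$ and set $u = z - x$. The $\mathcal{V}_{f,x}$-smoothness of $f$ at $x$, i.e. Inequality~\eqref{eq:nu_smoothness}, gives $\mathcal{V}_{f,x}(z-x) \geq f(z) - f(x) - \langle \nabla f(x), z - x\rangle$, hence
\[
\langle -\nabla f(x), z-x\rangle - \mathcal{V}_{f,x}(z-x) \leq \langle -\nabla f(x), z-x\rangle - \big(f(z) - f(x) - \langle \nabla f(x), z-x\rangle\big) = f(x) - f(z).
\]
Taking the supremum over $z \in \dom f$ on both sides then yields $\mathcal{V}_{f,x}^{*}(-\nabla f(x)) \leq f(x) - \inf_{z} f(z)$, which is exactly the claimed bound. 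The whole argument is essentially one substitution; no fixed-point, compactness, or regularity beyond \eqref{eq:nu_smoothness} is needed.

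The only place the argument could slip is the domain over which the conjugate supremum is taken: when $\dom f \subsetneq \bbR^n$, one must ensure no $u$ with $x+u \notin \dom f$ gives a larger value than those with $x+u \in \dom f$, which is precisely why $\mathcal{V}_{f,x}$ should be treated as $+\infty$ there; for the least-squares, logistic and power losses used in the paper $\dom f = \bbR^n$ and the point is moot. A complementary remark worth recording is that in the uniformly smooth regime, where $\mathcal{V}_{f,x}(u) = \mathcal{V}(\normin{u})$, the conjugate of a function of a norm satisfies $\mathcal{V}_{f,x}^{*}(w) = \mathcal{V}^{*}(\normin{w}_{*})$, so the lemma specializes to $\mathcal{V}^{*}(\normin{\nabla f(x)}_{*}) \leq f(x) - \inf f$, which is the form invoked later (e.g. in the definition of $\widetilde R_t$ in Lemma~\ref{lm:bounds_gradient_and_gap}).
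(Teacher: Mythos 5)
Your proof is correct and is essentially the paper's own argument: the paper writes $\inf_z f(z) \leq \inf_z \bigl(f(x) + \langle \nabla f(x), z-x\rangle + \mathcal{V}_{f,x}(z-x)\bigr) = f(x) - \mathcal{V}_{f,x}^{*}(-\nabla f(x))$, which is the same substitution of \eqref{eq:nu_smoothness} into the conjugate, read in the opposite order. Your extra remark on the domain of the supremum is a fair point the paper leaves implicit, but it does not change the argument.
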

\begin{proof}
From the smoothness of $f$, we have
\begin{align*}
\inf_z f(z) &\leq \inf_z \left(f(x) + \langle \nabla f(x), z-x \rangle + \mathcal{V}_{f, x}(z - x) \right) = f(x) - (\mathcal{V}_{f, x})^*(-\nabla f(x))\enspace. 
\end{align*}
\end{proof}
A direct application of \Cref{lm:bound_of_gradient} and \Cref{lm:overfitting} yields:
\begin{lemma}\label{lm:bound_norm_residual} Assume that $f$ is uniformly smooth and let $\beta^{(\lambda_{t'})}$ (resp. $\beta^{(\lambda_{t})}$) be an $\epsilon$-solution at parameter $\lambda_{t'}$ (resp. $\lambda_{t}$). Then for $\delta_{\epsilon}(\lambda_{t'}, \lambda_{t}) := \frac{\lambda_{t}+\lambda_{t'}}{\lambda_{t} - \lambda_{t'}} ~ \epsilon $, we have
\begin{equation*}
\mathcal{V}_{f}^{*}(-\nabla f(X\beta^{(\lambda_{t'})})) \leq f(X\beta^{(\lambda_{t})}) + \delta_{\epsilon}(\lambda_{t'}, \lambda_{t}) \enspace.
\end{equation*}
At optimality $\epsilon=0$ and so $\delta_{\epsilon}(\lambda_{t'}, \lambda_{t})=0$ and we have
\begin{equation*}
\mathcal{V}_{f}^{*}(-\nabla f(X\tbeta{\lambda_{t'}})) \leq f(X\tbeta{\lambda_{t}})\enspace.
\end{equation*}
\end{lemma}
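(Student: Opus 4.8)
The goal is to prove \Cref{lm:bound_norm_residual}, which combines \Cref{lm:bound_of_gradient} and \Cref{lm:overfitting} to bound $\mathcal{V}_f^*(-\nabla f(X\beta^{(\lambda_{t'})}))$ in terms of $f(X\beta^{(\lambda_t)})$ plus a correction term controlled by the optimization accuracy $\epsilon$.

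\textbf{Proof plan.} The plan is to chain the two preceding lemmas. First I would apply \Cref{lm:bound_of_gradient} with $x = X\beta^{(\lambda_{t'})}$, which gives $\mathcal{V}_{f}^{*}(-\nabla f(X\beta^{(\lambda_{t'})})) \leq f(X\beta^{(\lambda_{t'})}) - \inf_z f(z)$. Since $\inf_z f(z) \geq 0$ in the settings considered (the losses in \Cref{tab:summary} are nonnegative; more generally one can absorb the infimum), it suffices to upper bound $f(X\beta^{(\lambda_{t'})})$ itself. This is exactly what \Cref{lm:overfitting} provides: rearranging its inequality
$$\left(1 - \frac{\lambda_{t'}}{\lambda_{t}}\right)\left(f(X\beta^{(\lambda_{t'})}) - f(X\beta^{(\lambda_{t})})\right) \leq \Gap_{t'} + \frac{\lambda_{t'}}{\lambda_{t}}\Gap_{t},$$
and dividing by the positive factor $1 - \lambda_{t'}/\lambda_t$ (recall $\lambda_{t'} < \lambda_t$), one gets
$$f(X\beta^{(\lambda_{t'})}) \leq f(X\beta^{(\lambda_{t})}) + \frac{\Gap_{t'} + (\lambda_{t'}/\lambda_t)\Gap_{t}}{1 - \lambda_{t'}/\lambda_{t}} = f(X\beta^{(\lambda_{t})}) + \frac{\lambda_t \Gap_{t'} + \lambda_{t'}\Gap_{t}}{\lambda_t - \lambda_{t'}}.$$

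\textbf{Combining the estimates.} Substituting $\Gap_t, \Gap_{t'} \leq \epsilon$ (both iterates are $\epsilon$-solutions and the dual points make the gap the relevant certificate), the numerator is at most $(\lambda_t + \lambda_{t'})\epsilon$, so the correction term is at most $\frac{\lambda_t + \lambda_{t'}}{\lambda_t - \lambda_{t'}}\epsilon = \delta_\epsilon(\lambda_{t'}, \lambda_t)$. Chaining with the first step yields $\mathcal{V}_f^*(-\nabla f(X\beta^{(\lambda_{t'})})) \leq f(X\beta^{(\lambda_t)}) + \delta_\epsilon(\lambda_{t'},\lambda_t)$, which is the claim. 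The optimality statement ($\epsilon = 0$) follows immediately since then $\Gap_{t} = \Gap_{t'} = 0$, forcing $\delta_\epsilon = 0$, and \Cref{lm:overfitting} already records $f(X\tbeta{\lambda_{t'}}) \leq f(X\tbeta{\lambda_t})$ (the loss map is non-increasing in $\lambda$).

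\textbf{Main obstacle.} There is no deep difficulty here — the lemma is a bookkeeping corollary — but the one point to handle with care is the treatment of $\inf_z f(z)$: one must either note it is nonnegative for the losses of interest, or carry it through symbolically and observe it cancels against the same term appearing when bounding $f(X\beta^{(\lambda_t)})$ via \Cref{lm:bound_of_gradient}'s underlying identity. A secondary care point is ensuring the sign of $1 - \lambda_{t'}/\lambda_t$ so the division preserves the inequality direction; this is guaranteed by the convention $\lambda_{t'} < \lambda_t$ stated earlier for the decreasing grid.
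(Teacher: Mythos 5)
Your proposal is correct and follows exactly the paper's route: the paper proves this lemma by declaring it "a direct application of Lemma \ref{lm:bound_of_gradient} and Lemma \ref{lm:overfitting}", which is precisely the chaining you carry out (and you supply more detail than the paper, including the correct handling of the $\inf_z f(z)$ term via the normalization $\inf f = 0$ used elsewhere in the appendix). No gaps.
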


\begin{replemma}{lm:bounds_gradient_and_gap}
Assuming $f$ uniformly smooth yields $\normin{\nabla f(X\beta^{(\lambda_{t'})})}_* \leq \widetilde R_{t}$, where $\widetilde R_{t} := {\mathcal{V}_{f}^{*}}^{-1}\big(f(X\beta^{(\lambda_{t})}) + \frac{2\epsilon_{c}}{\rho_{t}^{\ell} (\epsilon)}\big)$.
If additionally $f$ is uniformly convex, this yields $\Delta_{t'} \leq \widetilde \Delta_{t}$, where $\widetilde \Delta_{t} := \widetilde R_{t} \times \mathcal{U}_{f}^{-1}(\epsilon_{c})$ as well as $\Gap_{\lambda}(\beta^{(\lambda_{t'})}, \theta^{(\lambda_{t'})}) \leq Q_{t',\mathcal V_{f^*}}(\rho) \leq \widetilde Q_{t, \mathcal V_{f^*}}(\rho)$, where
\begin{equation*}\label{bound_on_gradients}
\widetilde Q_{t, \mathcal V_{f^*}}(\rho) =  \epsilon_{c} + \rho \cdot (\widetilde \Delta_{t} - \epsilon_{c} ) + \mathcal{V}_{f^*}\left(\left|\rho\right|\cdot\widetilde R_{t}\right) \enspace.
\end{equation*}
\end{replemma}

\begin{proof}
If $f$ is uniformly smooth, from \Cref{lm:bound_norm_residual}, we have:
\begin{align*}
\mathcal{V}_{f}^{*}(-\nabla f(X\beta^{(\lambda_{t'})})) &\leq f(X\beta^{(\lambda_{t})}) + \delta_{\epsilon}(\lambda_{t'}, \lambda_{t})\\
\normin{\nabla f(X\beta^{(\lambda_{t'})})}_* &\leq {\mathcal{V}_{f}^{*}}^{-1}\left(f(X\beta^{(\lambda_{t})}) + \frac{2 \epsilon}{\rho_{t}^{\ell}(\epsilon)}\right)\enspace,
\end{align*}
where the first line follows from \Cref{lm:bound_norm_residual} and the second follows from the fact that for the function $\mathcal{V}_{f} = \mathcal{V} \circ \norm{\cdot}$, we have $\mathcal{V}_{f}^{*} := (\mathcal{V}_{f})^{*} = \mathcal{V}^* \circ \norm{\cdot}_*$. Finally, since $\lambda_{t'} \leq \lambda_t$, then
$$\delta_{\epsilon}(\lambda_{t'}, \lambda_{t}) \leq \frac{2\epsilon \lambda_t}{\lambda_t - \lambda_{t'}} = \frac{2\epsilon}{\rho_{t}^{\ell}(\epsilon)} \enspace.$$

Since $f$ is convex, we have
\begin{align*}
\Delta_{t} &:= f(X\beta^{(\lambda_{t})}) - f(\nabla f^*(-\lambda_{t} \theta^{(\lambda_{t})}))
\leq -\langle \nabla f(X\beta^{(\lambda_{t})}), \nabla f^*(-\lambda_{t} \theta^{(\lambda_{t})}) - X\beta^{(\lambda_{t})} \rangle\\
&\leq \normin{\nabla f(X\beta^{(\lambda_{t})})}_* \times \normin{\nabla f^*(-\lambda_{t} \theta^{(\lambda_{t})}) - X\beta^{(\lambda_{t})}}\\
&\leq \normin{\nabla f(X\beta^{(\lambda_{t})})}_* \times \mathcal{U}_{f}^{-1}(\Gap_{\lambda_{t}}(\beta^{(\lambda_{t})}, \theta^{(\lambda_{t})})) \enspace.
\end{align*}
where the two last inequalities come respectively from Holder inequality and \Cref{lm:lower_bound_gap}.

The bound on the duality gap directly comes from the bounds on $\Delta_{t'}$ and the norm of the gradient.
\end{proof}

\subsection{Proof of the complexity bound}

\begin{repproposition}{prop:uniform}
Assume that $f$ is uniformly convex and smooth, and define the grid $\Lambda^{(0)}(\epsilon) = \{\lambda_0, \ldots, \lambda_{T_{\epsilon}-1} \}$ by
$$\lambda_0 = \lambda_{\max},\quad  \lambda_{t + 1} = \lambda_{t} \times (1 - \rho_{0}(\epsilon))\enspace,$$
and $\forall t \in [T], (\beta^{(\lambda_{t})}, \theta^{(\lambda_{t})})$ s.t. $\Gap_t \leq \epsilon_{c} < \epsilon$.
Then the set $\{\beta^{(\lambda_{t})} \;:\; \lambda_{t} \in \Lambda^{(0)}(\epsilon)\}$ is an $\epsilon$-path for Problem~\eqref{eq:primal_problem}
 with at most $T_{\epsilon}$ grid points where
 \begin{equation}
 T_{\epsilon} = \left\lfloor \frac{\log({\lambdamin}/{\lambdamax})}{\log(1 - \rho_{0}(\epsilon))} \right\rfloor \enspace.
 \end{equation}
\end{repproposition}

\begin{proof}
By construction, given any two consecutive grid point $\lambda_t$ and $\lambda_{t+1}$, we have $\Gap_t$ and $\Gap_{t+1}$ are smaller than $\epsilon$. Moreover, since $\beta^{(\lambda_t)}$ is an $\epsilon$-solution for any $\lambda$ in the interval $[\lambda_{t+1}, \lambda_t]$ whose union forms a covering of $[\lambdamin, \lambdamax]$, we conclude that the uniform grid is an $\epsilon$-path.

By definition, $\lambdamin = \lambda_{T_{\epsilon}}$, $\lambdamax = \lambda_{0}$ and $\rho_0(\epsilon) = 1 - \lambda_{t+1}/\lambda_t \in (0,1)$. The conclusion follows from the fact that $\lambda_{T_{\epsilon}} = \lambda_0 \times (1 - \rho_{0}(\epsilon))^{T_{\epsilon}}$.
\end{proof}

Now we present the proof for the general case. We denote $T_{\epsilon}$ the cardinality of the grid returned by Algorithm~\ref{alg:adaptive_training_path} and let $(\rho_{t})_{t \in [0:T_{\epsilon}-1]}$ be the set of step size needed to cover the interval $[\lambdamin, \lambdamax]$. Using $\rho_{t} = 1 - \frac{\lambda_{t + 1}}{\lambda_{t}}$
, we have
\begin{equation*}
\log \left(\frac{\lambdamax}{\lambdamin}\right)
= \log\left(\prod_{t=0}^{T_{\epsilon}-1}\frac{\lambda_{t}}{\lambda_{t + 1}}\right)
= \sum_{t=0}^{T_{\epsilon}-1} \log\left(\frac{1}{1-\rho_{t}}\right) \enspace.
\end{equation*}

Hence, denoting $\rho_{\min}(\epsilon) = \min_{t \in [0:T_{\epsilon}-1]} \rho_{t}$, we have
\begin{equation}\label{eq:first_bound_complexity}
T_{\epsilon} \times \rho_{\min}(\epsilon) \leq \log \left(\frac{\lambdamax}{\lambdamin}\right) \enspace.
\end{equation}

We assume that we explore the parameter range in decreasing order.
Also, to simplify the complexity analysis we will suppose that at each step $\lambda_t$, we have solved the optimization problem with two measures of accuracy $\Gap_{t} \leq \epsilon_{c}$ and $\Delta_{t} \leq \epsilon_{c}$ for $\epsilon_{c}<\epsilon$.

\begin{remark}
It is important to note that the usual stop criterion at each step $t$ is $\Gap_{t} \leq \epsilon_{c}$ which is used in our algorithm. The additional constraint $\Delta_{t} \leq \epsilon_{c}$ can be satisfied by any converging optimization solver (\eg coordinate descent) since both $\Gap_{t}$ and $\Delta_{t}$ converge to zero. 
\end{remark}

Then we recall from \Cref{lm:tracking_gap_regularization_parameter} that
$\Gap_{\lambda}(\beta^{(\lambda_{t})}, \theta^{(\lambda_{t})}) \leq Q_{t,\mathcal{V}_{f^*}}(\rho)$ which is smaller than $\epsilon$ as soon as
$\mathcal{V}_{f^*, \zeta_t}(-\zeta_t \cdot \rho) \leq \epsilon - \epsilon_{c}$. Since
$\rho_{\min}(\epsilon) = \min_{t \in [0:T_{\epsilon}-1]} \rho_{t} = \min_{t \in [0:T_{\epsilon}-1]} \sup \{\rho \; : \; Q_{t,\mathcal{V}_{f^*}}(\rho) \leq \epsilon \}$, then
\begin{align}\label{eq:general_lower_bound_rho}
\rho_{\min}(\epsilon) \geq \min_{t \in [0:T_{\epsilon}-1]} \sup \{\rho \; : \; \mathcal{V}_{f^*, \zeta_t}(-\zeta_t \cdot \rho) \leq \epsilon - \epsilon_{c} \} \enspace.
\end{align}
Hence the complexity of the path is bounded as follows.

\begin{repproposition}{prop:approx_path_complexity}[Complexity of the approximation path]
Assuming that $\max(\Gap_t, \Delta_t) \leq \epsilon_{c} < \epsilon$ at each step $t$, there exists $C_{f}(\epsilon_{c})>0$ such that
 \begin{align*}
T_{\epsilon} & \leq \log\left(\frac{\lambdamax}{\lambdamin}\right) \times \frac{C_{f}(\epsilon_{c})}{\mathcal{W}_{f^*}(\epsilon - \epsilon_{c})} \enspace,
\end{align*}
where for all $t>0$, the function $\mathcal{W}_{f^*}$ is defined by
\begin{align*}
\mathcal{W}_{f^*} =
\begin{cases}
\mathcal{V}_{f^*}^{-1}, &  \text{if } f  \text{ is uniformly convex and smooth,}\\
\sqrt{\cdot}, &  \text{if } f \text{ is Generalized Self-Concordant and uniformly-smooth.}
\end{cases}
\end{align*}
Moreover,
\begin{align*}
C_{f}(\epsilon_{c}) =
\begin{cases}
\widetilde R_{0} , &  \text{if } f  \text{ is uniformly convex and smooth,}\\
\sqrt{ \frac{\bar R_0}{w_{\nu}(-B_f)}}, &  \text{if } f \text{ is Generalized Self-Concordant and uniformly-smooth.}
\end{cases}
\end{align*}
where 
\begin{equation*}
\widetilde R_{0} = {\mathcal{V}_{f}^{*}}^{-1}\left(f(X\beta^{(\lambda_{0})}) + \frac{2\epsilon_{c}}{\rho_{0}^{\ell} (\epsilon)}\right), \text{ and } \bar R_0 = f(X \beta^{(\lambda_0)}) + \frac{2\epsilon_{c}}{\rho_0^\ell(\epsilon)} + \epsilon_{c} 
\end{equation*}
and
\begin{equation*}
B_f = \sup\{ d_{\nu}(z) : z \in \bbR^n, \psi(z) \leq \bar R_0,  \norm{z}_* \leq \widetilde R_0 \} \enspace.
\end{equation*}
\end{repproposition}

\begin{proof}
In the uniformly convex case, $\mathcal{V}_{f^*, \zeta_t}(-\zeta_t \cdot \rho) = \mathcal{V}_{f^*}(\rho \norm{\zeta_t}_*)$, hence we can deduce from \Cref{eq:first_bound_complexity} and \eqref{eq:general_lower_bound_rho} that
\begin{align*}
T_{\epsilon} &\leq \frac{1}{\rho_{\min}(\epsilon)} \times \log\left(\frac{\lambdamax}{\lambdamin}\right)
\leq \log\left(\frac{\lambdamax}{\lambdamin}\right) \times \frac{\max_{t \in [0:N_\epsilon-1]} \norm{\zeta_t}_*}{\mathcal{V}_{f^*}^{-1}(\epsilon - \epsilon_{c})} \enspace,
\end{align*}
so we just need to uniformly bound $\norm{\zeta_s}_*$. By construction of the dual point \Cref{eq:def_dual_vector}, we have:
\begin{align}\label{eq:bound_norm_zeta_t}
\norm{\zeta_t}_* = \frac{\lambda_t}{\max(\lambda_{t}, \sigma_{\dom \Omega^*}^{\circ} (X^\top \nabla f(X\beta^{(\lambda_{t})}))} \normin{\nabla f(X\beta^{(\lambda_t)})}_* \leq \normin{\nabla f(X\beta^{(\lambda_t)})}_* \leq \widetilde R_{0} \enspace,
\end{align}
where the last inequality comes from \Cref{lm:bounds_gradient_and_gap} applied at $\lambda_t$ and $\lambda_0$.


For the Generalized Self-Concordant case, we first recall that the functions $w_{\nu}(\cdot)$ in \Cref{eq:def_w_nu} are increasing and $w_{\nu}(0)= {1}/{2}$. Then there exists a positive constant $a_{\nu}$ such that $w_{\nu}(\tau) \leq 1$ for $ \tau \in [0, a_{\nu}]$ (in fact $a_{\nu}=1$ for the logistic regression). Thus, provided $\rho d_\nu(\zeta_t) \leq a_\nu$, we can derive the bound
$\mathcal{V}_{f^*}(-\zeta_t \times \rho) \leq \rho^2 \norm{\zeta_t}_{\zeta_t}^{2}$.

Like in the uniformly convex case, in order to get the complexity of the $\epsilon$-path, we also need a uniform bound on $\norm{\zeta_t}_{\zeta_t}$. By taking \eqref{eq:mu_convexity} on $f^*$ with $x = \zeta_{t}$ and $z = 0$, we obtain
\begin{align*}
w_{\nu}(- d_{\nu}(\zeta_t)) \norm{\zeta_t}_{\zeta_t}^{2} = \mathcal U_{f^*, \zeta_{t}}(-\zeta_{t}) &\leq f^*(0) - f^*(\zeta_{t}) - \langle \nabla f^*(\zeta_{t}), -\zeta_{t}\rangle = f(\nabla f^*(\zeta_{t})) = f(X \beta^{(\lambda_{t})}) - \Delta_{t} \\
& \leq f(X \beta^{(\lambda_{t})}) + \epsilon_{c} \leq f(X \beta^{(\lambda_0)}) + \frac{2\epsilon_{c}}{\rho_0^\ell(\epsilon)} + \epsilon_{c} =: \bar R_0\enspace,
\end{align*}
where we used the inequality case of Fenchel-Young Inequality and the fact that $f^*(0) = -\inf f = 0$. This shows that $\psi(\zeta_t) := \mathcal U_{f^*, \zeta_{t}}(-\zeta_{t}) \leq \bar R_0$. Since the function $\psi$ is continuous, then its level set is closed \ie  $\{z \in \bbR^n: \psi(z) \leq \bar R_0\}$ is closed. Recalling \Cref{eq:bound_norm_zeta_t}, we have $\norm{\zeta_t}_* \leq \widetilde R_0$. Then we have
\begin{equation*}
d_{\nu}(\zeta_t) \leq \sup_{z \in \mathcal{H}_f} d_{\nu}(z) =: B_f \text{ where } \mathcal{H}_f := \{z \in \bbR^n: \psi(z) \leq \bar R_0 \} \cap \{z: \norm{z}_* \leq \widetilde R_0 \} \text{ is a compact set.}
\end{equation*}
Since the function $w_{\nu}(\cdot)$ is increasing, we have $w_{\nu}(-B_f)\norm{\zeta_t}_{\zeta_t}^{2} \leq w_{\nu}(- d_{\nu}(\zeta_t)) \norm{\zeta_t}_{\zeta_t}^{2} \leq \bar R_0$.

This implies that $\norm{\zeta_t}_{\zeta_t}^{2} \leq \frac{\bar R_0}{w_{\nu}(-B_f)}$. Thus, provided $\rho d_\nu(\zeta_t) \leq \bar a_\nu$, we can derive the bound $\mathcal{V}_{f^*}(\zeta_t \times \rho) \leq \rho^2 \norm{\zeta_t}_{\zeta_t}^{2}$.

Whence $\rho_{\min}(\epsilon) \geq \min_{t} \frac{\sqrt{\epsilon - \epsilon_c}}{\norm{\zeta_t}_{\zeta_t}}$. Hence the complexity is bounded as
$T_{\epsilon} \leq \log\left(\frac{\lambdamax}{\lambdamin}\right) \frac{\sqrt{\bar R_0/w_{\nu}(-B_f)}}{\sqrt{\epsilon - \epsilon_{c}}}$.
\end{proof}


\subsection{Proof of the validation error bounds}

\begin{repproposition}{prop:Grid_for_a_prescribed_validation_error}[Grid for a prescribed validation error]
Suppose that we have solved problem~\eqref{eq:primal_problem} for a parameter $\lambda_{t}$ up to accuracy $\Gap_{\lambda_{t}}(\beta^{(\lambda_{t})}, \theta^{(\lambda_{t})}) \leq \xi(\epsilon_v,\mu, {X'})$, then we have $\Delta E_v(\lambda_{t}, \lambda) \leq \epsilon_v$ for all $$\lambda \in \lambda_{t} \times \left[1 - \rho_{t}^{\ell}(\xi(\epsilon_v,\mu, {X'})),\, 1 + \rho_{t}^{r}(\xi(\epsilon_v,\mu, {X'})) \right]\enspace,$$
where $\rho_{t}^{\ell}(\epsilon)$ and $\rho_{t}^{r}(\epsilon)$ for $\epsilon>0$ are defined in \Cref{prop:grid_for_a_prescribed_precision}.
\end{repproposition}

\begin{proof}
We distinguish the two cases of interest: classification and regression.
\begin{itemize}
\item{Case where the loss function is a norm:}

we have
\begin{align*}
\max_{\beta \in \mathcal{B}(\beta^{(\lambda_{t})}, r)}  \mathcal{L}({X'} \beta,  {X'}\beta^{(\lambda_{t})}) = \max_{\beta \in \mathcal{B}(\beta^{(\lambda_{t})}, r)} \normin{{X'}(\beta - \beta^{(\lambda_{t})})} \leq r_{\lambda, \mu} \normin{{X'}}\enspace,
\end{align*}
where $r_{\lambda, \mu}$ is the duality gap safe radius defined in \Cref{eq:gap_radius}. Hence by using the bounds on the duality gap in \Cref{lm:tracking_gap_regularization_parameter}, we can ensure $\Delta E_v(\lambda_{t}, \lambda) \leq \epsilon_v$  for all $\rho= 1 - \lambda/\lambda_t$ such that
$Q_{t,\mathcal{V}_{f^*}}(\rho) \leq \frac{\mu\epsilon_{v}^{2}}{2\normin{{X'}}^2}.$

\item Case where the loss function is the indicator function:

using the inequality
$-2ab \leq (a - b)^2 - b^2$ for $a={x'}_{i}^{\top} \beta$ and $b={x'}_{i}^{\top}\beta^{(\lambda_t)}$ and $|{x'}_{i}^{\top}(\beta - \beta^{(\lambda_t)})| \leq r \normin{{x'}_{i}}$  for all $\beta \in \mathcal{B}(\beta^{(\lambda_t)}, r)$ we have:
$$-2({x'}_{i}^{\top} \beta) ({x'}_{i}^{\top}\beta^{(\lambda_t)}) \leq (r \normin{{x'}_{i}})^2 - ({x'}_{i}^{\top}\beta^{(\lambda_t)})^2\enspace.$$
Hence we obtain the following upper bound
\begin{align*}
\max_{\beta \in \mathcal{B}(\beta^{(\lambda_t)}, r)} \mathcal{L}({X'} \beta,  {X'}\beta^{(\lambda_t)})
&= \max_{\beta \in \mathcal{B}(\beta^{(\lambda_t)}, r)} \frac{1}{n}\sum_{i=1}^{n} \1_{({x'}_{i}^{\top}\beta^{(\lambda_t)}) ({x'}_{i}^{\top}\beta) < 0} \leq \frac{1}{n} \sum_{i=1}^{n} \1_{|{x'}_{i}^{\top}\beta^{(\lambda_t)}|\leq r \normin{{x'}_{i}}}\enspace.
\end{align*}

By using the bound on the duality gap, we can ensure $\Delta E_v(\lambda_0, \lambda) \leq \epsilon_v$  for all $\lambda$ such that:
\begin{equation*}
\#\left\{ i \in [n]: \xi_i := \frac{\mu}{2} \left(\frac{{x'}_{i}^{\top}\beta^{(\lambda_t)}}{\norm{x_i'}}\right)^2 \leq Q_{t, \mathcal{V}_{f^*}}(1 - \lambda/\lambda_t) \right\} \leq \lfloor n \epsilon_v \rfloor \enspace.
\end{equation*}
By denoting $\left(\xi_{(i)}\right)_{i \in [n]}$ the (increasing) ordered sequence, we need the inequality to be true for at most the $\lfloor n \epsilon_v \rfloor$ first values \ie we choose $\lambda$ such that:
\begin{equation*}
Q_{t, \mathcal{V}_{f^*}}\left(1 - \frac{\lambda}{\lambda_t} \right) < \frac{\mu}{2} \left(\frac{{x'}_{(\lfloor n \epsilon_v \rfloor + 1)}^{\top}\beta^{(\lambda_t)}}{\norm{{x'}_{(\lfloor n \epsilon_v \rfloor + 1)}}}\right)^2 \enspace.
\end{equation*}
\end{itemize}
\end{proof}

\subsection{Additional Experiments}

We add an additional experiments in large scale data with $n=16087$ observations and $p=1668737$ features.

\begin{figure}[ht]
\centering
\includegraphics[width=0.8\linewidth, keepaspectratio]{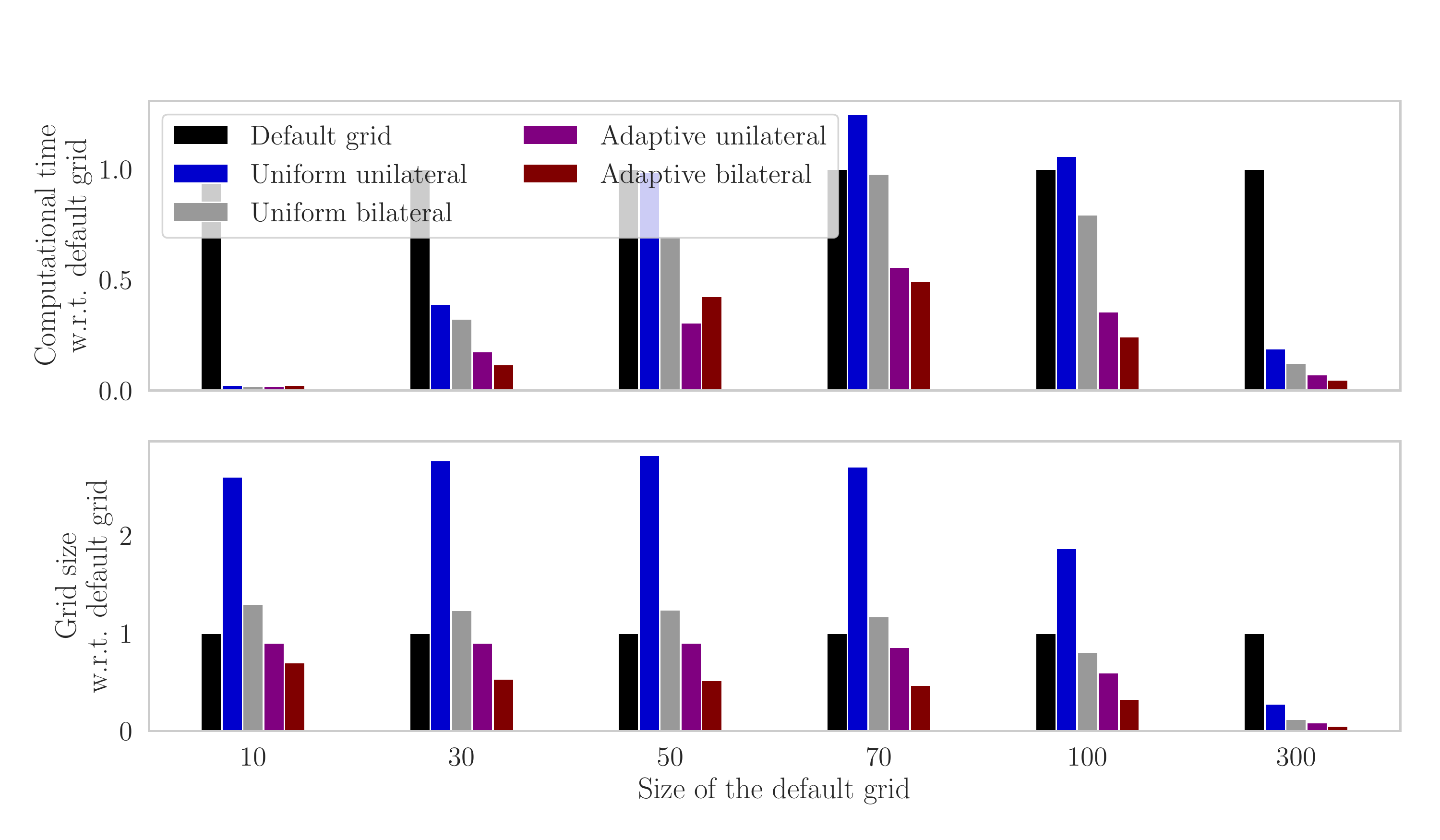}
\caption{
$\ell_1$ least-squares regression on the financial dataset \texttt{E2006-log1p} (available in libsvm) with $n=16087$ observations and $p=1668737$ features. We have used the same (vanilla) coordinate descent optimization solver with warm start between parameters for all grids. Note that a smaller grid do not imply faster computation, as the interplay with the warm-start can be intricate in our sequential approach.} \label{fig:bench_Lasso_finance_grid_n_lambdas_tau10}
\end{figure}

\end{document}